\definecolor{cpurple}{rgb}{0.6,0,0.6}
\newtheorem{theorem}{Theorem}[]
\newtheorem{question}{Question}[]
\newtheorem{lemma}{Lemma}[]
\newtheorem{informal theorem}[theorem]{Theorem (informal statement)}
\newtheorem{proposition}[theorem]{Proposition}
\newtheorem{claim}[theorem]{Claim}
\newtheorem{fact}[theorem]{Fact}
\newtheorem{remark}[theorem]{Remark}
\newtheorem{definition}{Definition}
\newtheorem{assumption}{Assumption}[]
\newcommand{\lp}{\left}
\newcommand{\rp}{\right}
\newcommand\norm[1]{\left\| #1 \right\|}
\newcommand\snorm[2]{\left\| #2 \right\|_{#1}}
\newcommand\abs[1]{\left| #1 \right|}
\renewcommand\vec[1]{\boldsymbol{#1}}
\newcommand\matr[1]{\boldsymbol{#1}}
\DeclareMathOperator*{\E}{\mathbf{E}}
\newcommand{\proj}{\mathrm{proj}}
\newcommand{\normal}{\mathcal{N}}
\newcommand{\mcal}{\mathcal}
\DeclareMathOperator*{\argmin}{argmin}
\newcommand{\bx}{\mathbf{x}}
\newcommand{\bw}{\mathbf{w}}
\newcommand{\R}{\mathbb{R}}
\newcommand{\eps}{\epsilon}
\newcommand{\dtv}{d_{\mathrm TV}}
\newcommand{\poly}{\mathrm{poly}}
\newcommand{\D}{\mathcal{D}}
\newcommand{\Ind}{\mathds{1}}
\newcommand{\1}{\Ind}
\newcommand{\wt}{\widetilde}
\newcommand{\XX}{\vec{x}\vec{x}^T}
\newcommand{\SIGMA}{\matr{\Sigma_*}}
\newcommand{\GEN}{\mathcal{G}}
\newcommand{\DIS}{\mathcal{D}}
\newcommand{\LG}{\mathcal{L}_{\mathcal{G}}}
\newcommand{\LD}{\mathcal{L}_{\mathcal{D}}}
\newcommand{\VV}{\mathcal{V}}
\newcommand{\invfdist}{ \snorm{F}{ \matr{I} - \matr{W}^T \SIGMA^{-1} \matr{W} } }
\newcommand{\wsqr}{ \lp(  \matr{W}\matr{W}^T  \rp) }
\newcommand \invtrans[1]{ ( #1 ^{-1})^T }
\newcommand\flatten[2]{ \begin{bmatrix}\lp(#1\rp)^{\flat}\\#2\\\end{bmatrix}  }
\newcommand{\ReLU}{\text{ReLU}}
\newcommand{\Var}{\mathrm{Var}}
\title{Convergence and Sample Complexity of SGD in GANs}
\author{
Vasilis Kontonis\\
University of Wisconsin-Madison\\
{\tt kontonis@wisc.edu }\\
\and
Sihan Liu\\
University of Wisconsin-Madison\\
{\tt sliu556@wisc.edu}\\
\and
Christos Tzamos\\
University of Wisconsin-Madison\\
{\tt tzamos@wisc.edu}
}
\date{}
\begin{document}
\onecolumn
\maketitle
\begin{abstract}

We provide theoretical convergence guarantees on training Generative Adversarial Networks (GANs) via SGD.  We consider learning a target distribution modeled by a 1-layer Generator network with a non-linear activation function $\phi(\cdot)$ parametrized by a $d \times d$ weight matrix $\vec W_*$, i.e., $f_*(\vec x) = \phi(\vec W_* \vec x)$.

Our main result is that by training the Generator together with a Discriminator according to the Stochastic Gradient Descent-Ascent iteration proposed by Goodfellow et al. yields a Generator distribution
that approaches the target distribution of $f_*$. Specifically, we can learn the target distribution within total-variation distance $\eps$ using $\tilde O(d^2/\eps^2)$ samples which is (near-)information theoretically optimal.

Our results apply to a broad class of non-linear activation functions $\phi$, including ReLUs and is enabled by a connection with truncated statistics and an appropriate design of the Discriminator network.
Our approach relies on a bilevel optimization framework  to show that vanilla SGDA works.

 \end{abstract}

\section{Introduction}
\subsection{Background and Motivation}
\label{sec:introduction}
Since the influential work of \cite{Goodfellow}, Generative Adversarial
Networks (GANs) have seen enormous success in diverse
applications, see, for example,
\cite{ACB17, radmc15, ArjovskyB17, JWSHZ20, ZXY18, HT2017}.
Despite their success in practice, very little is currently known about their
theoretical guarantees in terms of generalization properties and the number
of samples they require for training.  In comparison, supervised models based
on neural networks for classification, are much better understood through the
theory of VC dimension and Rademacher complexity.
One of the main reasons for the limited understanding of GANs is the fact that
their training dynamics are quite complex as they correspond to a min-max
game between two neural networks, the Generator and Discriminator.  Analyzing
such min-max games even in simple settings can be quite challenging,
\cite{Luke2017unroll, KodaliAHK17} as the
natural methods commonly used for training based on stochastic gradient
descent ascent (SGDA) fail to converge.

While there are countless versions of GANs proposed in the literature that are
often times domain specific, we focus on the original GAN formulation proposed
in \cite{Goodfellow}.
In particular, we consider the following min-max game
\begin{equation}
  \label{eq:main_min_max_game}
\min_{\mcal G} \max_{\mcal D}
\E_{\vec x \sim \mcal T} \log \mcal{D}(\vec x)
+\E_{\vec x \sim \mcal G} \log (1-\mcal{D}(\vec x))\, ,
\end{equation}
where $\mcal T$ is the true and unknown distribution, $\mcal G$ is the
Generator distribution, and $\mcal D$ is the Discriminator.  The above game
is a \emph{zero-sum} game between the Discriminator $\mcal{D}$ and the
Generator $\mcal{G}$.
Our goal is to provide  theoretical  convergence  guarantees on training such
Generative Adversarial Networks (GANs) via SGD.

\subsection{Our Contribution}

In this work, we take a learning theoretic approach to understand the convergence
and sample complexity of GANs for learning distributions corresponding to
one layer neural nets. More formally we consider the following model.

\paragraph{Model.}
We assume that the underlying target distribution
has the following form:

Samples are generated by drawing a random variable $\vec z$ from a standard $d$-dimensional Gaussian
distribution which are then transformed through a one-layer neural network.
That is,
for some unknown $d\times d$ parameter matrix $\matr W_*$ and a known function $\phi: \R^d \to \R^d$, the output is equal to $\phi(\matr W_* \vec z)$.
We denote by $p(\matr W_*, \phi)$ the distribution of the random variable
$\phi(\matr W_* \vec z)$, where $\vec z$ is drawn from a standard normal distribution.

This class of distributions corresponds
to one layer neural networks with standard separable activation functions, like ReLU and sigmoid applied on each coordinate of the output. Moreover, it also captures much more complex non-linearities as it
allows for arbitrary functions $\R^d \to \R^d$ that are given as input.
For instance, it can capture multi-layer neural networks as long as
only the parameters $\matr W_*$ of the first layer are unknown
while all others are fixed in advance.

Without any additional assumptions on the transformation function $\phi$ the problem
is information theoretically intractable. We identify a natural property
of the transformation $\phi$ that makes it possible to learn the underlying
distribution using a GAN architecture without restricting the expressiveness
of our model.

\emph{We require the transformation} $\phi : \R^d \mapsto \R^d$ \emph{to be
invertible with non-trivial probability over the samples of the true distribution $p(\matr W_*, \phi)$.}

We note that, commonly used activation functions in neural
networks are either fully invertible (e.g. sigmoid)
or partially invertible like ReLU that is invertible when the coordinates of $\vec x$ are positive.

More precisely we define the following class of transformed distributions.
\begin{definition}[Partially Invertible Network]
\label{def:invertible_network}
A pair $(\matr W_*, \phi)$ composed of a weight matrix $\matr W_*$ and an activation $\phi$ is denoted as a one layer partially invertible network if there exists some set $T\subseteq \R^d$
such that $\phi$ is invertible on $T$ and $\normal(T; \matr W_*) \geq \alpha> 0$ \footnote{We use $\normal(T; \matr W_*)$ to denote the mass of the set $T$ under the normal distribution with covariance $\matr W_* \matr W_*^T$. See Section \ref{sec:notation} for details. }
,where $\alpha = \Omega(1) $ is some absolute constant.
\end{definition}

Our main result is that Generative Adversarial Networks with Partially Invertible Generator Networks
converges to the true distribution
when trained by stochastic gradient descent ascent. In particular, simultaneous training of the Generator with an appropriately designed Discriminator succeeds in learning the target distribution in polynomially many iterations and near-optimal sample complexity.

\begin{theorem} \label{infthm:main}
Consider samples generated by a Partially Invertible Network $(\matr W_*, \phi)$ for some unknown $\matr W_*$ with bounded distance to $\matr I$:
$$
\max \lp( \|\matr W_*\matr W_*^T - \matr I\|_F, \| ( \matr W_*\matr W_*^T )^{-1} - \matr I\|_F \rp) < c.
$$
Then, Nested Stochastic Gradient Descent-Ascent (Algorithm \ref{alg:simplified})
uses $\wt{O}_c(d^2/\eps^2)$ samples from
$p{(\matr W_*, \phi)}$,  performs $\wt O_c(d^4/\eps^6)$ gradient updates,
and converges to
a matrix $\wt{\matr W}$ where $\dtv(p{(\matr W_*, \phi)},$ $p{(\wt {\matr W}, \phi)} ) \leq \eps$
with probability $99\%$.
\end{theorem}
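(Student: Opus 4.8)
The plan is to reduce GAN training to estimating the covariance of a \emph{truncated} Gaussian, and then to argue that Nested SGDA is, up to controllable errors, a projected stochastic gradient method on a hidden-convex surrogate objective. Since $\phi$ and the invertibility set $T$ are known, I would design the Discriminator to first test membership of its input $\vec x$ in $\phi(T)$ and, on that event, apply $\phi^{-1}$ to obtain the pre-activation $\vec y$ and output $\sigma\!\lp(\langle \matr A,\vec y\vec y^T\rangle + b\rp)$ for trainable parameters $(\matr A,b)$, while off $\phi(T)$ it returns a fixed uninformative value. Conditioned on the invertible region, a real sample $\phi(\matr W_*\vec z)$ and a Generator sample $\phi(\matr W\vec z)$ pull back under $\phi^{-1}$ to the Gaussians $\mathcal N(\vec 0,\matr W_*\matr W_*^T)$ and $\mathcal N(\vec 0,\matr W\matr W^T)$ conditioned on $T$, whose likelihood ratio is the exponential of a quadratic in $\vec y$. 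Hence the chosen class contains the Bayes-optimal discriminator between these two truncated Gaussians, so the inner $\max$ of \eqref{eq:main_min_max_game} equals, up to additive constants and the contribution of the non-invertible region, a Jensen--Shannon-type divergence between them, which vanishes exactly when $\matr W\matr W^T=\matr W_*\matr W_*^T$.

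I would then analyze the two nested loops separately. For a fixed $\matr W$, the inner objective is concave in $(\matr A,b)$ (an expectation of $\log\sigma$ and $\log(1-\sigma)$ of an affine function of the parameters), so projected SGD on the Discriminator converges to an $\eta$-approximate best response in $\poly(1/\eta)$ steps; here the parameter domain and all truncated moments stay bounded and non-degenerate because the hypothesis $\max\!\lp(\|\matr W\matr W^T-\matr I\|_F,\|(\matr W\matr W^T)^{-1}-\matr I\|_F\rp)<c$ is enforced by projecting the Generator iterates onto the Frobenius ball around $\matr I$ at which we initialize. This furnishes the outer player with a stochastic estimate, biased only by $O(\eta)$, of the gradient of the value function $L(\matr W)\eqdef\max_{\matr A,b}V(\matr W;\matr A,b)$, where $V$ denotes the objective of \eqref{eq:main_min_max_game}. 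To control the outer iteration I would reparametrize the Generator through $\matr\Sigma=\matr W\matr W^T$ and, following truncated-Gaussian MLE, through the precision matrix $\matr\Sigma^{-1}$; in these coordinates $L$ inherits the convexity of the truncated log-partition function, its unique minimizer sits at $\matr\Sigma=\matr W_*\matr W_*^T$, and strong convexity on the ball (with curvature depending only on $c$ and $\alpha$) turns the outer SGD into a standard strongly convex stochastic optimization, giving convergence of $\matr W\matr W^T$ to $\matr W_*\matr W_*^T$ in Frobenius distance after $\wt O_c(\poly(d,1/\eps))$ gradient updates.

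For the sample complexity, note that $\matr W$ enters the population objective only through the Generator distribution, which we can sample exactly; so it suffices to control the deviation of $\E_{\mathcal T}[\log D(\vec x)]$ uniformly over the Discriminator class, an $O(d^2)$-parameter, bounded, Lipschitz family with Rademacher complexity $O(\sqrt{d^2/n})$, which is $\eps$-small once $n=\wt O_c(d^2/\eps^2)$. Plugging this into the two-loop analysis absorbs the statistical error into the optimization error. To conclude, the data-processing inequality gives $\dtv\lp(p(\matr W,\phi),p(\matr W_*,\phi)\rp)\le\dtv\lp(\mathcal N(\vec 0,\matr W\matr W^T),\mathcal N(\vec 0,\matr W_*\matr W_*^T)\rp)$, and the right-hand side is $O_c\lp(\|\matr W\matr W^T-\matr W_*\matr W_*^T\|_F\rp)$ since $\matr W_*\matr W_*^T\approx\matr I$, so driving the Frobenius error below $\Theta(\eps/c)$ suffices.

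The hard part is the combined second-and-third step: showing that the implicitly defined value function $L$ is genuinely (re-parametrized) convex with $c$-dependent curvature, that finite-accuracy inner solves perturb the outer gradient by a provably small amount so Nested SGDA does not drift out of the good region, and that the contribution of the non-invertible region — where the Discriminator is uninformative — neither moves the minimizer off $\matr W_*$ nor spoils the convexity of $L$. Because the truncation set $T$ is arbitrary subject only to $\mathcal N(T;\matr W_*)\ge\alpha$, every moment, concavity, and curvature estimate must hold uniformly over all such $T$; quantifying this uniformity, together with the resulting constants that have a strong (exponential-type) dependence on $c$, is what produces the $O_c(\cdot)$ factors in the sample and iteration bounds.
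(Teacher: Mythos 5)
There is a genuine gap at the heart of your outer-loop analysis. You propose to treat the value function $L(\matr W)=\max_{\matr A,b}V(\matr W;\matr A,b)$ as (strongly) convex after reparametrizing through $\matr\Sigma^{-1}$, "inheriting the convexity of the truncated log-partition function," and then to run standard strongly convex SGD. But the objective here is a Jensen--Shannon-type divergence, not the truncated negative log-likelihood; convexity of the log-partition function gives convexity of $\mathrm{KL}(p_*\,\|\,p_{\matr\Sigma})$ in the natural parameters, and this does not transfer to the JS-type quantity, which involves the mixture $(p_*+p_{\matr\Sigma})/2$. More decisively, the theorem is about Algorithm \ref{alg:simplified}, which performs gradient updates in $\matr W$-coordinates, and in those coordinates the value function cannot be convex: every $\matr W_*\matr O$ with $\matr O$ orthogonal is a global minimizer, so the minimizer set is a non-convex manifold and strong convexity (which would force uniqueness) is impossible. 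Asserting hidden convexity in a different coordinate system does not analyze the algorithm that is actually run. The paper's resolution — and the key lemma missing from your proposal — is to accept non-convexity and prove that any $\eps$-approximate \emph{first-order stationary point} of $\VV(\matr W)$ inside the projection set is already $O_c(\eps)$-close in total variation (Lemma \ref{lem:standard_optimality}). That proof lower-bounds the gradient norm by $\|\SIGMA^{-1}\matr W-(\matr W^{-1})^T\|_F$ times a quantity controlled via Carbery--Wright anti-concentration (Lemma \ref{thm:cabrey}), handles boundary stationary points of $\mathcal Q_G$ separately, and is then combined with a biased \emph{non-convex} PSGD lemma (Lemma \ref{lem:biased_strongly_convex}) rather than a strongly convex one.

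Two smaller issues: first, plain concavity of the inner problem, which is all you establish, does not yield the stated $\wt O_c(d^2/\eps^2)$ sample complexity — the paper needs strong concavity of $\LD$ on the projection set $\mathcal Q_D$ (Lemma \ref{lem:d-convexity}, itself proved via anti-concentration) to get the quadratic dependence on $1/\eps$; mere concavity degrades the rate. Second, your uniform-convergence/Rademacher argument for the real-sample term is a plausible alternative bookkeeping device, but as stated it does not explain how the same $k$ real samples are reused across all $\wt O_c(d^2/\eps^4)$ outer iterations without a union bound blowing up the sample count; the paper instead routes this through the high-probability strongly concave SGD analysis of the inner loop. You correctly identify the convexity of $L$ as "the hard part," but as argued above that step is not merely hard — in the coordinates the algorithm uses, it is false, and a different structural argument is required.
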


Theorem~\ref{infthm:main} shows that Nested Stochastic Gradient
Descent Ascent recovers
a parameter matrix $\wt{\matr{W}}$ such that the Generator distribution
$p(\wt{\matr W}, \phi)$ is close in total variation distance to the underlying distribution $p(\matr W_*, \phi)$. We note that as this is a gradient descent method the number of iterations naturally grows larger with the distance of the target matrix $\matr W_*$ to the initial weights $\matr W$ which are assumed to be $\matr I$\footnote{The dependence on the distance to $\matr I$ can be eliminated by an additional preconditioning step using a few samples from the target distribution (See Remark~\ref{remark:preconditioning}).}.

We also remark that the sample-complexity of our result is information theoretically optimal up to
polylogarithmic factors as even the case where the transformation $\phi$ is the identity transformation (i.e.,
$\phi(\bx) = \bx$) that corresponds to learning the covariance matrix of a Gaussian distribution is well known to require $\Omega(d^2/\eps^2)$
samples in order to learn (the covariance of) a Gaussian within total variation distance $\eps$.

A key challenge in showing Theorem~\ref{infthm:main} is to construct an appropriate Discriminator network that is powerful enough to distinguish between the true distribution and fake samples while, at the same time, simple enough to have few parameters and be efficiently trainable.

For any fixed Generator distribution there always exists a Discriminator that optimally distinguishes samples from the Generator and the target distribution (see Proposition 1
of \cite{Goodfellow}). Unfortunately, this Discriminator may be arbitrarily complex.
Even for a single layer neural network with ReLU activations, it requires treating samples differently according to their non-zero patterns, which is challenging to express directly as a simple low-depth neural network.

Instead, we focus on simple Discriminator networks that only discriminate samples that fall in the invertible region $T$ of the
transformation $\phi$.
In particular, our Discriminator first checks whether the received sample
belongs to (the image of) set $T$, then performs the inverse transformation
$\phi^{-1}$ followed by a quadratic layer and a sigmoid activation, see Figure~\ref{fig:dis_architecture}.
For the full GAN architecture see Figure~\ref{fig:full}.
We train both Generator and Discriminator with Nested Stochastic Gradient
Descent Ascent, that is we perform multiple iterations for the Discriminator per Generator
update.

Our choice of Discriminator allows us to use techniques and ideas from truncated
statistics, an area of statistics that deals with estimating the parameters of a distribution given only
conditional samples from a subset of the distribution. The Discriminator
essentially performs such a truncation operation to the data, see Figure~\ref{fig:dis_architecture}, as a result of the non-invertible function $\phi$.

Learning from high-dimensional truncated datasets is a notoriously challenging task and a computationally efficient algorithm for Gaussian data was only recently obtained
in \cite{daskalakis2018efficient} through maximum-likelihood estimation.
As a byproduct of our analysis, we show that the min-max GAN iteration is an alternative computationally efficient and near-sample optimal approach for the task of learning a truncated Gaussian considered in \cite{daskalakis2018efficient}.

 \subsection{Related Work}
Our work is inspired and motivated mainly by the success of Generative
Adversarial Neural nets in practice and aims to provide provable guarantees
for their convergence and sample complexity.  There are other works in the computer science and optimization communities that try to theoretically
analyze the behavior of GANs.  One such work related to ours is
\cite{feizi2017understanding}.  The authors consider the problem
of learning a Gaussian distribution using a Wassertstein GAN which corresponds to the special case of our model without a non-linearity, i.e., $\phi(\vec x) = \vec x$.
Another related work is \cite{gemp2018global}. The authors analyze a similar setting where the Generator is
again linear (learning a Gaussian distribution) and the Discriminator is
quadratic.  They train their W-GAN using a custom method that they denote as ``Crossing-the-curl''.
Interestingly, they show that simultaneous alternating SGDA diverges in their setting.
We view this as strong evidence that \emph{Nested} SGDA is indeed required
in order to have convergence for GANs. A third work in this direction is \cite{mescheder2018training}, where the authors study local convergence of different GAN architectures. In particular, the results show that GAN training diverges when the underlying distribution is not absolutely continuous.

Prior work also studied how well GANs generalize - does minimization of GAN's objective function offer any guarantee on the statistical distance between the Generator distribution and the target distribution? One such work is \cite{liu2017approximation}, where the authors address the problems by
giving a new notion of statistical distance (called \emph{adversarial divergence}) that captures a wide range of GAN objectives frequently used in practice. They show that for objectives falling in the category,
successfully optimizing the objectives implies weak convergence of the output distribution to the target.
Instead of treating it in a black-box manner, in our work we focus on the optimization process of specific GAN
instances and show that the output distribution converges to the target.

A more recent work related to ours is \cite{lei2019sgd}.
The authors prove that Wasserstein GANs can be trained
via SGD to learn one layer neural networks.  They assume
that the activation function has a separable form, i.e.,
$\phi(\vec x) = (q(\vec x_1), \ldots, q(\vec x_d))$, for some univariate function $q:\R \to \R$ that
has a simple form, e.g. is a Lipschitz and odd function.
Our result instead focuses on the standard GAN and shows that SGD converges for a much broader class of activation functions including non-invertible ones like ReLUs.

The interplay between min-max dynamics and GAN dynamics is already a very
active field of research.  An interesting recent work, that focuses on the
negative side of min-max games is \cite{flokas2019poincar}. The authors there
show that for a general class of non-convex, non-concave zero sum games
Stochastic Gradient Descent Ascent may not converge to fixed points that are meaningful within game theoretical settings.

On the positive side, in \cite{Arora2017ganeq}, the authors studied the
existence of pure equilibrium under various min-max game formulations of the
Generator/Discriminator training dynamics.  In \cite{rafique2018non}, a class of
non-convex concave optimization problem is studied, where the minimizer's
objective function is weakly convex and the maximizer's objective is strongly
concave.  Moreover, in \cite{lin2019gradient},  the performance of the Gradient
Ascent Descent (GDA) under similar setting is studied.  In \cite{NIPS2017},
Nagarajan et al. studied GAN's stability around the local Nash equilibrium of the
min-max game.  Finally, in \cite{DP18, DP19} the authors use optimism to show
convergence of gradient based methods in min-max optimization. \section{Preliminaries and Notation} \label{sec:notation}
We use small bold letters $\vec{x}$ to refer to real vectors in $\R^d$ and
capital bold letters $\matr{A}$ to refer to matrices in $\R^{d \times \ell}$.
 We define $\1\{\vec x \in S\}=\1_S(\vec x)$ to be the $0 - 1$ indicator of a set.
 The \emph{Frobenius norm} of a matrix $\matr{A}$ is defined as
$\norm{\matr{A}}_F = \sqrt{\sum_{ij} \matr A_{ij}^2}$. \\
For distributions $Q, P$ we denote by $\dtv(Q, P) = (1/2) \int |Q(x) - P(x)| d x$ their statistical
or total variation distance.
Let $\normal(\matr{\matr{W}})$ be the normal distribution with mean
$\vec 0 \in \R^d$ and covariance matrix $\matr{W} \matr{W}^T \in \R^{d \times d}$, with the following
probability density function
\begin{align} \label{eq:normalDensityFunction}
  \normal(\vec{x}; \matr{W}) =
  \frac{1}{\sqrt{\det \lp(2 \pi \wsqr \rp)}}
  \exp \left( - \frac{1}{2} \vec{x}^T \wsqr^{-1}
  \vec{x} \right).
\end{align}
Also, let $\normal(S; \matr W)$ denote the \emph{probability
mass of a measurable set $S$} under this Gaussian measure. We shall also denote
by $\normal$ the standard Gaussian; whether it is single or multidimensional
will be clear from the context. \\
In a multi-variable function, we often want to focus on just a subset of variables. We then use semi-colon to separate the primary variables from the secondary. Usually, the secondary variables are treated as constants which parametrize the function.
 \section{Technical Overview}
\subsection{Discriminator Design}
We study GAN Dynamics for learning one layer non-linear Neural Networks.  In particular we consider the following GAN architecture where the Generator is a one layer neural net with a fully connected linear layer parametrized by some matrix $\matr W \in \R^{d \times d}$ followed by some general non-linear activation function $\phi : \R^d \mapsto \R^d$.
Furthermore, we will use $\matr W_* \in \R^{d \times d}$ to denote the parameters of the target Generator network.
If we denote the density functions of Generator and the target distributions as $p_g$ and $p_d$ respectively, it is known \cite{Goodfellow} that
the optimal Discriminator for this problem is $D_*(\vec x; p_g) = \frac{p_d(\vec x)}{p_d(\vec x) + p_g(\vec x)}$.
Denote $\SIGMA = \matr W_* \matr W_*^T$. When the activation function $\phi$ is invertible over its  whole domain $\R^d$, this optimal Discriminator takes the following form
\begin{align*}
D_*(\vec x; \matr W) &= \frac{ \normal( \phi^{-1}(\vec x); \SIGMA^{1/2} ) } { \normal(\phi^{-1}(\vec x); \SIGMA^{1/2} ) + \normal( \phi^{-1}(\vec x); \matr W ) } \\
&= \sigma \lp( \phi^{-1}( \vec x)^T \matr A_* \phi^{-1}( \vec x) + b_* \rp) \, ,
\end{align*}
where $\matr A_* = \frac{1}{2} ( ( \matr W \matr W^T )^{-1} - \SIGMA^{-1})$, $b_* = \log \det (\matr W \SIGMA^{-1/2}) \, , $ $\sigma$ is the sigmoid function $\sigma(x) = \frac{1}{1 + \exp(-x)}$.

Unfortunately, many popular activation functions are not invertible over their whole domain but usually only on a subset of $\R^d$.  For example
the well-known ReLU activation i.e., $\text{ReLU}(\vec x) = \1 \{ \vec x \geq 0 \} \vec x$ is invertible only when every coordinate of $\vec x$
is positive.  In order to
capture these important activation functions, we relax the invertibility assumption to hold only on a subset of $\R^d$ (see Definition \ref{def:invertible_network}).
Recall that we denote by $p{(\matr W, \phi)}$ the output distribution of the network.
In this general setting, the optimal Discriminator may not have a simple form.
Even for ReLU (which is simply the identity function restricted on the set $T$)
the optimal Discriminator is a complicated piecewise function consisting of $2^d$ different cases (these cases correspond
to all possible subsets of coordinates that may be negative):
\begin{align*}
D(\vec x) &= \frac{\int_{ \vec z \in \mathcal Y_{\vec x}}  \normal(\vec z; \matr W_*) d \vec z}{\int_{ \vec z \in \mathcal Y_{\vec x}}  \normal(\vec z; \matr W) d \vec z + \int_{ \vec z \in \mathcal Y_{\vec x}}  \normal(\vec z; \matr W_*) d \vec z} \, , \\
& \text{where} \, \mathcal Y_{\vec x} := \{ \vec z \, \text{ such that } \, \vec x_i = \ReLU (\vec z_i) \}
\end{align*}
The optimal Discriminator is therefore a very complicated neural network and implementing such a
network is infeasible even in rather low-dimensional scenarios.
We take advantage of the fact that the activation function is known and invertible on some subset $T \in \R^d$
and design the following Discriminator architecture that balances simplicity and expressiveness.
We denote by $S$ the image of $T$ under $\phi$, i.e., $S = \phi(T)$ and define

\begin{figure} [H]
\centering
\begin{subfigure}{.5\textwidth}
  \centering
  \includegraphics[width=.5\textwidth]{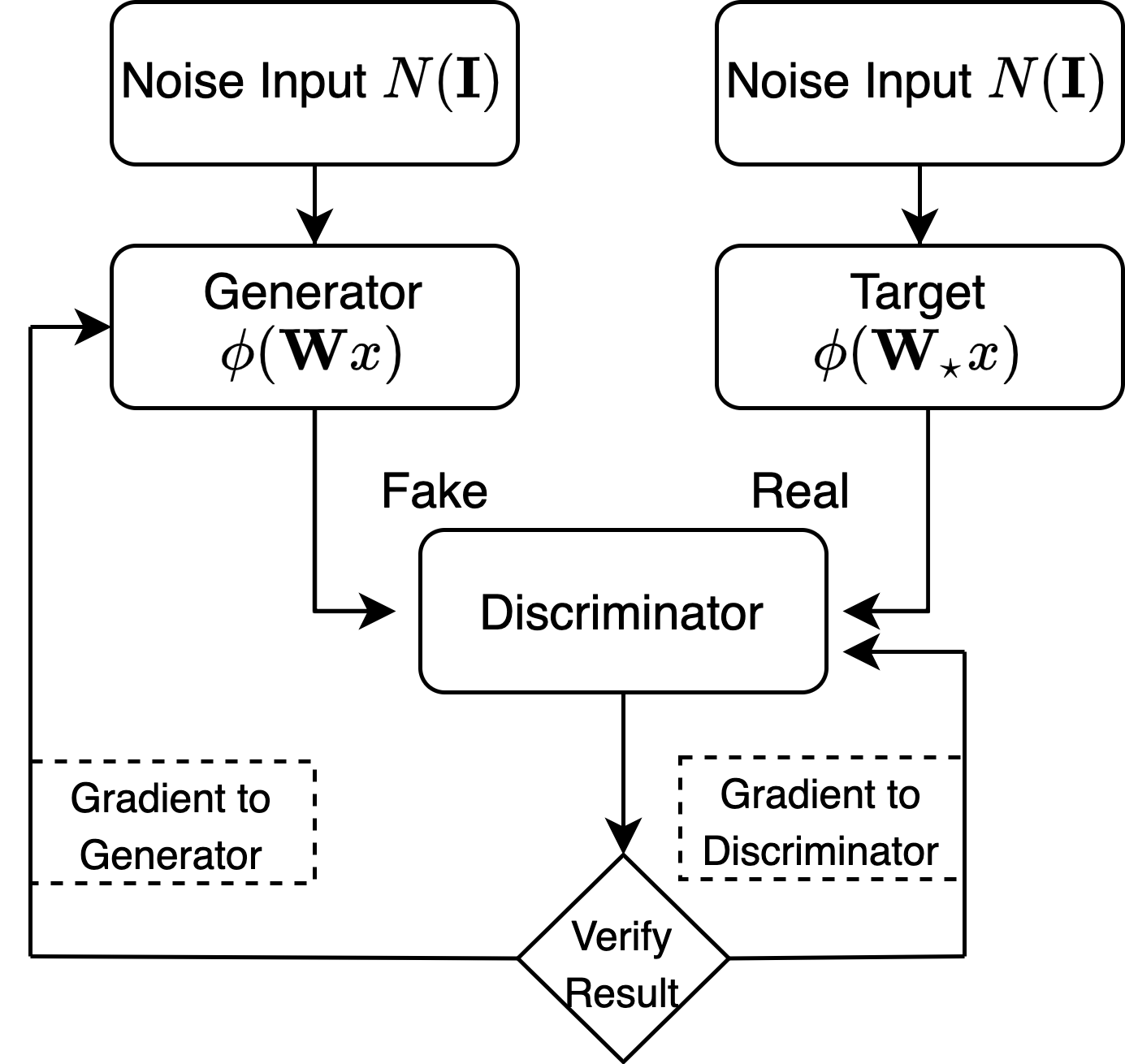}
  \caption{GAN Architecture Overview.} \label{fig:full}
  \label{fig:sub1}
\end{subfigure}\begin{subfigure}{.5\textwidth}
  \centering
  \includegraphics[width=.5\textwidth]{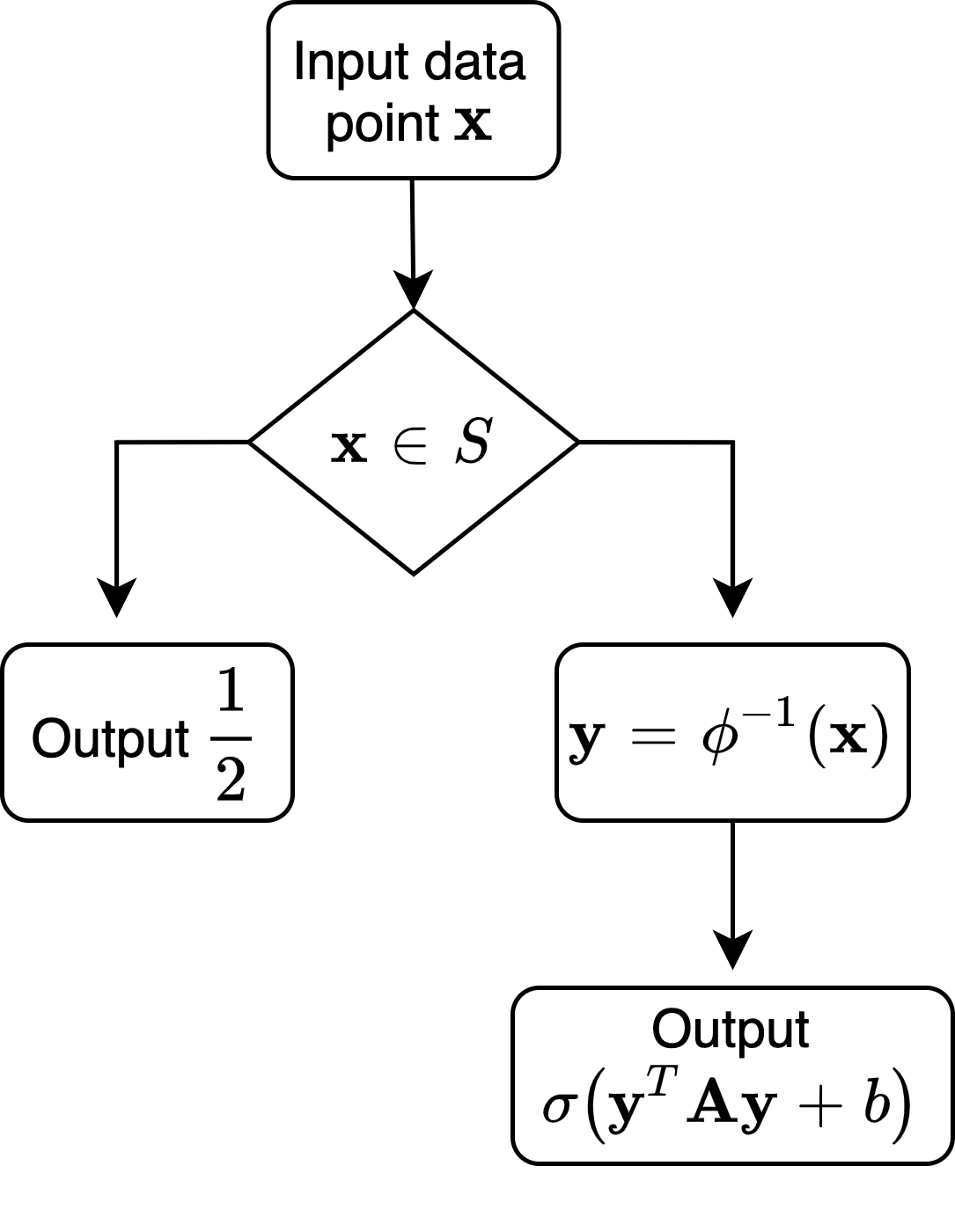}
  \caption{Discriminator Architecture. The set $S$ corresponds to the image of $T$ under $\phi$.} \label{fig:dis_architecture}
\end{subfigure}
\caption{Our GAN Architecture.}
\label{fig:test}
\end{figure}

\begin{align*}
D(\vec x; &\matr A, b) =
\1_S(\vec x) \sigma \lp( \phi^{-1}( \vec x)^T \matr A \phi^{-1}( \vec x) + b\rp)  \hspace{-.6mm} + \hspace{-.6mm}\frac{\1_{S^c}(\vec x)}{2} \, ,
\end{align*}
where $\sigma$ is the sigmoid function (see Figure ~\ref{fig:dis_architecture}). Then, the one layer Generator is paired with the above Discriminator to form the full architecture shown in Figure ~\ref{fig:full}. We then show that Nested Stochastic Gradient Descent Ascent (Algorithm ~\ref{alg:simplified}) on this pair of neural nets provably converges, enabling Generator to recover the target distribution.

\begin{algorithm} [H]
  \caption{Nested Stochastic Gradient Descent Ascent on Standard GAN }
  \label{alg:simplified}
 \hspace*{\algorithmicindent}
\begin{algorithmic}[1]
    \State Set $k = \wt{O}(d^2/\eps^2)$
    \State Initialize $\matr W = \matr I$.
    \State Sample $\vec x^{(1)}, \ldots, \vec x^{(k)}$ from $p{(\matr W_*, \phi)}$
\For{$i = 1$ to $\wt{O}(d^2/\eps^4)$}\vspace{10pt}

    \State  \(\triangleright\) { Discriminator Training }
    \State Randomly Initialize $\matr A$ and $b$.
    \State Sample $\vec y^{(1)}, \ldots, \vec y^{(k)}$ from $p{(\matr W, \phi)}$
    \For{$j = 1$ to $k$}
    \State Update $\matr A$ and $b$ with stochastic gradient
    \vspace{-2mm}
    \begin{align*}
        \nabla_{\matr A, b} \big[ \log( D(\vec x^{(j)};\matr A, b) ) + \log( 1 - D(\vec y^{(j)};\matr A, b) )  \big]
    \end{align*}
    \vspace{-6mm}
    \EndFor \vspace{10pt}
    \State  \(\triangleright\) { Generator Training }
\State Sample $\vec z \sim \normal(\matr I)$
    \State Update $\matr W$ with stochastic gradient
    \vspace{-2mm}
    \begin{align} \label{eq:abbreviated_generator_gradients}
        \nabla_{\matr W} \log( D(\phi(\matr W \vec z);\matr A, b)
    \end{align}
    \vspace{-6mm}
    \EndFor
  \end{algorithmic}
\end{algorithm}
We show that this Projected Nested Stochastic Gradient Descent-Ascent (NSGDA) algorithm converges.

\subsection{Roadmap of the Proof}
The zero sum game used in the GAN formulation corresponds to the min-max optimization problem with loss function
\begin{align} \label{eq:common_loss}
\mathcal{L}(\matr W, \matr A, b) =
&\E_{\vec x \sim p{(\matr W_*, \phi)} }
\log( D(\vec x;\matr A, b) )
+ \E_{\vec x \sim p{(\matr W, \phi)}}
\log( 1 - D(\vec x;\matr A, b) )
\end{align}
We use Nested Stochastic Gradient Descent-Ascent to solve the problem
$
\min_{\matr W} \max_{\matr A, b} \mathcal{L}(\matr W, \matr A, b).
$
The Nested SGDA solves this problem by trying to fully optimize the inner maximization optimization for a given Generator
parameter $\matr W$.
In other words, in the inner loop of Algorithm \ref{alg:simplified},
the Discriminator player is maximizing over $\vec A, b$ the objective function
$ \LD (\matr A, b ;\matr W) = \mathcal{L}(\matr W, \matr A, b)$; we stress that for $\LD$
the weight matrix $\matr W$ is a fixed parameter.

We first show that by doing Stochastic Gradient Ascend we can train Discriminator's parameters
 to being almost optimal. Using the structure of Discriminator we are able to show that $\LD$ is \emph{strongly concave} with respect to the
 Discriminator parameters $\matr A, b$, see Lemma \ref{lem:d-convexity}.  Since the Discriminator
 is using samples from the underlying model $p{(\matr W_*, \phi)}$, from a learning theoretic point of view, we want
 to make its optimization as efficient as possible in order to get tight sample complexity results.
 Strong concavity is crucial in that sense: we are able to depend optimally
 not only on the dimension $d$ but also on $\eps$; simple concavity would give us a substantially sub-optimal dependence on $\eps$.
The full discussion and detailed
versions of the corresponding lemmas can be found in Subsection~\ref{sub:training_discriminator}.

Showing convergence of Generator is more involved.  With Discriminator's parameters  $\matr A, b$ fixed,
in expectation, Generator in Algorithm \ref{alg:simplified} receives training gradients from the objective function
\begin{align}
    \LG (\matr W;\matr A, b) = \E_{\vec x \sim \normal(\matr I)}\log( 1 - D(\phi(\matr W \vec x);\matr A, b) ) \, .
\end{align}
By Danskin's Theorem \cite{danskin2012theory}, when $\matr A, b$  are fully optimized ($\matr A = \matr A_*, b= b_*$), the training gradients in expectation will be equal to the gradient of the function
$
\VV(\matr{W})
=
\max_{\matr A, b} \LD(\matr A, b; \matr W) \nonumber \, ,
$
which is known as the \emph{Virtual Training Criteria} of Generator in the work of \cite{Goodfellow}.
In contrast with the Discriminator objective function, minimizing $\VV(\matr {W})$ is a \emph{non-convex} minimization problem.
In fact, any factorization of the covariance matrix $\SIGMA = \bar{\matr W} \bar{\matr W}^T$ corresponds to
a minimizer of this problem: the Gaussian distribution is invariant under orthogonal transformations, and therefore
these matrices are  indeed indistinguishable since they all produce the same distribution.
Our main structural result shows that \emph{finding approximate stationary points} of the virtual training criteria $\VV(\matr {W})$
is sufficient to recover a matrix $\matr W$ whose corresponding distribution $p{(\matr W, \phi)}$
is close in total variation distance to the true underlying distribution  $p{(\matr W^{*}, \phi)}$.
The proof of this statement relies on Gaussian anti-concentration of polynomials, see Lemma~\ref{thm:cabrey}.
At a high level, we first argue that the norm of gradient of the Generator is proportional to the probability
that a specific quadratic form takes large values with respect to the standard normal distribution.
Then using anti-concentration we show that this probability cannot be too small unless
the distributions $p(\matr W, \phi)$ and $p(\matr W^*, \phi)$ are close.
For the formal statement of the above discussion see Lemma~\ref{lem:standard_optimality}.

A final complication that we face is that with finitely many samples,
it is impossible to recover the optimal Discriminator parameters $\matr A_*, b_*$ exactly.
This introduces \emph{biases} in the gradients used to train Generator.
To overcome the difficulty, we use a Biased PSGD lemma, which guarantees convergence of SGD to first-order stationary points of the underlying
objective function even when some bias are added to the gradient oracle used (See Lemma \ref{lem:biased_strongly_convex}).
In particular, the framework requires the bias to be bounded. We control the bias by showing that the  training gradients $\nabla_{\matr W} \LG(\matr W, \matr A, b)$ are Lipchitz continuous
with respect to the Discriminator parameters $\matr A, b$ (see Lemma \ref{lem:standard_cross_lip}).
Thus, as long as we train the Discriminator enough to ensure that $\matr A, b$ are close to the optimal $\matr A_*,  b_*$,
the bias $\snorm{}{\nabla_{\matr W} \LG(\matr W; \matr A, b) - \nabla_{\matr W} \VV(\matr W)}$ will be small.

\section{Convergence of GANs}
In this section, we prove our main result and show that the GAN iteration converges and learns the one-layer Generator network $(\matr W_*, \phi)$. Denote $\SIGMA = \matr W_* \matr W_*^T$. Without loss of generality, we can assume the underlying target network has the form $(\SIGMA^{1/2}, \phi)$ as
we have already seen that this does not affect the corresponding distribution.
The \emph{Generator} is a one-layer neural network of the form $(\matr W, \phi)$. The Generator will be paired with the
\emph{Discriminator} that tries to discern samples from $p{(\SIGMA^{1/2}}, \phi)$ and $p{(\matr W, \phi)}$.

If the Generator's parameter $\matr W$ is initialized very far from the target distribution, most of its samples will fall outside the truncation set $S$ of the Discriminator, leading to ``vanishing gradients". We thus make a closeness assumption that our initialization is close to the true covariance matrix. Assuming without loss of generality that we initialize the generator with $\matr W = \matr I$, we require that:

\begin{assumption}[Initialization] \label{initialization}
We assume that the matrix $\SIGMA$ satisfies
$$
\max(\|\SIGMA^{-1} - \matr I\|_F, \snorm{F}{\SIGMA - \matr I}) \leq c\,.
$$
\end{assumption}

\begin{remark}\label{remark:preconditioning}
As shown in Corollary 3 of \cite{Daskalakis2018EfficientSI}, we can initialize the algorithm with the
empirical covariance matrix computed using $O_{\alpha}(d^2)$ samples from the truncated normal distribution
$\normal(\SIGMA^{1/2}, T)$ and then transform the space so that $\matr W^{(0)} \rightarrow \matr I$.
Then constant $c$ in Assumption \ref{initialization} depends only on the mass of  the set $\alpha$, i.e.,
$c = \poly(1/\alpha) = O(1)$ under our assumption that $\alpha = \Omega(1)$.
\end{remark}

\subsection{Projection Set} \label{sec:projection_set}

In order to avoid moving towards regions where the gradients vanish
we will use the following convex projection set
for the Generator parameters $\matr W$.
\begin{align} \label{eq:proj_g}
& \mathcal{Q}_G =\bigg\{
\snorm{F}{\matr{W} - \matr I} \leq \poly(c)\, ,\poly(1/c) \leq  \vec x^T \matr{W} \vec x \leq \poly(c), \text{ for all } \snorm{2}{\vec x} = 1
 \bigg\} \, ,
\end{align}
The important property of the above projection set is
that the set $T$ (the set where $\phi$ is invertible)  has non-trivial
mass under any matrix $\matr W \in \mcal{Q}_{G}$.
Interpreting the set $T$ as a truncation set and using tools developed in  \cite{Daskalakis2018EfficientSI},
we can show that the set $T$
always has non-trivial mass with respect to the Gaussian distribution $\normal(\matr W)$.
This is a crucial property because our Discriminator relies on seeing samples that
fall inside the set $S$ (recall that $S$ is the image of $T$ under $\phi$).  In
order for the Discriminator to produce non-trivial gradients we need to ensure
that the mass of the invertible set $T$ is not-trivial with respect to the
parameter of the Generator.
\begin{lemma}[Non-trivial mass] \label{lem:non_trivial_mass}
Under Assumption \ref{initialization},
if we have $\matr W \in \mathcal Q_G$, it holds that $\normal(T;\matr W) =\Omega_{c}(1)$.
\end{lemma}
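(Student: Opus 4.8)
The plan is to transfer the known mass bound $\normal(T;\SIGMA^{1/2})\ge\alpha$ from Definition~\ref{def:invertible_network} (after the WLOG reduction $\matr W_*\to\SIGMA^{1/2}$) to $\normal(T;\matr W)$ by a change of measure. Write $L(\vec x)=\normal(\vec x;\matr W)/\normal(\vec x;\SIGMA^{1/2})$ for the likelihood ratio, which is positive everywhere, so that $\normal(T;\matr W)=\E_{\vec x\sim\normal(\SIGMA^{1/2})}[\1_T(\vec x)\,L(\vec x)]$. It therefore suffices to produce a threshold $\beta=\beta(c,\alpha)>0$ with $\pr_{\vec x\sim\normal(\SIGMA^{1/2})}[L(\vec x)<\beta]\le\alpha/2$; then $\normal(T;\matr W)\ge\beta\,\normal(T\cap\{L\ge\beta\};\SIGMA^{1/2})\ge\beta(\alpha-\alpha/2)=\beta\alpha/2=\Omega_c(1)$, using $\alpha=\Omega(1)$. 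So the whole lemma reduces to a one–sided tail bound on $-\log L$ under $\normal(\SIGMA^{1/2})$.

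To get that tail bound, compute $-2\log L(\vec x)=\vec x^T\big((\matr W\matr W^T)^{-1}-\SIGMA^{-1}\big)\vec x-\log\det\!\big(\SIGMA(\matr W\matr W^T)^{-1}\big)$. Substituting $\vec x=\SIGMA^{1/2}\vec y$ with $\vec y\sim\normal(\matr I)$ and setting $M=\SIGMA^{1/2}(\matr W\matr W^T)^{-1}\SIGMA^{1/2}$, this equals $\vec y^T(M-\matr I)\vec y-\log\det M$; diagonalizing $M$ with eigenvalues $\lambda_i=1+\delta_i$ rewrites it as $\sum_i\delta_i(g_i^2-1)+\sum_i\big(\delta_i-\log(1+\delta_i)\big)$ with $g_i$ i.i.d.\ standard Gaussians. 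The first term is a centered quadratic form of variance $2\sum_i\delta_i^2=2\|M-\matr I\|_F^2$, and the second is a nonnegative constant. The crucial point is that this constant is $O(\poly(c))$, \emph{dimension-free}: for indices with $\lambda_i\ge1/2$ one uses $0\le\delta-\log(1+\delta)\le\delta^2$ (so their total is $\le\|M-\matr I\|_F^2$), while the number of indices with $\lambda_i<1/2$ is at most $4\|M-\matr I\|_F^2$, each contributing $O(\poly(c))$ since all $\lambda_i\in[\poly(1/c),\poly(c)]$.

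It remains to verify $\|M-\matr I\|_F=\poly(c)$ and $\lambda_i\in[\poly(1/c),\poly(c)]$, independently of $d$. This combines Assumption~\ref{initialization} (forcing the eigenvalues of $\SIGMA$ into $[\tfrac{1}{1+c},1+c]$) with $\matr W\in\mathcal Q_G$: for unit $\vec v$, Cauchy--Schwarz gives $\vec v^T\matr W\vec v\le\|\matr W\vec v\|$, so $\sigma_{\min}(\matr W)\ge\poly(1/c)$, and with $\|\matr W-\matr I\|_F\le\poly(c)$ also $\sigma_{\max}(\matr W)\le\poly(c)$; hence $\matr W\matr W^T$ has eigenvalues in $[\poly(1/c),\poly(c)]$ and $\|\matr W\matr W^T-\SIGMA\|_F\le\|\matr W\matr W^T-\matr I\|_F+\|\SIGMA-\matr I\|_F\le\poly(c)$, which yields both $\lambda_i\in[\poly(1/c),\poly(c)]$ and $\|M-\matr I\|_F=\|\SIGMA^{1/2}\big((\matr W\matr W^T)^{-1}-\SIGMA^{-1}\big)\SIGMA^{1/2}\|_F\le\poly(c)$. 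Finally, with $A=(\matr W\matr W^T)^{-1}-\SIGMA^{-1}$ the event $\{L(\vec x)<\beta\}$ equals $\{\vec x^T A\vec x>2\log(1/\beta)+\log\det M\}$, i.e.\ $\{\sum_i\delta_i(g_i^2-1)>2\log(1/\beta)-\sum_i(\delta_i-\log(1+\delta_i))\}$; choosing $\beta=e^{-\poly(c)}$ with the exponent large enough makes the right-hand side at least some $\tau=\poly(c)$ with $2\|M-\matr I\|_F^2/\tau^2\le\alpha/2$, and Chebyshev gives $\pr_{\vec y}[\sum_i\delta_i(g_i^2-1)>\tau]\le\alpha/2$, closing the argument.

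The main obstacle is exactly this dimension-freeness. The naive route — restrict to a Euclidean ball of radius $R$ carrying $\ge\alpha$ of the mass of $\normal(\SIGMA^{1/2})$, which forces $R=\Theta(\sqrt d)$, and bound $L$ pointwise there — gives only $L\ge e^{-\Theta(d)}$ and hence a useless $e^{-\Theta(d)}$ lower bound. The fix is to observe that the log-determinant term $\log\det M$ cancels the mean $\tr(M-\matr I)$ of the quadratic form $\vec y^T(M-\matr I)\vec y$, so that only the fluctuation scale $\|M-\matr I\|_F=O(\poly(c))$ survives; equivalently this is the elementary bound $\delta-\log(1+\delta)=O(\delta^2)$ applied eigenvalue-by-eigenvalue (with the $O(\poly(c))$ eigenvalues below $1/2$ handled separately using $\lambda_i\ge\poly(1/c)$). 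Alternatively, one may invoke the corresponding Gaussian mass-transfer lemma of \cite{Daskalakis2018EfficientSI} as a black box.
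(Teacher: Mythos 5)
Your proof is correct, and it reaches the conclusion by a route that is formally the same reduction as the paper's but with the key step carried out from scratch rather than cited. The paper's proof is essentially two lines: it invokes Lemma~\ref{lem:generator_projection_property} to get $\snorm{F}{\SIGMA^{1/2}(\matr W\matr W^T)^{-1}\SIGMA^{1/2}-\matr I}\le\poly(c)$, and then applies Lemma~7 of \cite{Daskalakis2018EfficientSI} (restated as Lemma~\ref{lem:normal_mass}) as a black box to transfer the mass bound $\normal(T;\SIGMA^{1/2})\ge\alpha$ to $\normal(T;\matr W)\ge\Omega_c(1)$ --- exactly the ``alternative'' you mention in your last sentence. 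What you do differently is prove that mass-transfer lemma yourself: the likelihood-ratio decomposition $-2\log L=\sum_i\delta_i(g_i^2-1)+\sum_i(\delta_i-\log(1+\delta_i))$, the observation that the log-determinant cancels the trace so that only the $\snorm{F}{M-\matr I}^2$-scale fluctuation survives (via $\delta-\log(1+\delta)\le\delta^2$ for $\delta\ge-1/2$, with the at most $4\snorm{F}{M-\matr I}^2$ small eigenvalues handled separately), and a Chebyshev tail bound. The preconditions you verify --- eigenvalues of $\matr W\matr W^T$ in $[\poly(1/c),\poly(c)]$ and $\snorm{F}{M-\matr I}\le\poly(c)$ --- are precisely what the paper establishes in Lemma~\ref{lem:generator_projection_property} before feeding them into the cited lemma. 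Your version buys self-containedness and makes the dimension-free cancellation explicit (which is the real content of the cited result, and your remark about why the naive pointwise bound on a radius-$\Theta(\sqrt d)$ ball fails is exactly the right diagnosis); the paper's version buys brevity at the cost of an external dependency.
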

As we discussed previously, to obtain the optimal sample complexity we require the loss function
of the Discriminator to be strongly concave. Unfortunately, strong concavity does not hold globally for the objective function $\mathcal L$.
Hence, we  shall define the following projection set for Discriminator's parameters that ensures (see Lemma~\ref{lem:d-convexity}) this desired property.
\begin{align} \label{eq:proj_d}
\mathcal{Q}_D = \bigg\{ &  \snorm{F}{ \matr A  } \leq \poly(c) ,  \abs{b} \leq \poly(c) \bigg\}
\,.
\end{align}
We remark both sets $\mathcal Q_G$ and $\mathcal Q_D$ are convex and their projections can be efficiently computed,
see, for example, Algorithm 3 in \cite{Daskalakis2018EfficientSI}.

\subsection{Training the Discriminator}
\label{sub:training_discriminator}
In this section, we show convergence property of Discriminator training given in the following proposition.
\begin{proposition} [Convergence of Discriminator Training] \label{prop:discriminator_convergence}
Fix $\matr W \in \mathcal Q_G$ and assume that Assumption \ref{initialization} is true.
Setting the inner loop for $M_{\D} = \wt O(d^2/ \eps^2 \log^2(1/\delta))$. Then, with probability at least $1 - \delta$, it holds when Algorithm \ref{alg:simplified} exits the inner loop, the parameters $\matr A, b$ satisfy
$
\snorm{F}{\matr A - \matr A_*} + \abs{b - b_*} \leq \eps \, ,
$
where $\matr A_* = \frac{1}{2} ( ( \matr W \matr W^T )^{-1} - \SIGMA^{-1} ), b_* = \log \det (\matr W \SIGMA^{-1/2})$.
\end{proposition}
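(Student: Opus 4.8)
The plan is to read the inner loop of Algorithm~\ref{alg:simplified} as projected stochastic gradient ascent for the concave function $\LD(\cdot\,;\matr W)$ over the convex body $\mathcal{Q}_D$, and then invoke the textbook convergence rate for stochastic optimization of a strongly concave objective with bounded-variance gradient estimates. Three facts are needed. First, $\LD(\cdot\,;\matr W)$ is $\lambda$-strongly concave on $\mathcal{Q}_D$ with $\lambda=\Omega_c(1)$, which is exactly Lemma~\ref{lem:d-convexity}. Second, the target pair $(\matr A_*,b_*)$ is the maximizer of $\LD$ and lies in $\mathcal{Q}_D$: membership holds because for $\matr W\in\mathcal{Q}_G$ all singular values of $\matr W$ lie in $[\poly(1/c),\poly(c)]$ and, by Assumption~\ref{initialization}, the same is true for $\SIGMA^{1/2}$, so $\snorm{F}{\matr A_*}$ and $\abs{b_*}$ are bounded by a fixed polynomial in $c$ which we build into the radii defining $\mathcal{Q}_D$; optimality is the first-order condition $\nabla_{\matr A,b}\LD(\matr A_*,b_*)=0$, a direct computation from the expression of the optimal discriminator on $S$ (the same algebra as in \cite{Goodfellow}, with the change-of-variables Jacobian of $\phi|_T$ cancelling in the density ratio), after which the real-sample and fake-sample gradient terms cancel point-wise on $S$.

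The third and main fact is a second-moment bound on the stochastic gradient. Since $D(\vec x;\matr A,b)=1/2$ for $\vec x\notin S$, the per-step gradient vanishes outside $S$; on $S$ it equals $(1-\sigma(g(\vec x)))\,(\phi^{-1}(\vec x)\phi^{-1}(\vec x)^T,\,1)$ for the real sample and $-\sigma(g(\vec y))\,(\phi^{-1}(\vec y)\phi^{-1}(\vec y)^T,\,1)$ for the fake sample, where $g(\vec x)=\phi^{-1}(\vec x)^T\matr A\phi^{-1}(\vec x)+b$; since $\sigma\in[0,1]$, the squared norm of either term is at most $\snorm{2}{\phi^{-1}(\vec x)}^4+1$. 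Writing a real sample as $\vec x=\phi(\SIGMA^{1/2}\vec z)$ with $\vec z\sim\normal$, on the event $\vec x\in S$ we have $\phi^{-1}(\vec x)=\SIGMA^{1/2}\vec z$, hence
\[
\E\!\left[\snorm{2}{\phi^{-1}(\vec x)}^4\,\1_S(\vec x)\right]\;\le\;\E_{\vec z\sim\normal}\!\left[\snorm{2}{\SIGMA^{1/2}\vec z}^4\right]\;=\;O\!\left(\poly(c)\,d^2\right),
\]
using $\snorm{2}{\SIGMA}=\poly(c)$ from Assumption~\ref{initialization}; the identical bound holds for a fake sample $\vec y\sim p(\matr W,\phi)$ because $\matr W\in\mathcal{Q}_G$. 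For a fixed $\matr W$ the samples feeding the inner loop are i.i.d., so these are unbiased gradient estimates with conditional mean $\nabla\LD$ and second moment $\rho^2=O(\poly(c)\,d^2)$.

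Putting the pieces together: projected SGA with step sizes $\eta_t\asymp 1/(\lambda t)$ on $\mathcal{Q}_D$ — which contains the interior maximizer and on which $\LD$ is $\lambda$-strongly concave — satisfies, by the standard high-probability guarantee for strongly concave stochastic optimization (the zero-bias specialization of Lemma~\ref{lem:biased_strongly_convex}), $\norm{(\matr A,b)-(\matr A_*,b_*)}^2=O\!\left(\rho^2\log^2(1/\delta)/(\lambda^2 M_{\D})\right)$ with probability $1-\delta$. With $\rho^2=O_c(d^2)$, $\lambda=\Omega_c(1)$, and $M_{\D}=\wt O(d^2\log^2(1/\delta)/\eps^2)$, the right-hand side is $O(\eps^2)$; since $\snorm{F}{\matr A-\matr A_*}+\abs{b-b_*}\le\sqrt2\,\norm{(\matr A,b)-(\matr A_*,b_*)}$, rescaling $\eps$ by a constant gives the claim. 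The crux — and the only place genuine work is hidden — is this second-moment bound together with the constancy of $\lambda$ in Lemma~\ref{lem:d-convexity}: both rest on moment and anti-concentration control for the truncated Gaussians $\normal(\SIGMA^{1/2})$ and $\normal(\matr W)$ restricted to $T$, which uses the non-trivial-mass estimate of Lemma~\ref{lem:non_trivial_mass} and the truncated-statistics machinery of \cite{Daskalakis2018EfficientSI}.
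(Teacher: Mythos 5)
Your overall architecture matches the paper's: strong concavity from Lemma~\ref{lem:d-convexity}, membership of $(\matr A_*,b_*)$ in $\mathcal Q_D$ (the paper's Lemma~\ref{lem:generator_projection_property}), an $O_c(d^2)$ second-moment bound on the stochastic gradient, and a strongly-concave SGA rate of $O_c(d^2\log^2(1/\delta)/M_\D)$ converted to a parameter-distance bound. The gap is in the concentration step. You invoke ``the standard high-probability guarantee for strongly concave stochastic optimization (the zero-bias specialization of Lemma~\ref{lem:biased_strongly_convex})'' to get a bound holding with probability $1-\delta$ and only $\log^2(1/\delta)$ dependence, starting from nothing more than a bounded second moment $\rho^2$. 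Two problems. First, Lemma~\ref{lem:biased_strongly_convex} is the wrong tool: it is a nonconvex PSGD lemma whose conclusion is an approximate stationary point with fixed $99\%$ probability obtained via Markov's inequality over a random stopping time; it has no $\delta$ parameter, no strongly-concave rate, and cannot be ``specialized'' to yield the statement you need. Second, and more substantively, a second-moment bound alone does not give a high-probability rate with polylogarithmic dependence on $1/\delta$; Markov would cost a factor $1/\delta$, which is fatal here because the main theorem later sets $\delta = 1/(100 M_{\mathcal G}) = \poly(\eps/d)$ and union-bounds over all outer iterations — a $1/\delta$ dependence would destroy the $\wt O(d^2/\eps^2)$ sample complexity.

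This is exactly why the paper does not stop at a second-moment bound. It adapts the high-probability PSGD result of Harvey et al.\ (Lemma~\ref{lem:high_prob_sgd}) to sub-exponential noise, which requires verifying two tail conditions: (a) a high-probability bound $\sum_t\snorm{2}{\vec g^{(t)}}^2 = \wt O_c(d^2 M_\D\log^2(1/\delta))$, proved via Gaussian hypercontractivity for the degree-$4$ polynomial $\snorm{F}{\vec x\vec x^T}^2$ (Lemma~\ref{lem:bounded_sum_square}); and (b) a sub-exponential MGF bound $\E[\exp(\lambda\langle\vec z^{(t)},\vec\theta_*-\vec\theta^{(t)}\rangle)]\le\exp(\lambda^2 O_c(d)\snorm{2}{\vec\theta_*-\vec\theta^{(t)}}^2)$ for $\lambda$ below a constant, proved via chi-squared MGF estimates (Lemma~\ref{lem:sub-exponential}), fed into a generalized Freedman inequality. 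Your proposal identifies the right quantities to control but is missing this tail-control layer entirely; without it the step from ``$\rho^2=O_c(d^2)$'' to ``with probability $1-\delta$ ... $O(\rho^2\log^2(1/\delta)/(\lambda^2 M_\D))$'' does not follow.
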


The main step of the proof is the following Lemma which shows that the loss function $\mathcal L$ as specified in Equation \eqref{eq:common_loss} is strongly concave with respect to $\matr A, b$. Its proof relies on the anti-concentration of polynomials under the Gaussian measure (Lemma \ref{thm:cabrey}) and can be found in the Appendix  \ref{sec:discriminator_convexity}.
\begin{lemma} [Strong concavity for Discriminator] \label{lem:d-convexity}
Fix the Target Network $(\matr W_*, \phi)$ and Generator Network $(\matr W, \phi)$,
it holds that $\LD(\matr{A},b; \matr{W})$ is at least $\Omega_{c}(1)$-strongly concave
when $\matr{A}, b \in \mathcal{Q}_D$ described in Equation \eqref{eq:proj_d}, $\matr W \in \mathcal Q_G$ described in Equation \eqref{eq:proj_g} and Assumption \ref{initialization} is satisfied.
\end{lemma}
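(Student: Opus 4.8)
The plan is to prove strong concavity by lower-bounding the minimum eigenvalue of the negative Hessian $-\nabla^2_{(\matr A,b)}\LD(\matr A,b;\matr W)$ uniformly over $(\matr A,b)\in\mathcal Q_D$, with Gaussian anti-concentration of the quadratic form inside the sigmoid as the key analytic input. First I would put $\LD$ in a form that exposes its structure. On $S^c$ the Discriminator equals $1/2$, so $\log D$ and $\log(1-D)$ are constant there and do not affect the Hessian. On $S$, setting $\vec u=\phi^{-1}(\vec x)$ and $q_{\matr A,b}(\vec u)=\vec u^T\matr A\vec u+b$ and pushing the two expectations in the objective \eqref{eq:common_loss} through $\phi^{-1}$ (which maps the part of the target distribution on $S$ to $\normal(\SIGMA^{1/2})$ restricted to $T$, and the part of the Generator distribution on $S$ to $\normal(\matr W)$ restricted to $T$), we get, up to an additive constant,
\begin{align*}
\LD(\matr A,b;\matr W)=\E_{\vec u\sim\normal(\SIGMA^{1/2})}\big[\1_T(\vec u)\log\sigma(q_{\matr A,b}(\vec u))\big]+\E_{\vec u\sim\normal(\matr W)}\big[\1_T(\vec u)\log\sigma(-q_{\matr A,b}(\vec u))\big]\,.
\end{align*}
Since $q_{\matr A,b}(\vec u)=\langle\flatten{\matr A}{b},\Psi(\vec u)\rangle$ with $\Psi(\vec u)=\flatten{\vec u\vec u^T}{1}$ is linear in $(\matr A,b)$ (taking $\matr A$ symmetric without loss of generality), and both $t\mapsto\log\sigma(t)$ and $t\mapsto\log\sigma(-t)$ have second derivative $-\sigma(t)\sigma(-t)<0$, the chain rule yields
\begin{align*}
-\nabla^2_{(\matr A,b)}\LD=\E_{\vec u\sim\normal(\SIGMA^{1/2})}\big[\1_T(\vec u)\sigma(q)\sigma(-q)\Psi(\vec u)\Psi(\vec u)^T\big]+\E_{\vec u\sim\normal(\matr W)}\big[\1_T(\vec u)\sigma(q)\sigma(-q)\Psi(\vec u)\Psi(\vec u)^T\big]\,,
\end{align*}
where $q=q_{\matr A,b}(\vec u)$. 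Both terms are PSD, so it suffices to lower-bound $\lambda_{\min}$ of the first one.

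Next I would localize to a ball. Since $(\matr A,b)\in\mathcal Q_D$ has $\snorm{F}{\matr A},\abs b\le\poly(c)$, on $B=\{\vec u:\snorm{2}{\vec u}\le\poly(c)\}$ with a large enough polynomial radius we have $\abs{q_{\matr A,b}(\vec u)}\le\poly(c)$, hence $\sigma(q)\sigma(-q)\ge e^{-\poly(c)}=\Omega_c(1)$. Thus
\begin{align*}
-\nabla^2_{(\matr A,b)}\LD\ \succeq\ \Omega_c(1)\cdot\E_{\vec u\sim\normal(\SIGMA^{1/2})}\big[\1_{T\cap B}(\vec u)\Psi(\vec u)\Psi(\vec u)^T\big]\,,
\end{align*}
and it remains to show that for every unit vector $\flatten{\matr A'}{b'}$,
\begin{align*}
\E_{\vec u\sim\normal(\SIGMA^{1/2})}\big[\1_{T\cap B}(\vec u)(\vec u^T\matr A'\vec u+b')^2\big]\ \ge\ \Omega_c(1)\,.
\end{align*}

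This is the core estimate. Write $q=q_{\matr A',b'}$ and $\|q\|_2^2=\E_{\vec u\sim\normal(\SIGMA^{1/2})}[q^2]$. I would proceed in three steps. (i) $\|q\|_2^2=\Omega_c(1)$: by a change of variables $\vec u=\SIGMA^{1/2}\vec g$ together with linear independence of the degree-$\le 2$ monomials, using that Assumption~\ref{initialization} ($\snorm{F}{\SIGMA-\matr I},\snorm{F}{\SIGMA^{-1}-\matr I}\le c$) keeps $\SIGMA$ well conditioned. (ii) Restricting to $B$ loses little: $q$ is a degree-$2$ polynomial with $\poly(c)$-bounded coefficients, so $\E[q^4]\le\poly(c)$ and $\E[\1_{B^c}q^2]\le(\E[q^4]\,\pr[\vec u\notin B])^{1/2}$, which is at most $\tfrac12\|q\|_2^2$ once $B$ has a large enough $\poly(c)$ radius. (iii) The truncation by $T$ cannot annihilate the mass: all we know is $\normal(T;\SIGMA^{1/2})=\normal(T;\matr W_*)\ge\alpha=\Omega(1)$, so the worst case is $T=\{\abs q\le\tau\}$ with Gaussian mass $\alpha$; by the Carbery--Wright anti-concentration inequality for Gaussian polynomials (Lemma~\ref{thm:cabrey}), $\pr_{\vec u}[\abs q\le t\|q\|_2]\le O(t^{1/2})$ for the degree-$2$ polynomial $q$, which forces $\tau\ge\Omega(\alpha^2)\|q\|_2$ and that the band $\{\Omega(\alpha^2)\|q\|_2\le\abs q\le\tau\}$ still carries Gaussian mass $\Omega(\alpha)$, whence $\E[\1_T q^2]\ge\Omega(\alpha^5)\|q\|_2^2$. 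Combining (i)--(iii) and using $\alpha=\Omega(1)$ gives $\E[\1_{T\cap B}q^2]\ge\E[\1_T q^2]-\E[\1_{B^c}q^2]\ge\Omega_c(1)$, which closes the reduction and hence proves the lemma.

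I expect step (iii) to be the main obstacle: the set $T$ is only promised to have Gaussian mass at least $\alpha$ and is otherwise adversarial, so a crude union bound is useless, and the argument succeeds precisely because a degree-$2$ polynomial cannot concentrate its $L^2$-mass on a set of measure $1-\alpha$ — which is exactly the content of Carbery--Wright. The remaining bookkeeping — making the constants in (i) and (ii) uniform over the unit sphere of parameter directions and over the base point $(\matr A,b)\in\mathcal Q_D$, and checking the change-of-variables and pushforward-through-$\phi^{-1}$ identities carefully — is routine given the explicit $\poly(c)$ bounds.
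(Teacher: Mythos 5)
Your proposal is correct and follows essentially the same route as the paper: both reduce strong concavity to lower-bounding the quadratic form of the (PSD) negative Hessian in an arbitrary unit direction, and both do so by intersecting $T$ with a set where the sigmoid second-derivative factor is $\Omega_c(1)$ (you via a $\poly(c)$-radius ball, the paper via Markov's inequality on $|h(\vec x;\matr A,b)|$) and a set where the squared quadratic form exceeds a threshold $\gamma=\Omega_c(1)$ obtained from Carbery--Wright, then applying a union bound against $\normal(T;\SIGMA^{1/2})\ge\alpha$. The only cosmetic difference is that the paper applies Lemma~\ref{thm:cabrey} to the degree-$4$ polynomial $\vec z^T q(\cdot)\vec z=(\vec u^T\matr A'\vec u+b')^2$ rather than to the degree-$2$ polynomial itself, which makes your steps (ii) and (iii) collapse into a single probability-mass estimate.
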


\subsection{Training the Generator}
As we discussed previously, the Generator tries to optimize the loss function
\begin{align} \label{eq:standard_generator_objective}
\LG(\matr{W};\matr{A}, b) &=
\E_{\vec x \sim \normal} \log
\lp( 1 - D(\phi(\matr{W}\vec{x}); \matr{A}, b)\rp)
\end{align}
where $D(\vec{x}; \vec A, b) = \1_S(\vec x)/\lp( 1 + \exp \lp( -\vec{x}^T \matr{A} \vec{x}  - b\rp)
\rp) + \1_{S^c}(\vec x)/2$.
By Danskin's Theorem \cite{danskin2012theory}, when we use the optimal Discriminator parameters, namely $\matr A_* = \frac{1}{2} \lp( \lp( \matr
W \matr W^T \rp)^{-1} - \SIGMA^{-1} \rp), b_* = \log \det \matr W - \log \det \SIGMA^{1/2}$,
we essentially optimize the function
\begin{align}
\VV(\matr{W})
&=
\max_{\matr A, b} \LD(\matr A, b; \matr W) \nonumber \\
&= \E_{\vec x \sim \normal(\matr W) } \lp[ \log(1 - D(\phi(\vec x); \matr A_*, b_*) ) \rp]  + \E_{\vec x \sim \normal(\SIGMA^{1/2})} \lp[ \log( D( \phi(\vec x); \matr A_*, b_*) ) \rp].
\end{align}
When the Discriminator is not fully optimized, the training gradients can still be treated as \emph{biased} estimators of the true gradients of
$\VV (\matr W)$.  We first ignore the bias introduced from the sub-optimal Discriminator
and prove our main structural result,
showing that finding any stationary point of the Virtual Training Criteria $\VV(\matr W)$ suffices to learn the
underlying distribution.
Since $\VV(\cdot)$ is not convex and we have a projection set, there are many obstacles in optimizing this objective function.
Firstly, we need to make sure that stationary points in the interior of $\mathcal Q_G$ are close to being optimal.
Secondly, we need to make sure that the projection set does not introduce new ``bad" stationary points (that is matrices $\matr W$
whose corresponding distribution $p(\matr W, \phi)$ is far from $p(\SIGMA, \phi)$.
lying on the boundary.
To do so, we will employ the anti-concentration property of polynomials under Gaussian measure, which is stated in the following lemma.

\begin{lemma}[Theorem 8 of \cite{CarberyW01}] \label{thm:cabrey}
  Let $k, \gamma \in \R+$, $\vec{m} \in \R^d$, $\matr{\Sigma} \in \R^{d
  \times d}$ such that $\matr \Sigma$ is positive semidefinite and $p
  : \R^d \to \R$ be a multivariate polynomial of degree at most $\ell$, we
  define
  $ \bar{Q} = \left\{ \vec{x} \in \R^d \mid \abs{p(\vec{x})} \le \gamma
  \right\}, $
  then there exists an absolute constant $C$ such that
  \[ \normal(\bar{Q}; \matr{\Sigma}^{1/2})
  \le \frac{C k \gamma^{1/\ell}}
    {\left( \E_{\vec{z} \sim \normal(\matr{\Sigma}^{1/2})}
  \left[ \abs{p(\vec{z})}^{k/\ell} \right] \right)^{1/k}}. \]
\end{lemma}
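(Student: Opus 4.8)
Since this lemma is the Carbery--Wright anti-concentration inequality \cite{CarberyW01}, the plan is to recall its proof strategy. First I would reduce to the standard Gaussian: writing $\vec z = \matr{\Sigma}^{1/2}\vec g$ with $\vec g \sim \normal(\matr I)$, the map $\vec g \mapsto p(\matr{\Sigma}^{1/2}\vec g)$ is again a polynomial of degree at most $\ell$, so it suffices to treat $\vec g$ standard Gaussian and an arbitrary polynomial of degree at most $\ell$. Next I would normalize the comparison quantity: by standard moment-comparison (hypercontractivity) inequalities for Gaussian polynomials, the norms $\|p\|_{L^q(\normal)}$ for different $q$ are comparable up to factors depending only on $\ell$ (and on $q$), so, up to adjusting the constant, it is enough to prove the small-ball estimate in the form $\normal(\{\vec x : |p(\vec x)| \le \gamma\}; \matr I) \le C\ell\,\gamma^{1/\ell}$ under the normalization $\|p\|_{L^1(\normal)}=1$ (or $\|p\|_{L^2(\normal)}=1$); the stated $k$-dependent form then follows by unwinding the moment comparison. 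Equivalently, one can aim to bound the negative moment $\E_{\vec g}[\,|p(\vec g)|^{-s}\,]$ for $s$ slightly below $1/\ell$, since $\normal(\{|p|\le\gamma\}) \le \gamma^{s}\,\E[\,|p|^{-s}\,]$ by Markov's inequality.

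The technical core is one-dimensional. For a real polynomial $q$ of degree at most $\ell$ and an interval $J \subseteq \R$, the Chebyshev/Remez inequality for polynomials gives $\bigl|\{t \in J : |q(t)| \le \lambda \sup_{J}|q|\}\bigr| \le C\,\lambda^{1/\ell}\,|J|$, i.e.\ a degree-$\le\ell$ polynomial cannot stay within a $\lambda$-factor of its maximum on more than a $C\lambda^{1/\ell}$-fraction of an interval. Combining this with the facts that the one-dimensional Gaussian density is bounded and has rapidly decaying tails (so the $L^1(\normal)$ mass of a degree-$\le\ell$ polynomial cannot ``hide'' far out in the tail), and decomposing $\R$ into intervals on which the Gaussian density varies by at most a constant factor, yields the one-variable version $\normal_{1}(\{|q|\le\gamma\}) \le C\ell\,\gamma^{1/\ell}$ under $\|q\|_{L^1(\normal)} = 1$. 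To lift this to $\R^d$ I would slice: write $\vec g = (g_1, \vec g')$, apply the one-variable estimate to $t \mapsto p(t,\vec g')$ (which has degree at most $\ell$ in $t$) to bound the conditional probability $\E[\,\1\{|p|\le\gamma\}\mid \vec g'\,]$ in terms of a conditional norm of the slice, and close the recursion by applying the same anti-concentration statement, now in $d-1$ variables, to that conditional norm before taking the expectation over $\vec g'$. (For the degree-$2$ case actually used in this paper one can shortcut this step: after an orthogonal change of variables that leaves the Gaussian invariant, $p$ becomes a weighted sum of squared Gaussians plus an independent linear term, whose anti-concentration is classical.)

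The main obstacle is precisely this lifting/gluing step. The normalizing quantity $\|p\|_{L^1(\normal)}$ (equivalently $\E|p|^{k/\ell}$) is a \emph{global} functional of $p$, whereas the one-variable estimate naturally produces \emph{conditional} norms of slices; one must therefore show those conditional norms are themselves not concentrated near $0$ and bookkeep the constants so that (i) the ambient dimension $d$ never enters --- each slicing step should cost only a factor depending on $\ell$ --- and (ii) the exponent stays $\gamma^{1/\ell}$ rather than degrading under the iteration. This is the delicate part of Carbery--Wright; an alternative that avoids the explicit induction is to invoke a localization argument in the spirit of Lov\'asz--Simonovits to reduce the $d$-dimensional distributional inequality to a one-dimensional statement about log-concave ``needle'' measures, after which the one-variable Remez-type estimate concludes. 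The moment-comparison inequalities used along the way are standard and are not the bottleneck.
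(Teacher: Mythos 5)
This lemma is not proved in the paper at all: it is imported verbatim as Theorem~8 of \cite{CarberyW01} and used as a black box (once for a degree-$4$ polynomial in the proof of Lemma~\ref{lem:d-convexity}, and once for a degree-$2$ polynomial in the proof of Lemma~\ref{lem:standard_optimality}). So there is no in-paper argument to compare yours against; citing Carbery--Wright is the intended ``proof.''

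As a reconstruction of the actual Carbery--Wright argument, your outline has the right shape --- reduce to the standard Gaussian, normalize by moment comparison, prove a one-dimensional Remez/Chebyshev-type small-ball estimate, and lift to $\R^d$ --- but it is not a proof, and you have correctly identified where it fails to be one. The entire content of the theorem is the lifting step: showing that the iteration over coordinates (or, in the original paper, the reduction of the $d$-dimensional distributional inequality to one-dimensional log-concave needles via localization) costs only constants depending on $\ell$, never on $d$, and does not degrade the exponent $\gamma^{1/\ell}$. Naming two candidate routes (slicing with conditional norms, or Lov\'asz--Simonovits localization) without executing either leaves the theorem unproved. Two smaller issues: (i) the stated bound has a constant \emph{linear} in $k$ in the numerator, and it is not clear that unwinding generic hypercontractive $L^q$-norm comparisons recovers that linear dependence rather than something like $k^{\ell/2}$; (ii) your proposed shortcut for the degree-$2$ case does not cover the paper's other application, which needs degree $4$. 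For this paper the right move is simply to cite the theorem; if you want a self-contained statement, you would need to carry out the localization argument in full, which is a substantial piece of work in its own right.
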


We are now ready to show the optimality of stationary points with respect to the learning problem.
\begin{definition} \label{def:stationary_point}
A point $\vec w \in \mathcal Q$ is an $\eps$-approximate first order stationary point of the function $f: \R^d \mapsto \R$ ($\eps$-FOSP) if
and only if for all $\vec u \in \mathcal Q$ the following holds
\begin{align*}
    \frac{1}{\snorm{2}{\vec w - \vec u}} \langle \nabla_{\vec w}f(\vec w), \vec w - \vec u \rangle \leq \eps.
\end{align*}
\end{definition}

\begin{lemma}[Stationary Points Suffice] \label{lem:standard_optimality}
Let $\matr W$ be an $\eps$-first order stationary point ($\eps$-FOSP)
of $\VV(\matr W)$ in $\mathcal Q_G$. Then it holds
$
 \dtv \lp( p{(\matr{W}, \phi)}, p{(\matr W_*, \phi)} \rp) \leq O_{c}(\eps) \,.
$
\end{lemma}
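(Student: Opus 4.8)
The goal is to show that an $\eps$-FOSP $\matr W$ of the virtual training criterion $\VV$ produces a distribution close in total variation to the target. Since both distributions are pushforwards of Gaussians through the same map $\phi$, and $\phi$ is invertible on $T$ with $\normal(T;\cdot) = \Omega_c(1)$ on $\mathcal Q_G$ (Lemma~\ref{lem:non_trivial_mass}), it suffices to bound the total variation between $\normal(\matr W)$ and $\normal(\SIGMA^{1/2})$ restricted to (the preimage of) $T$, and then relate this to the global TV distance between $p(\matr W,\phi)$ and $p(\matr W_*,\phi)$. The natural proxy for closeness of the two Gaussians is the Frobenius-type quantity $\fdist$; by standard Gaussian TV estimates, $\dtv(\normal(\matr W),\normal(\SIGMA^{1/2})) = \Theta(\|\SIGMA^{-1/2}(\matr W\matr W^T)\SIGMA^{-1/2} - \matr I\|_F)$ in the regime where this quantity is $O(1)$, which is guaranteed on $\mathcal Q_G$ under Assumption~\ref{initialization}. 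So the real content is: \emph{an $\eps$-FOSP has $\fdist = O_c(\eps)$.}

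\textbf{Key steps.} First I would compute $\nabla_{\matr W}\VV(\matr W)$ explicitly. Using Danskin's theorem as in the excerpt, $\nabla_{\matr W}\VV(\matr W) = \nabla_{\matr W}\LG(\matr W;\matr A_*,b_*)$ with $\matr A_* = \frac12((\matr W\matr W^T)^{-1} - \SIGMA^{-1})$, $b_* = \log\det(\matr W\SIGMA^{-1/2})$. Differentiating $\E_{\vec z\sim\normal}\log(1 - D(\phi(\matr W\vec z);\matr A_*,b_*))$ through the chain rule, the indicator $\1_S$ contributes nothing to the gradient on the interior (it is piecewise constant), and on the event $\phi(\matr W\vec z)\in S$ we can write $\phi^{-1}(\phi(\matr W\vec z)) = \matr W\vec z$, so the relevant term is a gradient of $\log(1 - \sigma((\matr W\vec z)^T\matr A_*(\matr W\vec z) + b_*))$. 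This yields a gradient of the form $\E_{\vec z\sim\normal}\big[\1\{\matr W\vec z\in T\}\,\sigma(\cdots)\,q(\vec z)\,\vec z\vec z^T\big]$ up to a symmetrization, where $q$ is a quadratic form in $\matr W\vec z$ that vanishes exactly when $\matr W\matr W^T = \SIGMA$. Second, I would test the FOSP inequality against a carefully chosen direction $\vec u = \matr W - t\matr G$ where $\matr G$ is (essentially) the gradient itself or a rescaled version, so that $\langle\nabla\VV(\matr W),\matr W - \vec u\rangle = t\|\nabla\VV(\matr W)\|_F^2/\|\matr W - \vec u\|_2$; this shows $\|\nabla_{\matr W}\VV(\matr W)\|_F = O(\eps)$, provided $\matr W - t\matr G$ stays in $\mathcal Q_G$ for small $t$ — which it does in the interior, and the boundary case is handled by noting that on $\mathcal Q_G$ the gradient never points outward when $\fdist$ is large (this is where I must be careful that the projection set introduces no spurious stationary points). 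Third — the crux — I would lower bound $\|\nabla_{\matr W}\VV(\matr W)\|_F$ in terms of $\fdist$. Writing $\matr B = \SIGMA^{-1/2}(\matr W\matr W^T)\SIGMA^{-1/2} - \matr I$, the gradient is (up to a change of variables to whiten $\vec z$) of the form $\E_{\vec z}[\1\{\vec z\in T'\}\,h(\vec z)\,\vec z\vec z^T]$ where $h$ is a quadratic polynomial whose ``size'' is controlled by $\|\matr B\|_F$. Contracting the gradient with $\matr B$ itself (or $\SIGMA^{-1/2}\matr B\SIGMA^{-1/2}$), and using that on $T'$ (which has mass $\Omega_c(1)$) the integrand is mostly of one sign, gives $\langle\nabla\VV,\,\text{direction}\rangle \gtrsim_c \E_{\vec z}[\1\{\vec z\in T'\}\,|h(\vec z)|]$ times a polynomial in $\vec z$; then Lemma~\ref{thm:cabrey} (anti-concentration of the quadratic $h$) shows $h$ cannot be small on all of $T'$ unless $\|\matr B\|_F$ is itself small, yielding $\|\nabla_{\matr W}\VV(\matr W)\|_F = \Omega_c(\|\matr B\|_F) = \Omega_c(\fdist)$. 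Combining the upper and lower bounds gives $\fdist = O_c(\eps)$, hence the Gaussian TV bound, hence the claimed bound on $\dtv(p(\matr W,\phi),p(\matr W_*,\phi))$.

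\textbf{Main obstacle.} The hard part is the lower bound $\|\nabla_{\matr W}\VV(\matr W)\|_F = \Omega_c(\fdist)$. Two things make it delicate: (i) the gradient is an expectation of a quadratic-in-$\vec z$ weight times $\vec z\vec z^T$ \emph{restricted to the truncation set} $T$, so it is not a clean moment computation — the restriction can in principle cause cancellations, and one must use that $T$ has $\Omega_c(1)$ mass together with the sigmoid weight being bounded away from $0$ and $1$ on the relevant region to prevent the truncated integral from being much smaller than the untruncated one; (ii) showing the quadratic form $h(\vec z)$ (whose coefficient matrix is $\matr B$, essentially) is genuinely ``large'' on a constant fraction of $T$ requires anti-concentration — this is exactly the role of Lemma~\ref{thm:cabrey}: it forces $\Prob_{\vec z\sim\normal}[|h(\vec z)| \le \gamma]$ to be small unless $\gamma$ is small relative to $\|h\|_{L^2} \asymp \|\matr B\|_F$, so $|h|$ exceeds $\Omega_c(\|\matr B\|_F)$ on a set of measure $\Omega_c(1)$, which can be intersected with $T$ since $\normal(T;\matr W) = \Omega_c(1)$. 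Getting all the constants to depend only on $c$ (and not, say, on the dimension) is where most of the care goes; the rest of the argument is bookkeeping with Gaussian integrals and the standard matrix-Gaussian TV inequality.
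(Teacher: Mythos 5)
Your high-level skeleton matches the paper's: compute $\nabla_{\matr W}\VV$, show an $\eps$-FOSP has a small gradient (or a good feasible direction), lower-bound the gradient by a Frobenius discrepancy between $\matr W\matr W^T$ and $\SIGMA$, and finish with Pinsker plus data processing. But the step you yourself flag as the crux is where your argument has a genuine gap. The gradient factors as $\nabla_{\matr W}\VV(\matr W) = \bigl(\SIGMA^{-1}\matr W - (\matr W^{-1})^T\bigr)\matr X$ with $\matr X = \E_{\vec x\sim\normal}[\1\{\matr W\vec x\in T\}\,f'(h)\,\vec x\vec x^T]$. Your plan is to contract the gradient with $\matr B$ (or a conjugate of it) and argue that ``on $T'$ the integrand is mostly of one sign,'' so that the contraction is comparable to $\E[\1\{T'\}|h|]$, and then apply Lemma~\ref{thm:cabrey} to $h$. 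That sign claim is unjustified: the resulting integrand is $\sigma(h(\vec x))$ times a quadratic form whose coefficient matrix (a product of the form $\matr W^{-1}(\matr W\matr W^T-\SIGMA)\cdots(\matr W\matr W^T-\SIGMA)\cdots$ that is not a symmetric square) is indefinite in general, and the truncation to $T$ can bias the positive and negative parts arbitrarily, so cancellation is not ruled out. The paper sidesteps this entirely: it never contracts with an indefinite direction, but instead uses $\|\matr A\matr X\|_F \ge \|\matr A\|_F\,\lambda_{\min}(\matr X)$ for the PSD matrix $\matr X$, reduces to lower-bounding $\min_{\|\vec z\|=1}\E[\1\{T\}f'(h)(\vec z^T\vec x)^2]$, and applies Carbery--Wright only to the \emph{nonnegative} degree-2 polynomial $(\vec z^T\vec x)^2$ (with Markov's inequality on $h$ used separately to keep the sigmoid weight bounded below). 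Anti-concentration of $h$ itself is never needed for this lemma. Your argument would be repaired by contracting with $\matr D=\SIGMA^{-1}\matr W-(\matr W^{-1})^T$ itself, since $\matr D^T\matr D$ is PSD, which is essentially the paper's $\lambda_{\min}$ argument in disguise; as written, the key inequality does not follow.

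The second gap is the boundary case. You assert that ``the gradient never points outward when $\fdist$ is large,'' but this is not established and is not the statement one needs. The paper's boundary argument is a concrete construction: take the SVD $\matr W^T\SIGMA^{-1/2}=\matr U\matr\Lambda\matr V$, test the FOSP condition against the feasible point $\SIGMA^{1/2}\matr V^T\matr U^T$ (a global optimum), expand the inner product as a trace to get a lower bound $\lambda_{\min}(\matr X)\,\mathrm{Tr}(\matr\Lambda^2-\matr\Lambda-\matr I+\matr\Lambda^{-1})$, and use strong convexity of this trace functional around $\matr\Lambda=\matr I$ to conclude $\|\matr\Lambda-\matr I\|_F=O_c(\eps)$. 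Some argument of this kind is required; without it the lemma is only proved for interior stationary points.
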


\begin{proof}[Proof Sketch.]
Here we only deal with the case when $\matr W$ is an interior point of $\mathcal Q_G$. The rest of the proof which considers the case when $\matr W$ lies on the boundary of $\mathcal Q_G$ and can be found in Appendix \ref{sec:standard_optimality}.
For convenience, we define the expressions
\begin{align}
&h(\vec{x};\matr{W}) = \frac{1}{2} \vec{x}^T \lp( \lp( \matr{W} \matr{W}^T \rp)^{-1} - \SIGMA^{-1} \rp) \vec{x}  + \log \det \matr{W} \SIGMA^{-1/2} \\
& f(y) = \log(1 + \exp(y))
\end{align}
Then, the gradient of the Virtual Training Criteria $\VV(\matr W)$ is given by
\begin{align}
\nabla_{\matr{W}} &\VV(\matr{W})
= \lp( \SIGMA^{-1} \matr{W} - \lp( \matr{W}^{-1} \rp)^{T} \rp) \cdot \E_{\vec{x} \sim \normal} \lp[ f^{'}(h(\matr{W}\vec{x};\matr{W})) \XX \1\{ \matr W \matr x \in T\} \rp]
\end{align}
Given two matrices $\matr A, \matr B \in \R^{d \times d}$ where $\matr B$ is a symmetric positive definite matrix, we always have $\snorm{F}{\matr A \ \matr B} \geq \snorm{F}{\matr A} \min_{ \snorm{2}{\vec z} = 1} \vec z^T \matr B \vec z$.
Hence, the frobenius norm of the gradient can be lower bounded by
\begin{align*}
& \snorm{F}{\nabla_{\matr{W}} \VV(\matr{W})}
\geq
\snorm{F}{\lp( \SIGMA^{-1} \matr{W} - \lp( \matr{W}^{-1} \rp)^{T} \rp)} \cdot \min_{ \snorm{2}{\vec{z}} = 1 }\E_{\vec{x} \sim \normal} \lp[ \1 \{ \matr W \matr x \in T\} f^{'}(h(\matr{W}\vec{x};\matr{W})) \lp(\vec z^T \vec x\rp)^2  \rp]
\end{align*}
We now bound from below $\min_{ \snorm{2}{\vec{z}} = 1 }\E_{\vec{x} \sim \normal} \lp[ \1 \{\matr W \matr x  \in T\} f^{'}(h(\matr{W}\vec{x};\matr{W})) \lp(\vec z^T \vec x\rp)^2   \rp]$.
Notice that $f^{'}(y) = \sigma(y) = 1/(\exp(-y)+1)$ is the sigmoid function.
Using the property that $f^{'}(\cdot)$ is positive, and non-decreasing and also that $\XX$ is
positive semi-definite, we get the following inequality
\begin{align*}
& \E_{\vec{x} \sim \normal} \lp[ \1 \{\matr W \matr x  \in T\} f^{'}(h(\matr{W}\vec{x};\matr{W})) \lp(\vec z^T \vec x\rp)^2  \rp] \\
& \geq f^{'}(r) \gamma   \E_{\vec{x} \sim \normal} [ \1\{ h(\matr{W}\vec{x};\matr{W}) \geq r \}  \1\{ \lp(\vec z^T \vec x\rp)^2 \geq \gamma\}
\cdot
\1 \{\matr W \matr x  \in T\} ]
\,.
\end{align*}
Since we know that $\matr W \in \mathcal Q_G $,  we can prove (see Appendix for details)
that   $ \E_{\vec{x} \sim \normal} \abs{h(\matr{W}\vec{x};\matr{W})} \leq \poly(c)$.
Thus, if we choose $r<0$, by Markov's inequality, we have
\begin{align*}
& \normal \lp( h(\matr{W}\vec{x};\matr{W}) \leq r \rp)
\leq
\normal \lp( \abs{h(\matr{W}\vec{x};\matr{W})} \geq \abs{r} \rp) \leq \frac{1}{\abs{r}} \E_{\vec{x} \sim \normal} \abs{h(\matr{W}\vec{x};\matr{W})}
\end{align*}
By Lemma \ref{lem:non_trivial_mass}, the mass $\normal(T;\matr W)$ is always lower bounded by some absolute constant $k_{c}$ that depends only on $c$. Hence, by setting $r= -\frac{4}{k_{c}} \E_{\vec{x} \sim \normal} \abs{h(\matr{W}\vec{x};\matr{W})}$, we have $\normal \lp( h(\matr W\vec{x};\matr W) \leq r \rp) \leq k_{c}/4$.
On the other hand, we have $\E_{\vec{x} \sim \normal} \lp[ \lp(\vec z^T \vec x\rp)^2 \rp] = 1$ given
that $\snorm{2}{\vec{z}}=1$.

Now we can use the Gaussian anti-concentration of polynomials,
Lemma~\ref{thm:cabrey}, for the degree $2$ polynomial $\lp(\vec z^T \vec x\rp)^2$.
We choose
$$\gamma = \frac{1}{2} \lp( \frac{k_{c}}{8C} \rp)^2 \E_{\vec{x} \sim \normal} \lp[ \lp(\vec z^T \vec x\rp)^2 \rp] \, ,$$
and therefore, we have $\normal\lp( \lp(\vec z^T \vec x\rp)^2 \leq \gamma \rp) \leq k_{c}/4$.
Thus, by union bound, we conclude
\begin{align*}
&\E_{\vec{x} \sim \normal}\lp[ \1\{ h(\matr{W}\vec{x};\matr{W}) \geq r \} \1\{ \lp(\vec z^T \vec x\rp)^2 \geq \gamma \} \1\{ \matr W \matr{x} \in T \} \rp] \\
&\geq k_{c} - k_{c}/4 - k_{c}4
\geq k_{c}/2.
\end{align*}
Using the inequality $f^{'}(y) \geq e^y/2$ when $y<0$, we obtain the bound
\begin{align*}
\min_{ \snorm{2}{\vec{z}} = 1 }\E_{\vec{x} \sim \normal} \lp[ f^{'}(h(\matr{W}\vec{x};\matr{W})) \lp(\vec z^T \vec x\rp)^2 \rp]
\geq \Omega_{c}(1)
\end{align*}
Therefore, given $\snorm{F}{\nabla_{\matr{W}} \VV(\matr{W})} \leq \eps$, it holds
\begin{align*}
& \snorm{F}{ \SIGMA^{-1/2} \lp( \matr{W} \matr{W}^T  \rp) \SIGMA^{-1/2} - I  }
\leq \snorm{F}{\lp( \SIGMA^{-1} \matr{W} - \lp( \matr{W}^{-1} \rp)^{T} \rp)} c^2
\leq O_{c}(\eps)
\end{align*}
Using Pinsker's inequality (and the exact expression of Kullback-Leibler divergence for normal distributions) we have
\begin{align*}
\dtv &(\normal( {\matr W}), \normal(\SIGMA^{1/2}) \leq
\snorm{F}{ \SIGMA^{-1/2}  \lp(  {\matr W}  {\matr W}^T  \rp) \SIGMA^{-1/2} - \matr I}
\leq
O_c(\eps)\,.
\end{align*}
Using the data processing inequality it follows that the total variation distance between
the transformed distributions $p{(\matr{W}, \phi)}$, $p{(\matr W_*, \phi)}$
is small, i.e.,
$
 \dtv \lp( p{(\matr{W}, \phi)}, p{(\matr W_*, \phi)} \rp) \leq O_{c}(\eps) \,.
$
\end{proof}

We have seen that finding stationary points of the non-convex objective suffices in order
to compute a good parameter matrix $\matr W$.  However, as we have already discussed
we cannot optimize the Discriminator exactly and this leads to biased gradients when
we train the Generator, that is gradients that do not exactly match the stochastic
gradients of the function $\VV(\matr W)$.  We now show how to overcome this obstacle.
If we compute the gradient given in Equation \eqref{eq:abbreviated_generator_gradients}, we get
\begin{align} \label{eq:expanded_generator_gradients}
    &\vec g_{\GEN} = D \left( \phi(\matr{W} \vec{z}) ;\matr{A}, b \right)\matr{A} \matr{W} \vec{z}\vec{z}^T \, ,
\end{align}
where $\vec z \sim \normal(\matr I)$.  In the following lemma, we show that the bias can be controlled as long as the
parameters of the Discriminator are approximately optimal.  In particular, we prove that
the gradients $\nabla_{\matr W} \LG(\matr W)$ (namely the gradient oracle $g_{\GEN}$ in expectation) are Lipschitz with respect to Discriminator's parameters.
\begin{lemma} \label{lem:standard_cross_lip}
   $\E_{\vec z \sim \normal(\matr I)} \lp[ \vec g_{\GEN} \rp] = \nabla_{\matr{W}}\LG(\matr{W};\matr{A}, b)$ is $O_c(1)$-Lipchitz with respect to $\matr{A}$ and $b$ when $\matr{W} \in \mathcal{Q}_G$  $\matr{A}, b \in \mathcal{Q}_D$ and Assumption \ref{initialization} is satisfied.
\end{lemma}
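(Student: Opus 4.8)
The first step is to write the gradient in closed form. On the event $\{\matr W\vec z\in T\}$ the Discriminator acts as $D(\phi(\matr W\vec z);\matr A,b)=\sigma\big((\matr W\vec z)^{T}\matr A(\matr W\vec z)+b\big)$, while off this event $D\equiv\tfrac12$ is constant and contributes nothing to $\nabla_{\matr W}\LG$. Taking $\matr A$ symmetric (the Discriminator updates preserve symmetry) and pulling the constant matrix $\matr A\matr W$ out of the expectation in \eqref{eq:expanded_generator_gradients}, I would write, for a universal constant $c_0$,
\[
\E_{\vec z\sim\normal(\matr I)}[\vec g_{\GEN}]=\nabla_{\matr W}\LG(\matr W;\matr A,b)=c_0\,\matr A\,\matr W\,\matr N(\matr A,b),\qquad \matr N(\matr A,b):=\E_{\vec z\sim\normal(\matr I)}\!\big[\1\{\matr W\vec z\in T\}\,\sigma(Q)\,\vec z\vec z^{T}\big],
\]
with $Q:=(\matr W\vec z)^{T}\matr A(\matr W\vec z)+b=\vec z^{T}\matr B\vec z+b$ and $\matr B:=\matr W^{T}\matr A\matr W$. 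The crucial feature of this factorization is that $\matr N(\matr A,b)$ is symmetric PSD and satisfies $\matr 0\preceq\matr N(\matr A,b)\preceq\E[\vec z\vec z^{T}]=\matr I$ for \emph{every} $(\matr A,b)$, since $0\le\sigma\le1$: all of the nonlinear dependence on $(\matr A,b)$ is confined to this uniformly bounded matrix. It therefore suffices to bound the directional derivatives of $c_0\matr A\matr W\,\matr N(\matr A,b)$ in $\matr A$ and in $b$ by $O_c(1)$ and then integrate along the segment joining any two points of $\mathcal Q_D$.

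The derivative of the prefactor $\matr A\matr W$ in a direction $\matr H$ is $\matr H\matr W$, so the corresponding contribution is $c_0\matr H\matr W\,\matr N(\matr A,b)$, of Frobenius norm at most $|c_0|\snorm{F}{\matr H}\,\snorm{2}{\matr W}\,\snorm{2}{\matr N}\le O_c(1)\snorm{F}{\matr H}$, using $\snorm{2}{\matr W}\le\poly(c)$ on $\mathcal Q_G$ (Section~\ref{sec:projection_set}) and $\snorm{2}{\matr N}\le1$; the prefactor has no $b$-dependence. For $\matr N$ itself, $\partial_b\matr N=\E[\1\{\matr W\vec z\in T\}\,\sigma'(Q)\,\vec z\vec z^{T}]$ is PSD and dominated by $\tfrac14\matr I$ (since $\sigma'\le\tfrac14$), so after left-multiplication by $c_0\matr A\matr W$ the $b$-term has norm $\le O_c(1)$; the only genuinely delicate estimate is the $\matr A$-term $c_0\matr A\matr W\,D_{\matr A}\matr N[\matr H]$, where
\[
D_{\matr A}\matr N[\matr H]=\E\big[\1\{\matr W\vec z\in T\}\,\sigma'(Q)\,(\vec z^{T}\matr W^{T}\matr H\matr W\vec z)\,\vec z\vec z^{T}\big].
\]

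For this term the naive bound, via $\snorm{F}{\matr A\matr W\,D_{\matr A}\matr N[\matr H]}\le\snorm{F}{\matr A}\snorm{2}{\matr W}\snorm{2}{D_{\matr A}\matr N[\matr H]}$, Cauchy--Schwarz, and $\E[(\vec u^{T}\vec z)^{4}]=3$, reduces to $\E[(\vec z^{T}\matr M\vec z)^{2}]=(\tr\matr M_s)^{2}+2\snorm{F}{\matr M_s}^{2}$ for $\matr M:=\matr W^{T}\matr H\matr W$ (with $\matr M_s$ its symmetric part); all of this is $O_c(\snorm{F}{\matr H})$ \emph{except} the trace $\tr(\matr H\matr W\matr W^{T})$, which that argument bounds only by $O(\sqrt d\,\snorm{F}{\matr H})$. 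Removing this $\sqrt d$ is the main obstacle, and the way I would attack it is to exploit that the prefactor $\matr A\matr W$ equals $(\matr W^{-1})^{T}\matr B$, with $\matr B=\matr W^{T}\matr A\matr W$ being (up to the constant $\tfrac12$) the \emph{same} matrix that appears inside $\sigma'(Q)=\sigma'(\vec z^{T}\matr B\vec z+b)$. Since $\nabla_{\vec z}\,\sigma(\vec z^{T}\matr B\vec z+b)=2\matr B\vec z\,\sigma'(Q)$, applying Gaussian integration by parts (Stein's identity, coordinatewise) to $\matr B\,D_{\matr A}\matr N[\matr H]$ rewrites it, modulo a boundary term from differentiating $\1\{\matr W\vec z\in T\}$, as
\[
\tfrac12\,\E\big[\1\{\matr W\vec z\in T\}\,\sigma(Q)\,(\vec z^{T}\matr M\vec z)(\vec z\vec z^{T}-\matr I)\big]-\matr M_s\,\E\big[\1\{\matr W\vec z\in T\}\,\sigma(Q)\,\vec z\vec z^{T}\big],
\]
in which $\matr B$ no longer multiplies an uncentered quadratic; the second term is $\le\snorm{F}{\matr M_s}\le O_c(\snorm{F}{\matr H})$ at once, and the first is $O_c(\snorm{F}{\matr H})$ after using the exact identity $\E[(\vec z^{T}\matr M\vec z)(\vec z\vec z^{T}-\matr I)]=2\matr M_s$ together with the fact that, because $\matr A\in\mathcal Q_D$ forces $\Var(Q)=2\snorm{F}{\matr B}^{2}\le\poly(c)$, the fluctuation of $\sigma(Q)$ about its mean is $O_c(1)$ in $L^{2}$ (so replacing $\sigma(Q)$ by its mean costs only lower-order terms, which one clips using the a~priori bound $\snorm{2}{\matr N(\matr A,b)-\matr N(\matr A',b')}\le1$). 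The boundary term, coming from the quadratic region $T$, is controlled exactly as in the proof of Lemma~\ref{lem:d-convexity}, via Lemma~\ref{lem:non_trivial_mass} and the anti-concentration Lemma~\ref{thm:cabrey}. I expect reconciling the Stein cancellation with the fluctuation estimate for $\sigma(Q)$ into a clean $O_c(1)$ bound to be the bulk of the work; the remaining pieces are standard Gaussian moment computations using only $\snorm{F}{\matr A},|b|\le\poly(c)$ and $\snorm{2}{\matr W},\snorm{2}{\matr W^{-1}}\le\poly(c)$.
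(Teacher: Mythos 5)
You correctly isolate the dangerous term, and your diagnosis that the naive bound on $\matr A\matr W\,D_{\matr A}\matr N[\matr H]$ only gives $O(\sqrt d\,\snorm{F}{\matr H})$, because of the trace component of $\matr M=\matr W^{T}\matr H\matr W$, is exactly right. This is a real obstruction, and it is one the paper's own proof does not engage with: the paper computes $\nabla_{\matr A}\nabla_{\matr W}\LG$, replaces $f',f''$ by their upper bound $1$, and pulls the expectation inside the Kronecker product, effectively using $\E[(\matr W\XX\matr W^{T})\otimes(\matr A\matr W\XX)]\approx(\matr W\matr W^{T})\otimes(\matr A\matr W)$; but the two Kronecker factors are correlated (indeed $\snorm{2}{\E[\XX\otimes\XX]}\ge d+2$), so that interchange is invalid and the paper's $O_c(1)$ bound does not follow. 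The problem is that your proposed repair does not close the gap. After the Stein step the term you must control is $\tfrac12\E[\1\,\sigma(Q)(\vec z^{T}\matr M\vec z)(\vec z\vec z^{T}-\matr I)]$, and replacing $\sigma(Q)$ by its mean does \emph{not} cost only lower-order terms. Take $\matr W=\matr I$, $T=\R^{d}$, $b=0$, $\matr A=\mathrm{diag}(1,-1,0,\dots,0)/\sqrt2$ and $\matr H=\matr I/\sqrt d$; then $Q=(z_1^{2}-z_2^{2})/\sqrt2$, $\matr M=\matr I/\sqrt d$, and the $(1,1)$ entry of the fluctuation term is $\tfrac{1}{2\sqrt d}\E[(\sigma(Q)-\E[\sigma(Q)])\snorm{2}{\vec z}^{2}(z_1^{2}-1)]\approx\tfrac{d}{2\sqrt d}\,\mathrm{Cov}(\sigma(Q),z_1^{2})=\Theta(\sqrt d)$, since $\sigma(Q)$ is genuinely correlated with $z_1^{2}$. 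A direct computation in the same example confirms that the directional derivative $2\matr A\,\E[\sigma'(Q)(\vec z^{T}\matr H\vec z)\vec z\vec z^{T}]$ has Frobenius norm $\Theta(\sqrt d)$ while $\snorm{F}{\matr H}=1$ (its $(1,1)$ entry contains $(d-2)\E[\sigma'(Q)z_1^{2}]/\sqrt{2d}$, and $\E[\sigma'(Q)z_1^{2}]$ is a dimension-free positive constant). So the $\sqrt d$ you identified is not an artifact of a loose estimate and no rearrangement removes it: the Frobenius-to-Frobenius Lipschitz constant of $\nabla_{\matr W}\LG$ in $\matr A$ is $\Omega(\sqrt d)$ at admissible points of $\mathcal Q_G\times\mathcal Q_D$, whereas the lemma, as it is invoked in the proof of Theorem~\ref{infthm:main} with $\snorm{F}{\matr A-\matr A_*}$ on the right-hand side, asserts it is $O_c(1)$.

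Two further specific gaps. First, the boundary term you defer: differentiating $\1\{\matr W\vec z\in T\}$ under the integral produces a surface integral over $\partial T$, and $T$ is an arbitrary measurable set with no regularity; Lemma~\ref{thm:cabrey} and Lemma~\ref{lem:non_trivial_mass} bound Gaussian \emph{measures} of sets, not surface measures, so this term cannot be "controlled exactly as in the proof of Lemma~\ref{lem:d-convexity}" -- you would have to integrate by parts against the smooth factor only, or avoid differentiating the indicator altogether. Second, the a priori bound $\snorm{2}{\matr N(\matr A,b)-\matr N(\matr A',b')}\le 1$ is an operator-norm bound on the function, not on its derivative, and cannot be used to "clip" a Frobenius-norm estimate of $D_{\matr A}\matr N$. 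In short, your instinct about where the difficulty lies is sharper than the paper's treatment of it, but the proposal does not prove the lemma, and the example above indicates the statement itself must be weakened (e.g., Lipschitzness in a norm that charges the trace direction a factor of $\sqrt d$, or a restriction on $\mathrm{tr}(\matr A-\matr A_*)$) before any proof can succeed.
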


Apart from that, we also need that the variance of the gradient oracle is bounded.
We show the following lemma (see Appendix \ref{sec:standard_bounded_variance}).
\begin{lemma} \label{lem:standard_bounded_variance}
Let $
    \vec g_{\GEN} = D \left( \phi(\matr{W} \vec{z}) ;\matr{A}, b \right)\matr{A} \matr{W} \vec{z}\vec{z}^T  $
    be the gradient update of the Generator network
    and assume that  $\matr{A}, b \in \mathcal{Q}_D$ described in Equation \eqref{eq:proj_d},
    $\matr W \in \mathcal Q_G$ described in Equation \eqref{eq:proj_g}, and that Assumption \ref{initialization} is satisfied.
    Then it holds that $\E_{\vec z \sim \normal(\matr I)} \lp[ \snorm{2}{\vec g_{\GEN}}^2 \rp] \leq O_c(d^2)$
\end{lemma}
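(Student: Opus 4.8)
The plan is to bound $\snorm{2}{\vec g_{\GEN}}^2$ pointwise in $\vec z$ using the rank-one structure of the gradient oracle together with the projection-set constraints, and then finish with a Gaussian fourth-moment computation. Write $\vec g_{\GEN} = D(\phi(\matr W \vec z);\matr A,b)\,\matr A\matr W\vec z\vec z^T$. The Discriminator output always lies in $[0,1]$, since $D(\vec x;\matr A,b)=\1_S(\vec x)\sigma(\vec x^T\matr A\vec x+b)+\1_{S^c}(\vec x)/2$; hence pointwise $\snorm{2}{\vec g_{\GEN}}^2\le\snorm{2}{\matr A\matr W\vec z\vec z^T}^2$ and the Discriminator factor can be dropped entirely.

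The key step is to use that $\matr A\matr W\vec z\vec z^T$ is a \emph{rank-one} matrix, namely $\vec u\vec v^T$ with $\vec u=\matr A\matr W\vec z$ and $\vec v=\vec z$. For a rank-one matrix $\vec u\vec v^T$ one has $\snorm{2}{\vec u\vec v^T}=\snorm{2}{\vec u}\,\snorm{2}{\vec v}$ (and this holds whether $\snorm{2}{\cdot}$ on matrices is read as the Frobenius or the spectral norm, since the two agree for rank-one matrices). Thus $\snorm{2}{\matr A\matr W\vec z\vec z^T}\le\snorm{2}{\matr A}\,\snorm{2}{\matr W}\,\snorm{2}{\vec z}^2$. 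Now I invoke the projection sets: $\matr A\in\mathcal Q_D$ gives $\snorm{2}{\matr A}\le\snorm{F}{\matr A}\le\poly(c)$, and $\matr W\in\mathcal Q_G$ gives $\snorm{2}{\matr W}\le 1+\snorm{F}{\matr W-\matr I}\le\poly(c)$. Together these yield the pointwise bound $\snorm{2}{\vec g_{\GEN}}^2\le\poly(c)\,\snorm{2}{\vec z}^4$.

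It then remains to take the expectation over $\vec z\sim\normal(\matr I)$. Since $\snorm{2}{\vec z}^2$ is a chi-squared random variable with $d$ degrees of freedom,
\[
\E_{\vec z\sim\normal(\matr I)}\!\big[\snorm{2}{\vec z}^4\big]=\var\!\big(\snorm{2}{\vec z}^2\big)+\big(\E\,\snorm{2}{\vec z}^2\big)^2=2d+d^2=O(d^2),
\]
so that $\E_{\vec z\sim\normal(\matr I)}[\snorm{2}{\vec g_{\GEN}}^2]\le\poly(c)\cdot O(d^2)=O_c(d^2)$, which is the claimed bound.

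There is no deep obstacle here; the lemma is a routine moment estimate, and the only place one can slip is the dimension count. If instead of the rank-one identity one bounds $\snorm{F}{\matr A\matr W\vec z\vec z^T}\le\snorm{F}{\matr A\matr W}\,\snorm{2}{\vec z}^2$ and then $\snorm{F}{\matr A\matr W}\le\sqrt d\,\snorm{2}{\matr A\matr W}$, one loses an extra factor of $d$ and obtains only $O_c(d^3)$, which is too weak for the downstream sample-complexity accounting. Keeping the rank-one observation front and center is exactly what makes the bound tight; everything else is just tracking the $\poly(c)$ factors coming out of $\mathcal Q_G$ and $\mathcal Q_D$.
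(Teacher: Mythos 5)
Your proof is correct and follows essentially the same route as the paper's: both drop the bounded scalar prefactor, bound $\snorm{F}{\matr A \matr W \vec z \vec z^T}$ by $\snorm{2}{\matr A\matr W}\snorm{2}{\vec z}^2$ (your rank-one identity is just the explicit form of the paper's $\snorm{F}{\matr M \matr N} \le \snorm{2}{\matr M}\snorm{F}{\matr N}$ step), and finish with $\E\snorm{2}{\vec z}^4 = d^2+2d$ together with the $\poly(c)$ bounds from the projection sets.
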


Finally, we prove the Biased SGD Lemma which shows that the properties guaranteed by Lemmas ~\ref{lem:standard_cross_lip} and  ~\ref{lem:standard_bounded_variance} are essentially enough for us to optimize the Virtual Training Criteria.
Technically,
its proof is similar to the work of \cite{ghadimi2016mini}; we
adapt it so that it handle biased gradients (see Appendix \ref{sec:biased_strongly_convex}).
\begin{lemma}[Biased Nonconvex PSGD] \label{lem:biased_strongly_convex}
  Let $f$ be an $l$-Lipschitz and $L$-smooth function, such that
  $\max_{\vec x, \vec y \in \mathcal{Q}} \snorm{2}{f(\vec x) - f(\vec y)}
  \leq R$ on a convex domain $\mathcal{Q}$.
  At step $t$ of the SGD we are given a biased gradient $\vec \xi^{(t)}$ such that $\snorm{2}{\E[\vec \xi^{(t)}| \vec
  \xi^{(1)},\ldots \vec \xi^{(t-1)}] - \nabla f(\vec x)} \leq \alpha$
  and $\E[\snorm{2}{\vec \xi^{(t)}}^2] \leq B$.
  Set $M = O(BLR/\eps^4)$ and sample the stopping time $m$ uniformly at random from $\{1, \cdots, M \}$.
  Then, with step size $\beta = \sqrt{2R/(LBM)}$ and the update rule $\vec w^{(t+1)} = \argmin_{\vec w \in \mathcal Q} \snorm{2}{ \vec w - (\vec w^{(t)} - \beta \vec \xi^{(t)}) }$,
  we have that with probability at least
  $99\%$, the last iteration $\vec w^{(m)}$ of PSGD is an $O(\eps + \sqrt{l \alpha})$-stationary point
 of $f$.
\end{lemma}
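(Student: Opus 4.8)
The plan is to adapt the standard analysis of projected SGD for nonconvex smooth functions (as in Ghadimi--Lan~\cite{ghadimi2016mini}) to accommodate a bounded bias $\alpha$ in the gradient oracle, and then to convert the resulting bound on the norm of the \emph{gradient mapping} into a bound on the stationarity measure of Definition~\ref{def:stationary_point}. First I would introduce the projected gradient mapping $G_\beta(\vec w) = \tfrac{1}{\beta}(\vec w - \Pi_{\mathcal Q}(\vec w - \beta \nabla f(\vec w)))$ and its stochastic counterpart $\wh G_\beta(\vec w^{(t)}) = \tfrac{1}{\beta}(\vec w^{(t)} - \vec w^{(t+1)})$ using the actual biased step $\vec \xi^{(t)}$. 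By $L$-smoothness and the standard descent inequality for the prox step, one gets for each iteration
\begin{align*}
f(\vec w^{(t+1)}) \leq f(\vec w^{(t)}) - \beta\lp(1 - \tfrac{L\beta}{2}\rp)\snorm{2}{\wh G_\beta(\vec w^{(t)})}^2 + \beta \la \vec\xi^{(t)} - \nabla f(\vec w^{(t)}), \wh G_\beta(\vec w^{(t)})\ra\,,
\end{align*}
where the inner-product term captures both the stochastic noise and the bias. Writing $\vec\xi^{(t)} - \nabla f = (\vec\xi^{(t)} - \E_t\vec\xi^{(t)}) + (\E_t\vec\xi^{(t)} - \nabla f)$, the first piece is a martingale difference and the second has norm $\leq \alpha$; splitting it off with Young's inequality against $\snorm{2}{\wh G_\beta}$ costs an additive $O(\beta \alpha^2)$ per step plus an $O(\beta\alpha)\snorm{2}{\wh G_\beta}$ term that gets absorbed.

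Next I would telescope over $t = 1, \dots, M$, take expectations so the martingale-difference terms vanish, and use $\max_{\vec x, \vec y}\snorm{2}{f(\vec x) - f(\vec y)} \leq R$ to bound $f(\vec w^{(1)}) - \E f(\vec w^{(M+1)}) \leq R$. With the choice $\beta = \sqrt{2R/(LBM)}$ (so that $L\beta \leq 1$ for $M$ large, and $\beta^2 B L$ balances against $R/M$) one obtains
\begin{align*}
\frac{1}{M}\sum_{t=1}^M \E\lp[\snorm{2}{\wh G_\beta(\vec w^{(t)})}^2\rp] \leq O\lp(\sqrt{\frac{LBR}{M}}\rp) + O(\alpha^2)\,.
\end{align*}
Plugging in $M = O(BLR/\eps^4)$ makes the first term $O(\eps^2)$, so the average squared gradient-mapping norm is $O(\eps^2 + \alpha^2)$. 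Since $m$ is uniform on $\{1,\dots,M\}$, Markov's inequality gives that with probability $\geq 99\%$ we have $\snorm{2}{\wh G_\beta(\vec w^{(m)})} = O(\eps + \alpha)$ (the constant $100$ in Markov is absorbed into the $O(\cdot)$). Finally, a standard deterministic lemma relating the stochastic gradient mapping to the true one — $\snorm{2}{G_\beta(\vec w) - \wh G_\beta(\vec w)} \leq \snorm{2}{\vec\xi - \nabla f(\vec w)}$, whose bias part is $\leq \alpha$ in expectation but must be controlled pathwise or via an additional Markov step — together with the characterization that small $\snorm{2}{G_\beta(\vec w)}$ implies $\vec w$ is an $O(\beta \snorm{2}{G_\beta} + \dots)$-FOSP in the sense of Definition~\ref{def:stationary_point}, yields that $\vec w^{(m)}$ is an $O(\eps + \alpha)$-stationary point; the $l$-Lipschitz hypothesis enters in converting the gradient-mapping bound to the variational inequality bound, producing the stated $O(\eps + \sqrt{l\alpha})$ (the $\sqrt{l\alpha}$ arising because the bias contributes through a term that is linear in $\alpha$ but must be compared against a quadratic lower bound on progress, i.e.\ one loses a square root when the bias is not aligned with the descent direction).

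The main obstacle I anticipate is \emph{handling the bias term cleanly in the nonconvex projected setting}: unlike in the unbiased case, the cross term $\la \E_t\vec\xi^{(t)} - \nabla f(\vec w^{(t)}), \wh G_\beta(\vec w^{(t)})\ra$ does not vanish in expectation, and naively bounding it by $\alpha \snorm{2}{\wh G_\beta}$ and applying Young's inequality either destroys the $\snorm{2}{\wh G_\beta}^2$ term we need for the telescoping argument or leaves an $O(\alpha^2/\beta) \cdot$ (something) that does not shrink with $M$. The trick will be to keep enough of the coefficient $\beta(1 - L\beta/2)$ on $\snorm{2}{\wh G_\beta}^2$ after subtracting the Young's-inequality slack, so the surviving additive error is exactly $O(\beta\alpha^2)$ per step; after telescoping and dividing by $M\beta$ this becomes the clean $O(\alpha^2)$ floor, which then propagates through the Markov step and the gradient-mapping-to-FOSP conversion. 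A secondary subtlety is that the bias bound is on the \emph{conditional expectation} $\E_t\vec\xi^{(t)}$, so when converting $\snorm{2}{\wh G_\beta(\vec w^{(m)})}$ to the \emph{true} stationarity measure one needs an additional in-expectation/Markov argument at the (random) stopping time $m$ rather than a pathwise inequality.
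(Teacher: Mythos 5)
Your proposal is correct in outline and follows the same Ghadimi--Lan skeleton as the paper: descent inequality for the prox step, telescoping against $R$, the step size $\beta=\sqrt{2R/(LBM)}$, a uniformly random stopping time with Markov's inequality, and finally a conversion (Lemma~\ref{lem:convex_projection} in the paper) from a small gradient mapping to the FOSP condition of Definition~\ref{def:stationary_point}. The one genuinely different choice is how the bias is absorbed, and it is exactly this choice that produces the exponent in the statement. The paper writes $\E[g_{\mathcal Q}(\vec w,\vec\xi)]=g_{\mathcal Q}(\vec w,\nabla f(\vec w))+\vec e$ with $\snorm{2}{\vec e}\le\alpha$ and pairs $\vec e$ with $\nabla f(\vec w^{(i)})$, whose norm is at most the Lipschitz constant $l$; this costs $l\beta\alpha$ per step, hence a floor of $l\alpha$ on the averaged \emph{squared} gradient-mapping norm, hence $\sqrt{l\alpha}$ after taking the square root --- that is the entire origin of the $\sqrt{l\alpha}$ term, rather than the ``misalignment'' heuristic you offer at the end. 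Your route instead pairs the bias with $\wh G_\beta$ and applies Young's inequality, leaving an $O(\beta\alpha^2)$ per-step cost and a final bound of $O(\eps+\alpha)$; this is sharper in $\alpha$, does not use the Lipschitz hypothesis at that step, and implies the stated bound whenever $\alpha\lesssim l$ (true in the paper's application, where $\alpha=O_c(\eps)$ and $l=O_c(1)$), so your argument does establish the lemma. One correction: the term $\langle\vec\xi^{(t)}-\E_t\vec\xi^{(t)},\wh G_\beta(\vec w^{(t)})\rangle$ is \emph{not} a martingale difference, since $\wh G_\beta(\vec w^{(t)})$ is built from $\vec w^{(t+1)}$ and therefore depends on $\vec\xi^{(t)}$; you must first replace $\wh G_\beta$ by the mapping $G_\beta$ evaluated at the true (or conditionally expected) gradient, which is measurable with respect to the past, and control $\snorm{2}{\wh G_\beta-G_\beta}\le\snorm{2}{\vec\xi^{(t)}-\nabla f(\vec w^{(t)})}$ by non-expansiveness of the projection --- this is where the second-moment bound $B$ enters --- which is precisely the ``standard deterministic lemma'' you defer to at the end, so the gap is repairable but should be stated up front rather than after the telescoping.
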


Finally, we will combine the lemmas together with the Biased-SGD framework to obtain the sample complexity and number of iterations needed of Algorithm \ref{alg:simplified}.
\subsubsection{Proof of Theorem \ref{infthm:main}}

Using Proposition \ref{prop:discriminator_convergence},
if we run the inner loop for $M_{\D} = O_c(d^2/ \eps^2 \log^2(1/\delta))$ iterations, with probability at least $1 - \delta$, when
Algorithm~\ref{alg:simplified} exits the inner loop, the parameters $\matr A, b$ satisfy
$
\snorm{F}{\matr A - \matr A_*} + \abs{b - b_*} \leq O_c(\eps)
$.
Using Lemma \ref{lem:standard_cross_lip}, we know $\nabla_{\matr W} \LG(\matr{W}, \matr{A}, b)$ is $\tau : = \Omega_c(1)$-Lipchitz with respect to Discriminator's parameters $\matr{A},b$. Furthermore, $\nabla_{\matr W} \LG(\matr{W}, \matr{A}_*, b_*) = \nabla_{\matr W} \VV(\matr W)$. Hence, it holds
\begin{align*}
& \snorm{F}{\nabla_{\matr W} \LG(\matr W; \matr A, b) - \nabla_{\matr W} \VV(\matr W)} \leq \tau \lp( \snorm{F}{\matr A - \matr A_*} + \abs{b - b_*} \rp) \leq O_c(\eps).
\end{align*}
This implies that the gradient oracle  $\vec g_{\GEN}$ used by Algorithm \ref{alg:simplified} satisfies
\begin{align} \label{eq:inner_gurantee}
\snorm{F}{\E_{\vec x \sim \normal} \lp[ \vec g_{\GEN}(\vec x; \matr W, \matr A, b) \rp] - \nabla_{\matr W} \VV(\matr W)} \leq O_c(\eps).
\end{align}
We have that the virtual objective function $\VV(\matr{W})$ is
$L := O_c(1)$-smooth and $l := O_c(1)$-Lipchitz continuous (see Appendix for a proof).
Moreover, using Lemma~\ref{lem:standard_bounded_variance}, we have that
the variance of the gradient oracle, namely $\E_{\vec x \sim \normal} \lp[ \snorm{}{ \vec g_{\GEN} (\vec x) }^2 \rp]$, is
bounded by $B = O_c(d^2)$. By the definition of the Projection Set $\mathcal Q_G$ in Equation \eqref{eq:proj_g} and the fact that $\VV(\matr W)$ is $O_c(1)$-Lipchitz continuous, it holds
\begin{align*}
R := &\max_{\matr W_1, \matr W_2 \in \mathcal Q_G} \abs{ \VV(\matr W_1) - \VV(\matr W_2) }
\leq O_c(1).
\end{align*}
Conditioning on the event that the guarantee in Equation \eqref{eq:inner_gurantee} is met, by Lemma \ref{lem:biased_strongly_convex}, if we run
the outer loop of Algorithm~\ref{alg:simplified} for $M_{\mathcal{G}} = O_c(L B R /\eps^4) = \wt O_c(d^2/ \eps ^4)$
rounds with step size $\eta_{\GEN} = \sqrt{ \frac{L B}{ R
M_{\mathcal{G}}} } = O_c(\eps^2)$, it holds that the last iteration Generator parameters $\wt {\matr W}$ are an $O_c(\eps)$-
first order stationary point of of $\VV (\matr W)$. If we set $\delta = \frac{1}{100 M_{\mathcal G}}$, by the union bound, the probability that Equation
\eqref{eq:inner_gurantee} fails to hold in any iteration is less than $1\%$. Finally, by Lemma \ref{lem:standard_optimality}, we
can transform
the guarantee into bounds on the total variation distance between Generator distribution and target distribution and conclude that with probability at least $99 \%$ in the last iteration
$\dtv (p(\wt {\matr W}, \phi), p{(\SIGMA^{1/2}, \phi)} ) \leq \eps$.
\newpage
\bibliographystyle{alpha}
\bibliography{refs}

\newcommand{\etalchar}[1]{$^{#1}$}
\begin{thebibliography}{GPAM{\etalchar{+}}14}

\bibitem[AB17]{ArjovskyB17}
Martin Arjovsky and L{\'e}on Bottou.
\newblock Towards principled methods for training generative adversarial
  networks.
\newblock {\em arXiv preprint arXiv:1701.04862}, 2017.

\bibitem[ACB17]{ACB17}
Mart{\'{\i}}n Arjovsky, Soumith Chintala, and L{\'{e}}on Bottou.
\newblock Wasserstein {GAN}.
\newblock {\em CoRR}, abs/1701.07875, 2017.

\bibitem[AGL{\etalchar{+}}17]{Arora2017ganeq}
Sanjeev Arora, Rong Ge, Yingyu Liang, Tengyu Ma, and Yi~Zhang.
\newblock Generalization and equilibrium in generative adversarial nets (gans).
\newblock In {\em Proceedings of the 34th International Conference on Machine
  Learning - Volume 70}, ICML’17, page 224–232. JMLR.org, 2017.

\bibitem[CW01]{CarberyW01}
Anthony Carbery and James Wright.
\newblock Distributional and l\^{q} norm inequalities for polynomials over
  convex bodies in r\^{n}.
\newblock {\em Mathematical research letters}, 8(3):233--248, 2001.

\bibitem[Dan12]{danskin2012theory}
John~M Danskin.
\newblock {\em The theory of max-min and its application to weapons allocation
  problems}, volume~5.
\newblock Springer Science \& Business Media, 2012.

\bibitem[DGTZ18a]{daskalakis2018efficient}
Constantinos Daskalakis, Themis Gouleakis, Chistos Tzamos, and Manolis
  Zampetakis.
\newblock Efficient statistics, in high dimensions, from truncated samples.
\newblock In {\em 2018 IEEE 59th Annual Symposium on Foundations of Computer
  Science (FOCS)}, pages 639--649. IEEE, 2018.

\bibitem[DGTZ18b]{Daskalakis2018EfficientSI}
Constantinos Daskalakis, Themis Gouleakis, Christos Tzamos, and Manolis
  Zampetakis.
\newblock Efficient statistics, in high dimensions, from truncated samples.
\newblock {\em 2018 IEEE 59th Annual Symposium on Foundations of Computer
  Science (FOCS)}, pages 639--649, 2018.

\bibitem[DP18a]{DP19}
Constantinos Daskalakis and Ioannis Panageas.
\newblock Last-iterate convergence: Zero-sum games and constrained min-max
  optimization.
\newblock {\em arXiv preprint arXiv:1807.04252}, 2018.

\bibitem[DP18b]{DP18}
Constantinos Daskalakis and Ioannis Panageas.
\newblock The limit points of (optimistic) gradient descent in min-max
  optimization.
\newblock In {\em Proceedings of the 32nd International Conference on Neural
  Information Processing Systems}, NIPS’18, page 9256–9266, Red Hook, NY,
  USA, 2018. Curran Associates Inc.

\bibitem[FFGT17]{feizi2017understanding}
Soheil Feizi, Farzan Farnia, Tony Ginart, and David Tse.
\newblock Understanding gans: the lqg setting.
\newblock {\em arXiv preprint arXiv:1710.10793}, 2017.

\bibitem[FVGP19]{flokas2019poincar}
Lampros Flokas, Emmanouil-Vasileios Vlatakis-Gkaragkounis, and Georgios
  Piliouras.
\newblock Poincaré recurrence, cycles and spurious equilibria in
  gradient-descent-ascent for non-convex non-concave zero-sum games, 2019.

\bibitem[GLZ16]{ghadimi2016mini}
Saeed Ghadimi, Guanghui Lan, and Hongchao Zhang.
\newblock Mini-batch stochastic approximation methods for nonconvex stochastic
  composite optimization.
\newblock {\em Mathematical Programming}, 155(1-2):267--305, 2016.

\bibitem[GM18]{gemp2018global}
Ian Gemp and Sridhar Mahadevan.
\newblock Global convergence to the equilibrium of gans using variational
  inequalities.
\newblock {\em arXiv preprint arXiv:1808.01531}, 2018.

\bibitem[GPAM{\etalchar{+}}14]{Goodfellow}
Ian Goodfellow, Jean Pouget-Abadie, Mehdi Mirza, Bing Xu, David Warde-Farley,
  Sherjil Ozair, Aaron Courville, and Yoshua Bengio.
\newblock Generative adversarial nets.
\newblock In Z.~Ghahramani, M.~Welling, C.~Cortes, N.~D. Lawrence, and K.~Q.
  Weinberger, editors, {\em Advances in Neural Information Processing Systems
  27}, pages 2672--2680. Curran Associates, Inc., 2014.

\bibitem[HLPR19]{harvey2019tight}
Nicholas~JA Harvey, Christopher Liaw, Yaniv Plan, and Sikander Randhawa.
\newblock Tight analyses for non-smooth stochastic gradient descent.
\newblock In {\em Conference on Learning Theory}, pages 1579--1613. PMLR, 2019.

\bibitem[HLR19]{harvey2019simple}
Nicholas~JA Harvey, Christopher Liaw, and Sikander Randhawa.
\newblock Simple and optimal high-probability bounds for strongly-convex
  stochastic gradient descent.
\newblock {\em arXiv preprint arXiv:1909.00843}, 2019.

\bibitem[HTP{\etalchar{+}}17]{HT2017}
Judy Hoffman, Eric Tzeng, Taesung Park, Jun{-}Yan Zhu, Phillip Isola, Kate
  Saenko, Alexei~A. Efros, and Trevor Darrell.
\newblock Cycada: Cycle-consistent adversarial domain adaptation.
\newblock {\em CoRR}, abs/1711.03213, 2017.

\bibitem[JWS{\etalchar{+}}20]{JWSHZ20}
Yongjun Jing, Hao Wang, Kun Shao, Xing Huo, and Yangyang Zhang.
\newblock Unsupervised graph representation learning with variable heat kernel.
\newblock {\em {IEEE} Access}, 8:15800--15811, 2020.

\bibitem[KAHK17]{KodaliAHK17}
Naveen Kodali, Jacob~D. Abernethy, James Hays, and Zsolt Kira.
\newblock How to train your {DRAGAN}.
\newblock {\em CoRR}, abs/1705.07215, 2017.

\bibitem[LBC17]{liu2017approximation}
Shuang Liu, Olivier Bousquet, and Kamalika Chaudhuri.
\newblock Approximation and convergence properties of generative adversarial
  learning.
\newblock In {\em Advances in Neural Information Processing Systems}, pages
  5545--5553, 2017.

\bibitem[LJJ19]{lin2019gradient}
Tianyi Lin, Chi Jin, and Michael~I Jordan.
\newblock On gradient descent ascent for nonconvex-concave minimax problems.
\newblock {\em arXiv preprint arXiv:1906.00331}, 2019.

\bibitem[LLDD19]{lei2019sgd}
Qi~Lei, Jason~D. Lee, Alexandros~G. Dimakis, and Constantinos Daskalakis.
\newblock Sgd learns one-layer networks in wgans, 2019.

\bibitem[MGN18]{mescheder2018training}
Lars Mescheder, Andreas Geiger, and Sebastian Nowozin.
\newblock Which training methods for gans do actually converge?
\newblock {\em arXiv preprint arXiv:1801.04406}, 2018.

\bibitem[MPPS16]{Luke2017unroll}
Luke Metz, Ben Poole, David Pfau, and Jascha Sohl{-}Dickstein.
\newblock Unrolled generative adversarial networks.
\newblock {\em CoRR}, abs/1611.02163, 2016.

\bibitem[Nes13]{nesterov}
Yurii Nesterov.
\newblock {\em Introductory lectures on convex optimization: A basic course},
  volume~87.
\newblock Springer Science \& Business Media, 2013.

\bibitem[NK17]{NIPS2017}
Vaishnavh Nagarajan and J.~Zico Kolter.
\newblock Gradient descent gan optimization is locally stable.
\newblock In I.~Guyon, U.~V. Luxburg, S.~Bengio, H.~Wallach, R.~Fergus,
  S.~Vishwanathan, and R.~Garnett, editors, {\em Advances in Neural Information
  Processing Systems 30}, pages 5585--5595. Curran Associates, Inc., 2017.

\bibitem[O'D14]{o2014analysis}
Ryan O'Donnell.
\newblock {\em Analysis of boolean functions}.
\newblock Cambridge University Press, 2014.

\bibitem[RLLY18]{rafique2018non}
Hassan Rafique, Mingrui Liu, Qihang Lin, and Tianbao Yang.
\newblock Non-convex min-max optimization: Provable algorithms and applications
  in machine learning.
\newblock {\em arXiv preprint arXiv:1810.02060}, 2018.

\bibitem[RMC15]{radmc15}
Alec Radford, Luke Metz, and Soumith Chintala.
\newblock Unsupervised representation learning with deep convolutional
  generative adversarial networks.
\newblock {\em arXiv preprint arXiv:1511.06434}, 2015.

\bibitem[ZXY18]{ZXY18}
Zizhao Zhang, Yuanpu Xie, and Lin Yang.
\newblock Photographic text-to-image synthesis with a hierarchically-nested
  adversarial network.
\newblock In {\em 2018 {IEEE} Conference on Computer Vision and Pattern
  Recognition, {CVPR} 2018, Salt Lake City, UT, USA, June 18-22, 2018}, pages
  6199--6208. {IEEE} Computer Society, 2018.

\end{thebibliography}

\appendix
\section{Additional Notation}
In this section we define some additional notation used in the following
sections of the appendix.
We denote
by $\matr A \otimes \matr B$ the Kronecker product between two matrices
$\matr A \in \R^{m \times n}$,
$\matr B \in \R^{k \times \ell}$,
is the block matrix
$$
\matr A \otimes \matr B =
\begin{bmatrix}
  a_{11} \mathbf{B} & \cdots & a_{1n}\mathbf{B} \\
             \vdots & \ddots &           \vdots \\
  a_{m1} \mathbf{B} & \cdots & a_{mn} \mathbf{B}
\end{bmatrix}
\in \R^{m k \times n \ell }
$$
We also define the
symmetrization of a matrix $\mcal{S}(\matr A) = \matr A + \matr A^{T}$.
 \section{Projection Set}
Recall that the projection sets for Generator and Discriminator are given by
\begin{align*}
& \mathcal{Q}_G =\bigg\{
\snorm{F}{\matr{W} - \matr I} \leq \poly(c)\, ,\poly(1/c) \leq  \vec x^T \matr{W} \vec x \leq \poly(c), \text{ for all } \snorm{2}{\vec x} = 1
 \bigg\} \, , \\
& \mathcal{Q}_D = \bigg\{ \snorm{F}{ \matr A  } \leq \poly(c) ,  \abs{b} \leq \poly(c) \bigg\}
\,.
\end{align*}

The following lemma shows that the above projection sets always contain valid solutions of
the problem.
Morerover, when the parameters of the Discriminator and the Generator lie inside their corresponding
projection sets we have the following bounds that will be useful throughout our analysis.
\begin{lemma}[Projection Sets] \label{lem:generator_projection_property}
Under Assumption \ref{initialization},
 we have that the convex set $\mathcal{Q}_G$ contains some matrix $\matr W$ such that
  the corresponding distribution $p{(\matr W, \phi)}$ is equal to
  the true underlying distribution $p{(\matr W_*, \phi)}$.
  Moreover, for any $\matr W \in \mathcal{Q}_G$ we have that
  the optimal parameters for the Discriminator,
  $
\matr A_* = \frac{1}{2} \lp( \lp( \matr W \matr W^T  \rp)^{-1} - \SIGMA^{-1} \rp) , b_* = \log \det \lp( \matr W \SIGMA^{-1/2} \rp)
$
lie in the Discriminator projection set $\mathcal{Q}_D$.
  Finally,  for all $\matr W \in \mathcal Q_G$ we have that the following bounds hold
 $
 \snorm{F}{\matr{W}\matr{W}^T - \SIGMA},
 \snorm{F}{\lp( \matr{W}\matr{W}^T \rp)^{-1} - \SIGMA^{-1}},
 \snorm{F}{ \matr{W}^T \SIGMA^{-1} \matr{W} - \matr{I} },$
 $
 \snorm{F}{\SIGMA^{-1/2} \matr W  \matr W^T \SIGMA^{-1/2} - \matr I}$,\\
 $\snorm{F}{\SIGMA^{1/2} \lp( \matr W \matr W^T \rp)^{-1} \SIGMA^{1/2} - \matr I} ,$
 $
 \abs{\log \det \lp(  \matr W \SIGMA^{-1/2} \rp)} \leq \poly(c)$
\end{lemma}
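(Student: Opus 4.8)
The plan is to prove the three assertions in order; the unifying theme is that every quantity in the statement is either a Frobenius norm of a matrix close to $\matr I$ or the log-determinant of such a matrix, and these become tractable once a few well-conditioned factors are isolated.

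For the existence claim I would take $\matr W = \SIGMA^{1/2}$, the symmetric positive-definite square root of $\SIGMA = \matr W_* \matr W_*^T$ (well defined since Assumption~\ref{initialization} forces $\SIGMA$ to be invertible). By the polar decomposition $\matr W_* = \SIGMA^{1/2} \matr O$ with $\matr O$ orthogonal, and since $\matr O \vec z \sim \normal(\matr I)$ whenever $\vec z \sim \normal(\matr I)$, the random variables $\phi(\matr W_* \vec z)$ and $\phi(\SIGMA^{1/2}\vec z)$ are equidistributed, so $p(\SIGMA^{1/2}, \phi) = p(\matr W_*, \phi)$. To check $\SIGMA^{1/2} \in \mathcal{Q}_G$, observe that $\snorm{2}{\SIGMA - \matr I} \le \snorm{F}{\SIGMA - \matr I} \le c$ and $\snorm{2}{\SIGMA^{-1} - \matr I} \le c$ pin every eigenvalue $\lambda$ of $\SIGMA$ into $[1/(1+c),\, 1+c]$; hence the eigenvalues of $\SIGMA^{1/2}$ lie in $[1/\sqrt{1+c},\, \sqrt{1+c}]$, giving the required two-sided bound on $\vec x^T \SIGMA^{1/2} \vec x$ for unit $\vec x$, and $|\sqrt{\lambda} - 1| = |\lambda - 1|/(\sqrt{\lambda}+1) \le |\lambda - 1|$ gives $\snorm{F}{\SIGMA^{1/2} - \matr I} \le \snorm{F}{\SIGMA - \matr I} \le c$. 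Choosing the $\poly(c)$ in the definition of $\mathcal{Q}_G$ large enough (and $\poly(1/c)$ small enough) in terms of $c$, membership follows.

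The engine for the other two claims is the conditioning of a generic $\matr W \in \mathcal{Q}_G$: $\snorm{2}{\matr W} \le 1 + \snorm{F}{\matr W - \matr I} = \poly(c)$, while for unit $\vec x$ Cauchy--Schwarz gives $\snorm{2}{\matr W \vec x} \ge \vec x^T \matr W \vec x \ge \poly(1/c)$, so $\sigma_{\min}(\matr W) \ge \poly(1/c)$; thus all eigenvalues of $\matr W \matr W^T$ and of $(\matr W \matr W^T)^{-1}$ lie in $[\poly(1/c), \poly(c)]$, and $\det \matr W > 0$ since the symmetric part of $\matr W$ is positive definite. With this in hand, writing $\matr W = \matr I + \matr E$ yields $\matr W \matr W^T - \matr I = \matr E + \matr E^T + \matr E \matr E^T$, hence $\snorm{F}{\matr W \matr W^T - \matr I} = \poly(c)$, and the triangle inequality with Assumption~\ref{initialization} gives $\snorm{F}{\matr W \matr W^T - \SIGMA} = \poly(c)$. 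The inverse versions follow from $(\matr W \matr W^T)^{-1} - \matr I = -(\matr W \matr W^T)^{-1}(\matr W \matr W^T - \matr I)$ combined with $\snorm{F}{\matr A \matr B} \le \snorm{2}{\matr A} \snorm{F}{\matr B}$, which also bounds $\snorm{F}{\matr A_*} = \tfrac12 \snorm{F}{(\matr W \matr W^T)^{-1} - \SIGMA^{-1}} = \poly(c)$. The conjugated quantities $\SIGMA^{-1/2}\matr W \matr W^T \SIGMA^{-1/2} - \matr I = \SIGMA^{-1/2}(\matr W \matr W^T - \SIGMA)\SIGMA^{-1/2}$, $\matr W^T \SIGMA^{-1}\matr W - \matr I = \matr W^T(\SIGMA^{-1} - (\matr W \matr W^T)^{-1})\matr W$, and $\SIGMA^{1/2}(\matr W \matr W^T)^{-1}\SIGMA^{1/2} - \matr I$ are bounded identically, absorbing the well-conditioned factors $\SIGMA^{\pm 1/2}$ and $\matr W$ via the same submultiplicative inequality.

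The one genuinely delicate step is the bound on $b_* = \log \det (\matr W \SIGMA^{-1/2}) = \tfrac12 \log \det(\SIGMA^{-1/2}\matr W \matr W^T \SIGMA^{-1/2})$: unlike a Frobenius norm of a matrix difference, a log-determinant sums all $d$ eigenvalue perturbations of the near-identity positive-definite matrix $\matr N := \SIGMA^{-1/2}\matr W \matr W^T \SIGMA^{-1/2}$, and a naive Cauchy--Schwarz passage from $\snorm{F}{\matr N - \matr I}$ to $\sum_i |\mu_i - 1|$ costs a $\sqrt d$. The approach I would pursue is to use that the eigenvalues $\mu_i$ of $\matr N$ lie in a fixed $c$-dependent interval bounded away from $0$ and $\infty$ (from the conditioning above), so that $\log$ is bi-Lipschitz there and $|\log \mu_i| = O_c(|\mu_i - 1|)$, and then to estimate $|\sum_i \log \mu_i|$ carefully, exploiting the Frobenius (not merely spectral) control on $\matr N - \matr I$. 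This log-determinant estimate is where I expect the bulk of the care to go; the rest is a mechanical chain of triangle inequalities and applications of $\snorm{F}{\matr A \matr B} \le \snorm{2}{\matr A}\snorm{F}{\matr B}$ to the factors isolated above.
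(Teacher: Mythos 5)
Your treatment of the first two claims and of the five Frobenius-norm bounds is sound and close in spirit to the paper's route: the paper likewise exhibits $\SIGMA^{1/2}$ as the member of $\mathcal{Q}_G$ realizing the target distribution, via the same eigenvalue argument pinning the spectrum of $\SIGMA^{1/2}$ into $[1/(1+c),\,1+c]$. Where you bound the matrix expressions by explicit factorizations such as $\matr W\matr W^T-\matr I=\matr E+\matr E^T+\matr E\matr E^T$ and $(\matr W\matr W^T)^{-1}-\matr I=-(\matr W\matr W^T)^{-1}(\matr W\matr W^T-\matr I)$, the paper instead notes that each expression is $\poly(c)$-Lipschitz in $\matr W$ over $\mathcal{Q}_G$, vanishes at $\matr W=\SIGMA^{1/2}$, and that $\mathcal{Q}_G$ has diameter $\poly(c)$. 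These two arguments are interchangeable for those five quantities; yours is the more self-contained.

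The gap is the log-determinant bound, which you correctly single out as the delicate step but then leave as a plan rather than a proof --- and the plan as described does not close. Writing $\matr N=\SIGMA^{-1/2}\matr W\matr W^T\SIGMA^{-1/2}$ with eigenvalues $\mu_i\in[\poly(1/c),\poly(c)]$, the expansion of the logarithm gives $\abs{\sum_i\log\mu_i}\le\abs{\sum_i(\mu_i-1)}+O_c\lp(\sum_i(\mu_i-1)^2\rp)$; the quadratic term is $O_c(\snorm{F}{\matr N-\matr I}^2)=\poly(c)$, but the linear term equals $\abs{\tr(\matr N)-d}$, and Frobenius control on $\matr N-\matr I$ only yields $\abs{\tr(\matr N-\matr I)}\le\sqrt{d}\,\snorm{F}{\matr N-\matr I}$. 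The $\sqrt{d}$ you hoped to avoid thus reappears in the trace term, not merely in the crude passage to $\sum_i\abs{\mu_i-1}$, and no amount of care with the logarithm removes it: the matrix $\matr W=(1+c/\sqrt{d})\matr I$ with $\SIGMA=\matr I$ satisfies every constraint defining $\mathcal{Q}_G$ yet has $\log\det(\matr W\SIGMA^{-1/2})=d\log(1+c/\sqrt{d})\approx c\sqrt{d}$. So a dimension-free bound on $\abs{b_*}$ (needed both for $b_*\in\mathcal{Q}_D$ and downstream in Lemma~\ref{lem:standard_optimality}) cannot be extracted from the stated constraints alone. For what it is worth, the paper's own one-line argument has the same soft spot: the Frobenius-norm Lipschitz constant of $\matr W\mapsto\log\det(\matr W\SIGMA^{-1/2})$ is $\snorm{F}{\matr W^{-1}}=\Theta_c(\sqrt{d})$, not $\poly(c)$. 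You have correctly located the weak joint of the lemma, but your sketch does not repair it.
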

\begin{proof}
First, we consider the the projection set $\mathcal Q_G$. By Assumption \ref{initialization}, we have $ \snorm{F}{ \SIGMA - \matr{I} } \leq c$.
Assume that we have the following eigenvalue decomposition $\SIGMA = \matr{U} \matr{\Lambda}^2 \matr{U}^T$.
Then the inequality assumed can be rewritten as $\snorm{F}{ \matr{U} \lp( \matr{\Lambda}^2 - \matr{I} \rp) \matr{U}^T } \leq c$. Since the Frobenious norm is invariant under unitary transformations, this gives $\snorm{F}{\matr{\Lambda}^2 - \matr{I}} = \snorm{F}{  \lp( \matr{\Lambda} - \matr{I} \rp) \lp(  \matr{\Lambda} + \matr{I} \rp) } \leq c$. As $\matr{\Lambda} + \matr{I}$ clearly has its eigenvalues lower bounded by $1$,
it implies
$
\snorm{F}{ \SIGMA^{1/2}  - \matr I }
=
\snorm{F}{\matr{U} \lp( \matr{\Lambda} - \matr{I}  \rp) \matr{U}^T}
= \snorm{F}{ \Lambda  - \matr{I}}
\leq
c.
$
On the other hand, since $\snorm{2}{\SIGMA^{1/2} -I} \leq \snorm{F}{\SIGMA^{1/2} - I} \leq c$, it holds $\snorm{2}{\SIGMA^{1/2}} \leq 1 + c$. Similarly, $\snorm{2}{\SIGMA^{-1/2}} \leq 1 + c$. Thus, the eigenvalues of $\SIGMA^{1/2}$ lie in the interval $[1/(1+c), 1+c]$.
Hence, we have shown that the projection set $\mathcal Q_G$ contains some matrix $\matr W_*$ that is essentially optimal
(up to orthogonal transformations).
On the other hand, the six expressions in the statement are all $\poly(c)$-Lipchitz with respect to $\matr W$ as the l2-norms of $\matr W$, $\matr W^{-1}$, $\SIGMA^{1/2}$, $\SIGMA^{-1/2}$ are all bounded by $\poly(c)$. The upper bounds of these expressions then follow from their Lipchitzness, the diameters of the projection set ($\poly(c)$) and the fact that they all evaluate to $0$ when $\matr W = \SIGMA^{1/2}$.

Next, we consider the Discriminator projection set. Recall that after fixing the Generator parameters $\matr W$, the optimal Discriminator parameters are given by
$
\matr A_* = \frac{1}{2} \lp( \lp( \matr W \matr W^T  \rp)^{-1} - \SIGMA^{-1} \rp), b_* = \log \det \lp( \matr W \SIGMA^{-1/2} \rp)
$
. From our discussion of Generator Projection Set, we know both expressions are bounded by $\poly(c)$ when $\matr W \in \mathcal Q_G$. Hence, for any $\matr W \in \mathcal{Q}_G$ we have that
the corresponding optimal Discriminator parameters lie in the projection set $\mathcal Q_D$.

\end{proof}
\subsection{Proof of Lemma \ref{lem:non_trivial_mass}}
We will use the following lemma from the work of \cite{Daskalakis2018EfficientSI} which relates the probability mass that two different normal distributions assign to the same set.
\begin{lemma}[Lemma 7 of \cite{Daskalakis2018EfficientSI}] \label{lem:normal_mass}
Consider two normal distributions $\normal(\matr \Sigma_1^{1/2}), \normal(\matr \Sigma_2^{1/2})$ and a set $S$
satisfying $\normal(S;\matr \Sigma_1^{1/2}) \geq \alpha$.
Suppose the parameters satisfy $\snorm{F}{ \matr \Sigma_1^{1/2} \matr \Sigma_2^{-1} \matr \Sigma_1^{1/2} - \matr I} \leq B$. Then, it holds $\normal(S;\matr \Sigma_2^{1/2}) \geq k_{B, \alpha}$ for some constant $k_{B, \alpha}$ that depends only on $B$ and $\alpha$.
\end{lemma}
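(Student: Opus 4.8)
The plan is to lower bound $\normal(S;\matr \Sigma_2^{1/2})$ in terms of $\normal(S;\matr \Sigma_1^{1/2})$ by a Cauchy--Schwarz argument on the density ratio, which reduces the lemma to controlling a $\chi^2$-type divergence between the two centered Gaussians. Writing $p,q$ for the densities of $\normal(\matr \Sigma_1^{1/2})$ and $\normal(\matr \Sigma_2^{1/2})$, I would apply Cauchy--Schwarz to the factorization $p=(p/\sqrt q)\cdot\sqrt q$ over $S$ to obtain
\begin{align*}
\normal(S;\matr \Sigma_1^{1/2})^2 = \lp( \int_S \frac{p(\vec x)}{\sqrt{q(\vec x)}}\,\sqrt{q(\vec x)}\, d\vec x \rp)^2 \leq \lp( \int_S \frac{p(\vec x)^2}{q(\vec x)}\, d\vec x \rp)\normal(S;\matr \Sigma_2^{1/2}) \leq \lp( \int_{\R^d} \frac{p(\vec x)^2}{q(\vec x)}\, d\vec x \rp)\normal(S;\matr \Sigma_2^{1/2}) \, .
\end{align*}
Since $\normal(S;\matr \Sigma_1^{1/2})\geq\alpha$, it then suffices to establish a \emph{dimension-free} bound $\int_{\R^d}p^2/q\leq C_B$ with $C_B$ depending only on $B$, and the lemma follows with $k_{B,\alpha}\eqdef\alpha^2/C_B$.

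To prove that bound I would evaluate the Gaussian integral explicitly. Put $\matr M\eqdef\matr\Sigma_1^{1/2}\matr\Sigma_2^{-1}\matr\Sigma_1^{1/2}$; the hypothesis $\snorm{F}{\matr M-\matr I}\leq B$ says exactly that the eigenvalues $\mu_1,\dots,\mu_d>0$ of $\matr M$ satisfy $\sum_i(\mu_i-1)^2\leq B^2$, and I would work in the regime $B<1$, so that $\mu_i\in(0,2)$ for all $i$ (this is the regime in which a uniform constant can exist: for $B\geq1$, taking $\matr\Sigma_2\gg\matr\Sigma_1$ in one dimension drives $\normal(S;\matr\Sigma_2^{1/2})$ to $0$ while keeping $\normal(S;\matr\Sigma_1^{1/2})$ fixed). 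The integrand $p^2/q$ is a Gaussian kernel with inverse-covariance $2\matr\Sigma_1^{-1}-\matr\Sigma_2^{-1}=\matr\Sigma_1^{-1/2}(2\matr I-\matr M)\matr\Sigma_1^{-1/2}\succ0$, so completing the square and tracking the normalizing constants gives the closed form
\begin{align*}
\int_{\R^d}\frac{p(\vec x)^2}{q(\vec x)}\,d\vec x = \det\lp( 2\matr M-\matr M^2 \rp)^{-1/2} = \prod_{i=1}^d\lp( 1-(\mu_i-1)^2 \rp)^{-1/2} \, .
\end{align*}
Using the pointwise inequality $-\log(1-t)\leq t/(1-B^2)$ for $0\leq t\leq B^2$ together with the aggregate bound $\sum_i(\mu_i-1)^2\leq B^2$,
\begin{align*}
\int_{\R^d}\frac{p(\vec x)^2}{q(\vec x)}\,d\vec x = \exp\lp( -\tfrac12\sum_{i=1}^d\log\lp( 1-(\mu_i-1)^2 \rp) \rp) \leq \exp\lp( \frac{1}{2(1-B^2)}\sum_{i=1}^d(\mu_i-1)^2 \rp) \leq \exp\lp( \frac{B^2}{2(1-B^2)} \rp)\eqdef C_B \, ,
\end{align*}
and combining with the first display yields $\normal(S;\matr\Sigma_2^{1/2})\geq\alpha^2\exp\lp(-B^2/(2(1-B^2))\rp)$.

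I expect the one delicate point to be the dimension-independence of $C_B$: bounding each factor $1-(\mu_i-1)^2$ by $1-B^2$ separately would only give the useless $(1-B^2)^{-d/2}$, so the argument must exploit the aggregate constraint $\sum_i(\mu_i-1)^2\leq B^2$ through the $-\log(1-t)$ inequality, which makes the exponent collapse to a constant. Everything else --- the Cauchy--Schwarz step, the closed form of $\int p^2/q$ for centered Gaussians, and positive-definiteness of $2\matr\Sigma_1^{-1}-\matr\Sigma_2^{-1}$ (immediate from $\mu_i<1+B<2$) --- is routine.
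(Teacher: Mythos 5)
The paper never proves this lemma internally --- it is imported wholesale from \cite{Daskalakis2018EfficientSI} --- so your argument is by necessity a different, self-contained route, and within the regime you work in it is correct: the Cauchy--Schwarz reduction to $\int p^2/q$, the closed form $\int p^2/q=\det(2\matr M-\matr M^2)^{-1/2}=\prod_i(1-(\mu_i-1)^2)^{-1/2}$ for $\matr M=\matr\Sigma_1^{1/2}\matr\Sigma_2^{-1}\matr\Sigma_1^{1/2}$, and the use of the aggregate constraint $\sum_i(\mu_i-1)^2\le B^2$ through $-\log(1-t)\le t/(1-B^2)$ to make the constant dimension-free are all right. Your side remark is also correct: with only the one-sided hypothesis $\snorm{F}{\matr\Sigma_1^{1/2}\matr\Sigma_2^{-1}\matr\Sigma_1^{1/2}-\matr I}\le B$ and $B\ge 1$, an eigenvalue of $\matr M$ may approach $0$ (i.e.\ $\matr\Sigma_2$ blows up in some direction), so no uniform constant $k_{B,\alpha}$ can exist for the statement exactly as transcribed here.

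The caveat is coverage rather than correctness: the paper applies this lemma (in the proof of Lemma \ref{lem:non_trivial_mass}) with $B=\poly(c)$, which is not assumed to be below $1$, and your $\chi^2$ route cannot be patched in that regime --- $\int p^2/q$ diverges as soon as some $\mu_i\ge 2$, so the restriction $B<1$ is intrinsic to the method, not an artifact of the estimates. What makes the paper's application sound is that Lemma \ref{lem:generator_projection_property} supplies \emph{two-sided} control, $\snorm{F}{\SIGMA^{1/2}(\matr W\matr W^T)^{-1}\SIGMA^{1/2}-\matr I}\le\poly(c)$ and $\snorm{F}{\SIGMA^{-1/2}\matr W\matr W^T\SIGMA^{-1/2}-\matr I}\le\poly(c)$, i.e.\ both $\sum_i(\mu_i-1)^2$ and $\sum_i(1/\mu_i-1)^2$ are bounded. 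Under that hypothesis the conclusion holds for every $B$, but via a different argument: for the log-likelihood ratio $L(\vec x)=\log(q(\vec x)/p(\vec x))$ one has, under $\normal(\matr\Sigma_1^{1/2})$, $\E[L]=-\tfrac12\sum_i(\mu_i-1-\log\mu_i)\ge -O(B^2)$ (the lower bound on the $\log\mu_i$ terms is exactly where the second, inverse-direction bound is needed) and $\Var[L]=\tfrac12\sum_i(\mu_i-1)^2\le B^2/2$, so by Chebyshev $L\ge -O(B^2)-B/\sqrt{\alpha}$ on a set of $\normal(\matr\Sigma_1^{1/2})$-mass at least $1-\alpha/2$; intersecting that set with $S$ gives $\normal(S;\matr\Sigma_2^{1/2})\ge \tfrac{\alpha}{2}\,e^{-O(B^2+B/\sqrt{\alpha})}$. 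So: as a proof of the literal statement your writeup is essentially optimal (and usefully flags that the transcription drops a hypothesis), but to serve the paper's use of the lemma you should restate the hypothesis two-sidedly and replace Cauchy--Schwarz by this density-ratio/concentration bound.
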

From Lemma \ref{lem:generator_projection_property}, we know that $\snorm{F}{\SIGMA^{1/2} \lp( \matr W \matr W^T \rp)^{-1} \SIGMA^{1/2} - \matr I} \leq \poly(c)$.
Recall that by Definition \ref{def:invertible_network}, we have $\normal(T, \SIGMA^{1/2}) \geq \alpha = \Omega(1)$.
From Lemma \ref{lem:normal_mass} it follows that $\normal(T;\matr W) \geq \Omega_c(1)$.

\section{Training the Discriminator} \label{sec:appendix_dis}
For convenience, we define the following expressions which commonly appear in the formulas of the training gradients.
Let $
h(\vec{x};\matr{A}, b) = \vec{x}^T \matr{A} \vec{x} + b,h(\vec{x};\matr{W}) = \frac{1}{2} \vec{x}^T \lp( \lp( \matr{W} \matr{W}^T \rp)^{-1} - \SIGMA^{-1} \rp) \vec{x} + \log \det \matr{W} \SIGMA^{-1/2}\, ,\sigma(y) = 1/(1 + \exp(-y)) \, ,
f(y) = \log(1+\exp(y))
$. Notice that the second expression is equivalent to the first expression when $\matr A$, $b$ are exactly the optimal Discriminator parameters.
\subsection{Proof of Lemma \ref{lem:d-convexity}} \label{sec:discriminator_convexity}

We will use the following facts.
\begin{fact} \label{fact:sym_decompose}
For any symmetric matrix $\matr{X} \in \R^{d \times d}$, there exist two semidefinite matrices $\matr{Y}$ and $\matr{Z}$ such that $\matr{X} = \matr{Y} + \matr{Z}$, $\snorm{F}{\matr{Y}} + \snorm{F}{\matr{Z}} \leq \sqrt{2} \snorm{F}{\matr{X}}$ and $\snorm{2}{\matr{Y}} + \snorm{2}{\matr{Z}} \leq 2 \snorm{2}{\matr{X}}$.
\end{fact}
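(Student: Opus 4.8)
The plan is to read the decomposition off directly from the spectral theorem, splitting the eigenvalues of $\matr X$ into their positive and negative parts. First I would write the eigendecomposition $\matr X = \matr U \matr \Lambda \matr U^T$ with $\matr U$ orthogonal and $\matr \Lambda = \mathrm{diag}(\lambda_1, \dots, \lambda_d)$ real. Put $\matr \Lambda_+ = \mathrm{diag}(\max(\lambda_i, 0))$ and $\matr \Lambda_- = \mathrm{diag}(\max(-\lambda_i, 0))$, so that $\matr \Lambda = \matr \Lambda_+ - \matr \Lambda_-$ and both diagonal factors have nonnegative entries with disjoint supports. I would then set $\matr Y = \matr U \matr \Lambda_+ \matr U^T$ and $\matr Z = -\matr U \matr \Lambda_- \matr U^T$; by construction $\matr Y \succeq 0$, $\matr Z \preceq 0$, and $\matr Y + \matr Z = \matr U(\matr \Lambda_+ - \matr \Lambda_-)\matr U^T = \matr X$, so the two summands are each (sign-)semidefinite and add up to $\matr X$ as required.

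For the Frobenius estimate I would invoke unitary invariance of $\snorm{F}{\cdot}$ to reduce everything to the diagonal matrices: since the supports of $\matr \Lambda_+$ and $\matr \Lambda_-$ are disjoint, $\snorm{F}{\matr X}^2 = \sum_i \lambda_i^2 = \sum_{\lambda_i > 0}\lambda_i^2 + \sum_{\lambda_i < 0}\lambda_i^2 = \snorm{F}{\matr Y}^2 + \snorm{F}{\matr Z}^2$. The claimed bound $\snorm{F}{\matr Y} + \snorm{F}{\matr Z} \leq \sqrt{2}\,\snorm{F}{\matr X}$ then follows from the elementary inequality $a + b \leq \sqrt{2(a^2 + b^2)}$ for $a, b \geq 0$ (Cauchy--Schwarz applied to $(a,b)$ and $(1,1)$).

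For the spectral estimate, unitary invariance of $\snorm{2}{\cdot}$ gives $\snorm{2}{\matr Y} = \max_i (\matr \Lambda_+)_{ii} = \max_{\lambda_i > 0}\lambda_i \leq \max_i \abs{\lambda_i} = \snorm{2}{\matr X}$ and similarly $\snorm{2}{\matr Z} = \max_{\lambda_i < 0}\abs{\lambda_i} \leq \snorm{2}{\matr X}$, so that $\snorm{2}{\matr Y} + \snorm{2}{\matr Z} \leq 2\,\snorm{2}{\matr X}$. There is no genuinely hard step here — the statement is essentially the eigenvalue decomposition plus two one-line norm comparisons; the only point worth flagging is that ``semidefinite'' must be read as allowing either sign (so that $\matr Z$ being negative semidefinite is acceptable), which is unavoidable since a general symmetric matrix such as $-\matr I$ is not a sum of two positive semidefinite matrices.
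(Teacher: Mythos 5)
Your proof is correct and is essentially identical to the paper's: both diagonalize $\matr X$, split the spectrum into positive and negative parts to get a positive semidefinite $\matr Y$ and a negative semidefinite $\matr Z$, use $\snorm{F}{\matr Y}^2 + \snorm{F}{\matr Z}^2 = \snorm{F}{\matr X}^2$ together with $a+b \le \sqrt{2(a^2+b^2)}$ for the Frobenius bound, and observe that the eigenvalues of $\matr Y$ and $\matr Z$ are a subset of those of $\matr X$ for the spectral bound. Your remark that ``semidefinite'' must be read as allowing either sign matches the paper's intent.
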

\begin{proof}
Since $\matr{X}$ is symemtric, we can always diagonalize it as $\matr{X} = \matr{Q} \matr{\Lambda} \matr{Q}^T$. Then, we rewrite $\matr{\Lambda} = \matr{\Lambda}^+  + \matr{\Lambda}^-$ where $\matr{\Lambda}^+$ contains only positive diagonal elements and $\matr{\Lambda}^-$ contains only negative diagonal elements. Set $\matr{Y} = \matr{Q} \matr{\Lambda}^+ \matr{Q}^T$ and $\matr{Z} = \matr{Q} \matr{\Lambda}^- \matr{Q}^T$. For $\ell 2$ norm, as all the eigenvalues of $\matr{Y}$ and $\matr{Z}$ comes from the eigenvalues of $\matr{X}$, the inequality is obvious. Then, for the Frobenius norm, it is easy to see that $\snorm{F}{\matr{Y}}^2 + \snorm{F}{\matr{Z}}^2 = \snorm{F}{\matr{X}}^2$. Since $\lp( \snorm{F}{\matr{Y}} + \snorm{F}{\matr{Z}} \rp)^2 \leq 2 \snorm{F}{\matr{Y}}^2 + 2 \snorm{F}{\matr{Z}}^2$, the fact follows.
\end{proof}
\begin{fact} \label{fact:abs_expectation}
For any matrix $\matr{A} \in \R^{d \times d}$ it holds $\E_{\vec{x} \sim \normal} \abs{\vec{x}^T \matr{A} \vec{x}} \leq \sqrt{2} \snorm{F}{A}$
\end{fact}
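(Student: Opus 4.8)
The plan is to bound the first absolute moment by the square root of the second moment, and then to evaluate the second moment of a Gaussian quadratic form in closed form. First I would reduce to symmetric $\matr A$: since $\vec x^T\matr A\vec x=\vec x^T\big(\tfrac12(\matr A+\matr A^T)\big)\vec x$ and $\snorm{F}{\tfrac12(\matr A+\matr A^T)}\le\tfrac12\big(\snorm{F}{\matr A}+\snorm{F}{\matr A^T}\big)=\snorm{F}{\matr A}$, it suffices to prove the inequality for symmetric $\matr A$. Then, by the Cauchy--Schwarz inequality (equivalently, Jensen applied to the convex map $t\mapsto t^2$),
\[
\Big(\E_{\vec x\sim\normal}\abs{\vec x^T\matr A\vec x}\Big)^2\ \le\ \E_{\vec x\sim\normal}\big[(\vec x^T\matr A\vec x)^2\big],
\]
so it remains to control the right-hand side.

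To evaluate $\E_{\vec x}[(\vec x^T\matr A\vec x)^2]$ I would diagonalize the symmetric matrix as $\matr A=\matr Q\,\mathrm{diag}(\lambda_1,\dots,\lambda_d)\,\matr Q^T$ with $\matr Q$ orthogonal; by rotation invariance of the standard Gaussian, $\matr Q^T\vec x$ is again standard Gaussian, so $\vec x^T\matr A\vec x$ is distributed as $\littlesum_i\lambda_i z_i^2$ with the $z_i$ i.i.d.\ $\normal(0,1)$. Using $\E[z_i^2]=1$, $\E[z_i^4]=3$ and independence,
\[
\E\Big[\Big(\littlesum_i\lambda_i z_i^2\Big)^2\Big]=3\littlesum_i\lambda_i^2+\littlesum_{i\neq j}\lambda_i\lambda_j=2\littlesum_i\lambda_i^2+\Big(\littlesum_i\lambda_i\Big)^2=2\snorm{F}{\matr A}^2+(\tr\matr A)^2 .
\]
(Alternatively, without diagonalizing, the same expression follows from Isserlis' identity $\E[x_ix_jx_kx_l]=\delta_{ij}\delta_{kl}+\delta_{ik}\delta_{jl}+\delta_{il}\delta_{jk}$, using $\tr(\matr A^2)=\snorm{F}{\matr A}^2$ for symmetric $\matr A$.) Combining with the previous display gives $\E_{\vec x}\abs{\vec x^T\matr A\vec x}\le\sqrt{2\snorm{F}{\matr A}^2+(\tr\matr A)^2}$, which differs from the stated $\sqrt2\,\snorm{F}{\matr A}$ only by the $(\tr\matr A)^2$ term inside the square root; handling that term is the one remaining point.

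That trace term is indeed the only delicate part. One cannot absorb $\abs{\tr\matr A}$ into $\snorm{F}{\matr A}$ in general (that would force $\matr A$ to have $O(1)$ rank), and the crude bound $\abs{\tr\matr A}\le\sqrt d\,\snorm{F}{\matr A}$ would reintroduce a dimension factor, which is unacceptable since the estimate must stay dimension-free in the later arguments; so one should apply the fact to traceless $\matr A$, or else carry $\abs{\tr\matr A}$ explicitly on the right-hand side. I also note that the tempting alternative of splitting $\matr A=\matr Y+\matr Z$ with $\matr Y\succeq 0\succeq\matr Z$ via Fact~\ref{fact:sym_decompose} and then using linearity of expectation on each piece does \emph{not} yield the bound, since $\E\abs{\vec x^T\matr Y\vec x}=\tr\matr Y$ can already be as large as $\sqrt d\,\snorm{F}{\matr Y}$ (e.g.\ $\matr Y=\matr I$): it is the cancellation between the positive- and negative-eigenvalue directions, captured by the absolute value of the \emph{whole} quadratic form, that makes the constant $\sqrt2$ possible.
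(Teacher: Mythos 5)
Your computation is correct, and the ``one remaining point'' you flag is not a gap in your argument but a genuine error in the paper: Fact~\ref{fact:abs_expectation} is false as stated. Taking $\matr A = \matr I$ gives $\E_{\vec x\sim\normal}\abs{\vec x^T\matr A\vec x} = \E_{\vec x \sim \normal}\snorm{2}{\vec x}^2 = d$, while $\sqrt2\,\snorm{F}{\matr A} = \sqrt{2d}$, so the inequality fails for every $d\ge 3$. Your exact identity $\E[(\vec x^T\matr A\vec x)^2] = 2\snorm{F}{\matr A}^2+(\tr\matr A)^2$ for symmetric $\matr A$ shows that the trace term cannot be dropped, and the salvageable dimension-free statement is $\E\abs{\vec x^T\matr A\vec x}\le\sqrt{2\snorm{F}{\matr A}^2+(\tr\matr A)^2}\le\sqrt2\,\snorm{F}{\matr A}+\abs{\tr\matr A}$.

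Moreover, the paper's own proof is exactly the positive/negative-part splitting you warn against: it decomposes $\mathcal S(\matr A)=\matr A_1+\matr A_2$ via Fact~\ref{fact:sym_decompose} and then asserts $\E_{\vec x\sim\normal}\vec x^T\matr A_1\vec x\le\snorm{F}{\matr A_1}$. But that expectation equals $\tr\matr A_1$, which for a positive semidefinite matrix can be as large as $\sqrt d\,\snorm{F}{\matr A_1}$ --- precisely the failure mode you identified with $\matr Y=\matr I$. So your diagnosis is sharper than the paper's argument. Note also that this is not a harmless slip downstream: the Fact is invoked to bound $\E\abs{h(\SIGMA^{1/2}\vec x;\matr A,b)}$ and $\E\abs{h(\matr W\vec x;\matr W)}$ by $\poly(c)$, and the matrices appearing there (e.g.\ $\SIGMA^{1/2}\matr A\SIGMA^{1/2}$ and $\matr I-\matr W^T\SIGMA^{-1}\matr W$) are not traceless, so the corrected bound introduces an extra $\abs{\tr(\cdot)}=O(\sqrt d\cdot\poly(c))$ term; the subsequent choice $r=\Theta_c(\E\abs{h})$ then makes $f''(r)\ge e^{-r}/4$ exponentially small in $\sqrt d$ rather than $\Omega_c(1)$, unless an additional argument (e.g.\ restricting to the traceless part and handling the trace separately) is supplied.
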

\begin{proof}
Recall that we defined the symmetrization of matrix $\matr A$ as $\mathcal S(\matr A) = \matr A + \matr A^T$.
We first replace $\matr{A}$ with $\frac{1}{2}\mathcal{S}(\matr{A})$. Then, use Fact \ref{fact:sym_decompose} to rewrite $\mathcal{S}(\matr{A})$ as the sum of two  definite matrices $\matr{A}_1$ and $\matr{A}_2$, where $\snorm{F}{\matr{A}_1} + \snorm{F}{\matr{A}_2} \leq \sqrt{2} \snorm{F}{\matr{A}}$. Then, it holds
\begin{align*}
\E_{\vec{x} \sim \normal} \abs{ \vec{x}^T \matr{A} \vec{x} }
=& \frac{1}{2} \E_{\vec{x} \sim \normal} \abs{\vec{x}^T \mathcal{S}(\matr{A}) \vec{x}}
\leq \frac{1}{2} \lp(\E_{\vec{x} \sim \normal} \vec{x}^T \matr{A}_1 \vec{x} + \E_{\vec{x} \sim \normal} \vec{x}^T \lp( -\matr{A}_2 \rp) \vec{x}\rp)\\
\leq& \frac{1}{2} \lp( \snorm{F}{\matr{A}_1} + \snorm{F}{\matr{A}_2} \rp)
\leq \sqrt{2} \snorm{F}{\matr{A}}
\end{align*}
\end{proof}
We first compute the first and second order derivatives of Discriminator's objective function $\LD$ with respect to $\matr A,b$. Given that the activation function $\phi$ is invertible on the set $T$,
recall that the Discriminator objective function is given by
\begin{align*}
\LD \lp( \matr A, b, \matr W \rp)
&=
\E_{\vec x \sim p{(\SIGMA^{1/2}, \phi)} }
\log( D(\vec x;\matr A, b) ) \nonumber
+
\E_{\vec x \sim p{(\matr W, \phi)}}
\log( 1 - D(\vec x;\matr A, b) )  \\
&=
\E_{\vec x \sim \normal(\SIGMA^{1/2}) }
\log( D(\phi(\vec x);\matr A, b) ) \nonumber
+
\E_{\vec x \sim \normal(\matr W)}
\log( 1 - D(\phi(\vec x);\matr A, b) ) \, ,
\end{align*}
where
$
D(\vec x; \matr A, b) =
\1_S(\vec x) \sigma \lp( \phi^{-1}( \vec x)^T \matr A \phi^{-1}( \vec x) + b\rp)  \hspace{-.6mm} + \hspace{-.6mm} \1_{S^c}(\vec x)/2
$ and $S = \phi(T)$. \\
Applying the chain rule then gives us
\begin{align} \label{eq:discriminator_derivative}
\nabla_{\matr{A},b} \LD(\matr{A},b;\matr{W})
= &
-\E_{\vec{x} \sim \normal(\matr{W})} \lp[
f' (h(\vec{x};\matr{A},b))
\begin{pmatrix}
(\XX)^{\flat} \\
1 \\
\end{pmatrix}
\1\{ \vec x \in T\}
\rp] \nonumber
\\ & -
\E_{\vec{x} \sim \normal(\SIGMA^{1/2})} \lp[
f' (h(\vec{x};\matr{A},b))
\begin{pmatrix}
(\XX)^{\flat} \\
1 \\
\end{pmatrix}
\1\{ \vec x \in T\}
\rp]
\nonumber \\
&+
\E_{\vec{x} \sim \normal(\SIGMA^{1/2})} \lp[
\begin{pmatrix}
(\XX)^{\flat} \\
1 \\
\end{pmatrix}
\1\{ \vec x \in T\}
\rp]
\end{align}
\begin{align} \label{eq:discriminator_hessian}
\nabla_{\matr{A},b}^2 \LD(\matr{A},b;\matr{W})= &-\E_{\vec{x} \sim \normal}  \lp[ f^{''} ( h(\matr{W}\vec{x};\matr{A},b) )
q(\matr{W}\vec{x})
\1\{ \matr{W} \vec{x} \in T\}
\rp] \nonumber
\\ & -
\E_{\vec{x} \sim \normal}  \lp[ f^{''} ( h(\SIGMA^{1/2}\vec{x};\matr{A},b) )
q(\SIGMA^{1/2} \vec{x})
\1\{\SIGMA^{1/2} \vec{x} \in T\}
\rp] \, ,
\end{align}
where for convenience we denote
$
q(\vec{x}) =
\begin{pmatrix}
(\XX)^{\flat} \\
1 \\
\end{pmatrix}
\otimes
\begin{pmatrix}
(\XX)^{\flat} \\
1 \\
\end{pmatrix}
$. \\
As the two terms of the Hessian above are similar, we will show how to handle the second term containing $\SIGMA^{1/2}$.
The other case follows similarly.
More specifically, for any $\vec{z} \in \R^{d^2}$ such that $\snorm{2}{\vec z} = 1$, we will show that
$$
\vec{z}^T
\E_{\vec{x} \sim \normal}  \lp[ f^{''} ( h(\SIGMA^{1/2}\vec{x};\matr{A},b) )
q(\SIGMA^{1/2} \vec{x})
\1\{\SIGMA^{1/2} \vec{x} \in T\}
\rp]
\vec{z}
$$
is bounded from below by some constant $\Omega_c(1)$ (that depends only on $c$).
First, notice that $f^{''}(y) = e^y/(e^y+1)^2$ is a positive, even function and is strictly decreasing when $y>0$. Thus, we must have $f^{''} (h(\SIGMA^{1/2}\vec{x};\matr{A},b)) \geq f^{''}(r)$ when $\abs{h(\SIGMA^{1/2}\vec{x};\matr{A}, b)} \leq r$. Besides, as $q(\SIGMA^{1/2}\vec{x})$ gives rise to a positive semi-definite matrix, we always have $\vec{z}^T q(\SIGMA^{1/2}\vec{x}) \vec{z} = \abs{\vec{z}^T q(\SIGMA^{1/2}\vec{x}) \vec{z} }$.
Hence, we proceed by defining two sets $P$ and $Q$ where the values of $f^{''}(\cdot)$ and $q(\cdot)$ are lower bounded respectively. Let $P = \{ \vec{x} \in \R^d: \, \abs{h(\SIGMA^{1/2} \vec{x};\matr{A}, b)} \leq r\}$ and $Q = \{ \vec{x} \in \R^d: \, \abs{\vec{z}^T q(\SIGMA^{1/2} \vec{x}) \vec{z}} \geq \gamma \}$. It then holds
\begin{align*}
& \vec{z}^T \E_{\vec{x} \sim \normal} \lp[ f^{''} (h(\SIGMA^{1/2}\vec{x}; \matr{A},b)) q(\SIGMA^{1/2}\vec{x}) \1\{\SIGMA^{1/2} \vec{x} \in T\} \rp] \vec{z} \\&
\geq
f^{''}(r) \gamma \E_{\vec x \in \normal} \lp[ \1\{\vec x \in Q\} \1\{\vec x \in P\} \1\{\SIGMA^{1/2} \vec{x} \in T\}\rp]
\end{align*}
Then, we lower bound the mass of each set.
For set $Q$, we can use the Gaussian anti-concentration of polynomials,
Lemma~\ref{thm:cabrey}, for the degree $4$ polynomial $\vec{z}^T q(\SIGMA^{1/2} \vec{x}) \vec{z}$ with respect to $\vec x$. We choose
$$
\gamma = \alpha^4 \E_{\vec{x} \sim \normal} \left[ \abs{\vec z^T q( \SIGMA^{1/2} \vec{x}) \vec z} \right]/ (16C)^4,
$$
where $\alpha := \normal(T;\SIGMA^{1/2}) = \Omega(1)$ as defined in Definition \ref{def:invertible_network} and $C = O(1)$
is the absolute constant defined in Lemma \ref{thm:cabrey}. It then holds $\normal(Q;\matr I) = Pr_{\vec x \sim \normal} \left[ \vec z^T q( \SIGMA^{1/2} \vec{x}) \vec z \leq \gamma \right] \leq \frac{\alpha}{4} $.
Moreover, for this specific choice of $\gamma$, it holds
\begin{align} \label{eq:gamma_bound}
\gamma &= \frac{\alpha^4}{(16C)^4} \E_{x \sim \normal } \left[
\vec z^T
\begin{pmatrix}
(\SIGMA^{1/2} \XX \SIGMA^{1/2})^{\flat} \\
1 \\
\end{pmatrix}
\otimes
\begin{pmatrix}
(\SIGMA^{1/2} \XX \SIGMA^{1/2})^{\flat} \\
1 \\
\end{pmatrix}
\vec z
\right] \nonumber \\
&\geq \lambda_{\min}\lp( \SIGMA \rp)^2 \frac{\alpha^4}{(16C )^4}
\geq \Omega_c(1) \, ,
\end{align}
since $\snorm{2}{\SIGMA^{-1}} \leq c + 1$ by Assumption \ref{initialization}. \\
Next, we will use Markov's Inequality to lower bound $ \Pr_{\vec x \sim \normal } \lp[ \abs{h(\SIGMA^{1/2}\vec{x};\matr{A},b)} > r \rp]$. We will first derive an upper bound for the expected value $\E_{\vec{x} \sim \normal} \lp[ \abs{h(\SIGMA^{1/2}\vec{x}; \matr{A}, b)} \rp]$. In particular, by the definition of Discriminator's projection set in Equation \eqref{eq:proj_d} ($\snorm{F}{\matr A} \leq \poly(c) $) and the constraint $\snorm{2}{\SIGMA} \leq 1 + c$ as implied in Assumption \ref{initialization}, it follows from Fact \ref{fact:abs_expectation} that
\begin{align*}
\E_{\vec{x} \sim \normal} \lp[ \abs{h(\SIGMA^{1/2}\vec{x}; \matr{A}, b)} \rp]
\leq
\sqrt{2} \snorm{F}{\SIGMA^{1/2}\matr{A}\SIGMA^{1/2}} + \abs{b}
\leq \poly(c).
\end{align*}
By Markov's inequality, we have
$\Pr_{\vec x \sim \normal} \left[ \abs{h(\SIGMA^{1/2}\vec{x};\matr{A},b)} > r \right] < \E_{\vec{x} \sim \normal} \left[  \abs{h(\SIGMA^{1/2}\vec{x};\matr{A},b)} \right] \cdot \frac{1}{r}$.
By setting $r = \Omega_c(1/\alpha)$, we obtain $\normal \left( \abs{h(\vec{x};\matr{A},b)} > r \right) < \frac{\alpha}{4}$.
Using the union bound, we then have
$$
\E_{\vec x \sim \normal}(\1\{ \vec x \in P\} \1\{\vec x \in Q\} \1\{\SIGMA^{1/2} \vec x \in T\}) \geq \alpha - \normal(\bar P) - \normal(\bar Q) \geq \alpha/2.
$$
Overall, we get
\begin{align*}
\vec{z}^T \E_{\vec{x} \sim \normal} \lp[ f^{''} (h(\matr{W}\vec{x}; \matr{A},b)) q(\matr{W}\vec{x}) \1\{\SIGMA^{1/2} \vec{x} \in T\} \rp] \vec{z}
\geq \frac{\alpha}{2} f^{''}(r) \gamma.
\end{align*}
Recall that we have set $r = \Omega_c(1/\alpha)$, $\gamma = \Omega_c(1)$ as shown in Equation \eqref{eq:gamma_bound}.
Since $f^{''}(r) = \frac{e^r}{(e^r+1)^2} \geq e^{-r}/4$ and $\alpha$ is an absolute constant, we conclude $\LD(\matr A, b;\matr W)$ is at least $\Omega_c(1)$ strongly concave.
\subsection{High Probability Projected Stochastic Gradient Descent}
The main tool used is Theorem 3 in the work of \cite{harvey2019simple}, which gives tight convergence rates of Projected Stochastic Gradient Descent in the high probability regime.
In their work, the theorem is proved for gradient oracle with sub-gaussian noise but it is not hard to see that the
statement holds for sub-exponential noise as well. We state the more general version of the theorem and provide its proof sketch.

To be consistent with the notation used in the original proof, here we will use subscript to denote iterations rather than index of elements from vector/matrix. Also, since we won't use the notion of "invertible region" in the section, we will use $T$ to denote the total number of iterations just in this section.
\begin{lemma}[High probability Projected Stochastic Gradient Descent, \cite{harvey2019simple}, Theorem C.12] \label{lem:high_prob_sgd}
Let $\mathcal Q$ be a convex set and $f: \mathcal Q \mapsto \R$  be $\mu$-strongly convex and $L$-Lipchitz
function with minimizer $\vec x_*$.
Moreover, let $\vec g_t$ be an unbiased stochastic gradient oracle of $f$.
Define $\eta_t = \frac{2}{\mu(t+1)}$.
Let $\vec x_t = \proj_{\mathcal Q}(\vec x_{t-1} - \eta_t \vec g_t)$.
Assume the following holds.
\begin{enumerate}[label=(\alph*)]
    \item There exists a constant $\tau$ such that $\sum_{t=1}^T \snorm{2}{\vec g_t}^2 \leq \wt O\big( \tau T \log^2(1/\delta)\big)$ with probability $1 - \delta$.
    \item There exists a pair of constants $\kappa$ and $\zeta$ such that for any $\lambda \in (0, 1/\zeta)$, \\
    we have $\E \lp[ \exp(\lambda \langle \vec g_t - \E[\vec g_t], \vec x_t - \vec x_*  \rangle ) \rp] \leq \exp \lp( \lambda^2 \kappa \snorm{2}{\vec x_t - \vec x_*}^2 \rp)$, where the expectation is conditional on $\vec g_{t-1}, \ldots, \vec g_{1}$.
\end{enumerate}
Let $\gamma_t = \frac{t}{T(T+1)/2}$.
Then, for any $\delta \in (0,1)$, with probability at least $1 - \delta$, it holds
$$
f\lp( \sum_{t=1}^T \gamma_t \vec x_t \rp) - f(\vec x_*) \leq \wt O \lp( \frac{ \lp( L^2 + \tau + \kappa + \zeta \rp) }{\mu} \frac{\log^2(1/\delta)}{T} \rp).
$$
\end{lemma}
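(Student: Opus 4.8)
The plan is to follow the high-probability analysis of projected SGD in \cite{harvey2019simple}, adjusting the final martingale step to accommodate the \emph{sub-exponential} (rather than sub-Gaussian) gradient noise that assumption (b) encodes. First I would record the standard one-step inequality. Writing $\vec \delta_t = \vec x_t - \vec x_*$ and using that the projection onto the convex set $\mathcal Q$ is non-expansive and $\vec x_* \in \mathcal Q$, we get $\snorm{2}{\vec \delta_t}^2 \le \snorm{2}{\vec \delta_{t-1}}^2 - 2\eta_t\langle \vec g_t, \vec \delta_{t-1}\rangle + \eta_t^2\snorm{2}{\vec g_t}^2$. Splitting $\langle \vec g_t, \vec \delta_{t-1}\rangle = \langle \nabla f(\vec x_{t-1}), \vec \delta_{t-1}\rangle + \xi_t$ with $\xi_t := \langle \vec g_t - \E[\vec g_t\mid \vec g_{1},\dots,\vec g_{t-1}], \vec \delta_{t-1}\rangle$ a martingale difference, and invoking $\mu$-strong convexity in the form $\langle \nabla f(\vec x_{t-1}), \vec \delta_{t-1}\rangle \ge f(\vec x_{t-1}) - f(\vec x_*) + \tfrac{\mu}{2}\snorm{2}{\vec \delta_{t-1}}^2$, this rearranges to
\begin{align*}
f(\vec x_{t-1}) - f(\vec x_*) \le \frac{1}{2\eta_t}\snorm{2}{\vec \delta_{t-1}}^2 - \frac{1}{2\eta_t}\snorm{2}{\vec \delta_t}^2 - \frac{\mu}{2}\snorm{2}{\vec \delta_{t-1}}^2 + \frac{\eta_t}{2}\snorm{2}{\vec g_t}^2 - \xi_t.
\end{align*}

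Next I would multiply the step-$t$ inequality by $t$ and sum over $t = 1,\dots,T$. With the prescribed step size $\eta_t = 2/(\mu(t+1))$ one has the identity $\frac{t}{2\eta_t} - \frac{t\mu}{2} = \frac{t-1}{2\eta_{t-1}}$, so the positive distance term of step $t$ cancels the negative distance term of step $t-1$ and the distance contributions telescope to $-\frac{T}{2\eta_T}\snorm{2}{\vec \delta_T}^2 \le 0$; moreover $\frac{t\eta_t}{2} \le \frac{1}{\mu}$. This leaves
\begin{align*}
\sum_{t=1}^{T} t\,\bigl(f(\vec x_{t-1}) - f(\vec x_*)\bigr) \le \frac{1}{\mu}\sum_{t=1}^{T}\snorm{2}{\vec g_t}^2 - \sum_{t=1}^{T} t\,\xi_t,
\end{align*}
where, applying assumption (a) at level $\delta/2$, the first term is $\frac{1}{\mu}\wt O(\tau T\log^2(1/\delta))$. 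Reindexing, dividing by $T(T+1)/2$, and using convexity of $f$ with the weights $\gamma_t = t/(T(T+1)/2)$ gives $f\bigl(\sum_t \gamma_t \vec x_t\bigr) - f(\vec x_*) \le \sum_t \gamma_t\,(f(\vec x_t)-f(\vec x_*))$, so the whole problem reduces to a high-probability upper bound on the weighted martingale $\sum_t t\,\xi_t$.

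\emph{The hard part} is controlling $\sum_t t\,\xi_t$. Assumption (b) makes $\xi_t$ conditionally sub-exponential with variance proxy $\kappa\snorm{2}{\vec \delta_{t-1}}^2$ and scale $\zeta$, so I would apply a Freedman/Bernstein-type tail bound for martingales with sub-exponential increments to obtain, up to logarithmic factors, $-\sum_t t\,\xi_t \lesssim \sqrt{V_T\log(1/\delta)} + \zeta\,\log(1/\delta)\max_t\bigl(t\snorm{2}{\vec \delta_{t-1}}\bigr)$ with $V_T := \kappa\sum_t t^2\snorm{2}{\vec \delta_{t-1}}^2$. The obstacle is that $V_T$ is random and, since strong convexity gives $\snorm{2}{\vec \delta_{t-1}}^2 \le \frac{2}{\mu}(f(\vec x_{t-1}) - f(\vec x_*))$, it satisfies $V_T \le \frac{2\kappa T}{\mu}\sum_t t\,(f(\vec x_{t-1})-f(\vec x_*))$ — i.e.\ $V_T$ is controlled by the very quantity we are bounding, so a direct application would be circular. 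I would resolve this exactly as in \cite{harvey2019simple}: establish the crude a-priori bound $\snorm{2}{\vec \delta_t}\le 2L/\mu$ (from $\mu\snorm{2}{\vec \delta_t}\le\snorm{2}{\nabla f(\vec x_t)-\nabla f(\vec x_*)}\le 2L$), which caps $V_T$ by a fixed polynomial in $T,L,1/\mu$, and then \emph{peel}: union-bound the martingale tail inequality over a dyadic grid of candidate values of $V_T$ between its trivial lower bound and this cap, at the cost of only an extra $\mathrm{polylog}$ factor. On the event that $V_T$ lies in a given band the inequality becomes deterministic, and because the band's value is $\lesssim \frac{\kappa T}{\mu}\sum_t t\,(f(\vec x_{t-1})-f(\vec x_*))$, the $\sqrt{V_T\log(1/\delta)}$ term is proportional to the square root of (a multiple of) the left-hand side and is absorbed by $ab \le \frac{1}{2}a^2 + \frac{1}{2}b^2$; the linear $\zeta\log(1/\delta)$ term, bounded via $\max_t t\snorm{2}{\vec \delta_{t-1}}\le 2LT/\mu$, is lower order. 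Collecting terms, with probability $1-\delta$,
\begin{align*}
\sum_{t=1}^{T} t\,\bigl(f(\vec x_{t-1}) - f(\vec x_*)\bigr) \le \wt O\!\left(\frac{(L^2 + \tau + \kappa + \zeta)\,T^2}{\mu}\,\log^2(1/\delta)\right),
\end{align*}
and dividing by $T(T+1)/2$ and combining with the convexity step yields the stated rate.

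To summarize the difficulty: the one-step recursion and the telescoping are identical to the textbook strongly-convex SGD analysis and are routine; the genuinely delicate ingredient is the martingale control with a \emph{random, self-referential} conditional variance $V_T$, handled by the peeling/stratification device of \cite{harvey2019simple}. The only substantive change here is that the sub-exponential Bernstein bound carries an additional linear ``slow-regime'' term governed by $\zeta$, which must be tracked through the peeling — this is why $\zeta$, and not merely a variance proxy, appears in the final constant.
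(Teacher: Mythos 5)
Your proposal is correct and reaches the stated bound, and the reduction you perform (one-step inequality, $t$-weighted telescoping under $\eta_t = 2/(\mu(t+1))$, convexity with weights $\gamma_t$, assumption (a) for the $\sum_t \|\vec g_t\|^2$ term) is exactly the paper's starting point, which it imports wholesale from \cite{harvey2019simple} as inequality (C.5). Where you diverge is in the one genuinely delicate step, the control of the weighted martingale $\sum_t t\,\xi_t$ with its self-referential conditional variance $V_T$. You handle this with a sub-exponential Bernstein/Freedman tail bound combined with dyadic peeling over the possible values of $V_T$ (capped by the a priori bound $\|\vec x_t - \vec x_*\| \le 2L/\mu$) and an AM--GM absorption of the $\sqrt{V_T\log(1/\delta)}$ term into the left-hand side. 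The paper instead verifies the hypotheses of the \emph{Generalized Freedman} inequality of \cite{harvey2019tight}: it establishes a self-bounding property $V_T \le \sum_t \alpha_t d_t + \beta\log^2(1/\delta)$ in which $V_T$ is dominated by a linear combination of the martingale increments themselves (Claim C.11), and then plugs this directly into a tail bound built to accept such a hypothesis, so no peeling or absorption is needed. The paper's only real modification to the cited proof is restricting $\lambda \le \min(1/(2\alpha), 1/\zeta')$ inside the Freedman argument to accommodate the sub-exponential MGF bound of assumption (b), which is where the additive $\zeta$ enters the final constant; you track the same $\zeta$ dependence through the linear ``slow-regime'' term of your Bernstein bound. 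Both routes are sound and give the same rate up to polylogarithmic factors; the Generalized Freedman route is tidier because the circularity is resolved inside the concentration inequality itself, while your peeling route is more self-contained and does not rely on that specialized tool, at the cost of extra logarithmic factors and a longer union bound.
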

We can follow the original proof of Theorem C.12 in \cite{harvey2019simple} until we reach the inequality
\begin{align} \label{eq:c5}
    f\lp( \sum_{t=1}^T \gamma_t \vec x_t \rp) - f(\vec x_*) \leq
    \frac{2}{T(T+1)} \sum_{t=1}^T t \cdot \langle \vec g_t - \E[\vec g_t], \vec x_t - \vec x_*  \rangle + \frac{2}{\mu T (T+1)} \sum_{t=1}^T \snorm{2}{\vec g_t}^2.
\end{align}
The original proof exploits the properties of sub-gaussian noise to bound the two summation sequences respectively (Lemma C.4 and C.5 in the original work). We state the two supporting lemmas and show they still hold in our scenario.
\begin{lemma}[Lemma C.4] \label{lem:c4}
For any $\delta \in (0,1)$, $\sum_{t=1}^T \snorm{2}{  {\vec g_t} }^2 = \wt O \lp(  \tau \cdot  T \cdot \log^2(1/\delta) \rp)$ with probability at least $1-\delta$.
\end{lemma}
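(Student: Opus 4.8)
The plan is to decompose the random sum into a predictable component and a martingale component, control the predictable component by a uniform second-moment estimate, and dispatch the martingale component with a Freedman/Bernstein-type tail inequality adapted to sub-exponential increments (in place of the sub-Gaussian increments used in \cite{harvey2019simple}). Write $Y_t = \snorm{2}{\vec g_t}^2$ and let $\mathcal{F}_{t-1} = \sigma(\vec g_1, \ldots, \vec g_{t-1})$. The structural hypothesis underlying Lemma~C.4 is that, conditionally on $\mathcal{F}_{t-1}$, the variable $Y_t$ is sub-exponential with parameter $O(\tau)$; in particular $\E[Y_t \mid \mathcal{F}_{t-1}] = O(\tau)$ and, after centering, $Y_t$ has a conditional moment generating function bounded by $\exp(O(\lambda^2 \tau^2))$ on the range $|\lambda| \le 1/O(\tau)$. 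Then
$$
\sum_{t=1}^T Y_t = \sum_{t=1}^T \E[Y_t \mid \mathcal{F}_{t-1}] + \sum_{t=1}^T \big( Y_t - \E[Y_t \mid \mathcal{F}_{t-1}] \big),
$$
the first summand is deterministically $O(\tau T)$, and it remains to bound the martingale $S_T := \sum_{t=1}^T ( Y_t - \E[Y_t \mid \mathcal{F}_{t-1}])$ from above.

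First I would multiply the per-step conditional MGF bounds along $t = 1, \ldots, T$ to see that $\exp( \lambda S_t - O(\lambda^2 \tau^2) t)$ is a supermartingale for every fixed $\lambda$ with $|\lambda| \le 1/O(\tau)$; a Markov/Chernoff argument on the terminal value then gives, for every $s > 0$,
$$
\Pr\!\left[ S_T > s \right] \;\le\; \exp\!\left( -\,\Omega\!\left( \min\!\left( \frac{s^2}{\tau^2 T}, \frac{s}{\tau} \right) \right) \right),
$$
where the $\min$ records the optimization of $\lambda$ subject to the cap $|\lambda| \le 1/O(\tau)$ imposed by sub-exponentiality. Equating the right-hand side with $\delta$ yields $S_T = O\big( \tau \sqrt{T \log(1/\delta)} + \tau \log(1/\delta) \big)$ with probability at least $1 - \delta$.

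Combining the two pieces, with probability at least $1 - \delta$,
$$
\sum_{t=1}^T \snorm{2}{\vec g_t}^2 \;=\; O(\tau T) + O\big( \tau \sqrt{T \log(1/\delta)} + \tau \log(1/\delta) \big),
$$
and since $\tau\sqrt{T\log(1/\delta)} \le \tau T + \tau\log(1/\delta)$ by AM--GM, the right-hand side is $O\big( \tau T \log(1/\delta)\big)$, which sits comfortably inside the claimed $\wt O\big( \tau T \log^2(1/\delta)\big)$; the generous extra log factor also leaves room if one only has a heavier (e.g.\ sub-Weibull) tail on $Y_t$ and must substitute a truncation-plus-Bernstein variant for the MGF argument. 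This recovers Lemma~C.4 of \cite{harvey2019simple} with the sub-exponential replacement, and the companion Lemma~C.5 bounding $\sum_t t\,\la \vec g_t - \E[\vec g_t], \vec x_t - \vec x_* \ra$ is handled identically, now invoking hypothesis~(b) of Lemma~\ref{lem:high_prob_sgd}.

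I expect the only real subtlety to be the supermartingale step: carrying the sub-exponential conditional MGF bound through while respecting the restricted range of $\lambda$, which is exactly where the analysis departs from the sub-Gaussian treatment of \cite{harvey2019simple} and where the additive $\tau\log(1/\delta)$ term (hence the $\log^2(1/\delta)$ in the statement) is born. The remaining ingredients --- the conditional second-moment estimate $\E[Y_t \mid \mathcal{F}_{t-1}] = O(\tau)$, the telescoping of conditional MGFs, and the union of the two deviation regimes --- are routine and parallel \cite{harvey2019simple} step by step.
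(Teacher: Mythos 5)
Your argument is self-consistent, but it is not a proof of the lemma as it sits in the paper. In the paper, the bound $\sum_{t=1}^T \snorm{2}{\vec g_t}^2 = \wt O(\tau T \log^2(1/\delta))$ is \emph{literally hypothesis (a)} of the enclosing Lemma~\ref{lem:high_prob_sgd}, so the paper's entire proof of Lemma~\ref{lem:c4} is one sentence: it is trivially true by property (a). You instead introduce a new structural assumption --- that $\snorm{2}{\vec g_t}^2$ is conditionally sub-exponential with parameter $O(\tau)$ given $\mathcal F_{t-1}$ --- which appears nowhere among the lemma's hypotheses (the sub-exponential condition (b) concerns the inner products $\langle \vec g_t - \E[\vec g_t], \vec x_t - \vec x_*\rangle$, not the squared norms). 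So as a proof of the stated lemma there is a gap: you are deriving the conclusion from a hypothesis the lemma does not supply, and redefining $\tau$ along the way.

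That said, the Bernstein/supermartingale machinery you describe is sound and is essentially what one would use to \emph{verify} property (a) for a concrete gradient oracle. The paper does carry out such a verification, but in Lemma~\ref{lem:bounded_sum_square} rather than here, and by a different and more elementary route: each $\snorm{F}{\vec x^{(t)}(\vec x^{(t)})^T}^2$ is a degree-4 polynomial in a Gaussian vector, so Gaussian hypercontractivity (Lemma~\ref{lem:hyper}) gives a pointwise tail bound of $O_c(d^2\log^2(1/\delta'))$ for each term, and a union bound over the $M_{\DIS}$ iterations yields the sum bound with $\tau = O_c(d^2)$; this is where the $\log^2(1/\delta)$ (rather than $\log(1/\delta)$) actually comes from, namely the $1/4$ power in the degree-4 concentration inequality, not from the sub-exponential cap on $\lambda$ as your sketch suggests. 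If you intend your martingale decomposition as an alternative verification of (a), you would still need to establish the conditional MGF bound on $\snorm{2}{\vec g_t}^2$ from the specific form of the oracle in Equation~\eqref{eq:discriminator_gradient_oracle}; note that this quantity is degree 4 in the Gaussian sample, so it is not sub-exponential but only sub-Weibull with exponent $1/2$, which is exactly why the paper avoids the MGF route and uses hypercontractivity plus a union bound instead.
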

\begin{proof}
The lemma is trivially true due to property (a).
\end{proof}
\begin{lemma}[Lemma C.5] \label{lem:c5}
Let $Z_T = \sum_{t=1}^T t \cdot \langle \vec g_t - \E \lp[ \vec g_t\rp], \vec x_t - \vec x_* \rangle$. Then, for any $\delta \in (0,1)$, we have $Z_T =  \wt O \lp( \frac{L^2 + \tau + \kappa + \zeta}{\mu} \cdot T \cdot \log(1/\delta) \rp)$ with probability at least $1 - \delta$.
\end{lemma}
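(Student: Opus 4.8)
The plan is to re-run the proof of Lemma~C.5 from \cite{harvey2019simple}, replacing their sub-Gaussian martingale tail bounds by Bernstein-type ones; this is essentially the only structural modification needed to pass from sub-Gaussian to sub-exponential gradient noise. First I would record that $D_t := t\,\langle \vec g_t - \E[\vec g_t],\,\vec x_t - \vec x_*\rangle$ is a martingale difference sequence (its conditional mean vanishes since $\vec g_t$ is unbiased and $\vec x_t$ is measurable with respect to $\vec g_1,\dots,\vec g_{t-1}$), so that $Z_T = \sum_{t=1}^T D_t$, and that property~(b) says precisely that $D_t$ is conditionally sub-exponential: for all $|\lambda|\le 1/(\zeta t)$,
\[
\E\!\left[\exp(\lambda D_t)\ \middle|\ \vec g_1,\dots,\vec g_{t-1}\right]\ \le\ \exp\!\left(\lambda^2\,\kappa\,t^2\,\snorm{2}{\vec x_t-\vec x_*}^2\right),
\]
so the per-step variance proxy is $v_t := \kappa\, t^2 \snorm{2}{\vec x_t - \vec x_*}^2$ and the scale parameter is $\zeta t \le \zeta T$. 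Relative to the sub-Gaussian case the only new feature is the truncation $|\lambda|\le 1/(\zeta t)$, which in every concentration bound below produces merely an extra additive term of order $\zeta T\log(1/\delta)$ and does not interact with the remainder of the argument.

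The core of the argument is to control the random predictable variation $W_T := \sum_{t=1}^T v_t = \kappa\sum_{t=1}^T t^2\snorm{2}{\vec x_t - \vec x_*}^2$. Here I would use two bounds on the iterates. A crude deterministic one follows from strong convexity together with $L$-Lipschitzness: $\tfrac{\mu}{2}\snorm{2}{\vec x_t-\vec x_*}^2 \le f(\vec x_t)-f(\vec x_*)\le L\snorm{2}{\vec x_t-\vec x_*}$ gives $\snorm{2}{\vec x_t-\vec x_*}\le 2L/\mu$, hence $W_T\le 4\kappa L^2 T^3/\mu^2$ always. A refined high-probability bound $\snorm{2}{\vec x_t-\vec x_*}^2 = \wt O\!\big(\tfrac{L^2+\tau}{\mu^2 t}\log^2(1/\delta)\big)$ would be obtained from the standard projected-SGD one-step inequality
\[
\snorm{2}{\vec x_{t+1}-\vec x_*}^2 \ \le\ (1-2\mu\eta_t)\snorm{2}{\vec x_t-\vec x_*}^2 \;-\; 2\eta_t\langle \vec g_t-\E[\vec g_t],\,\vec x_t-\vec x_*\rangle \;+\; \eta_t^2\snorm{2}{\vec g_t}^2,
\]
where strong convexity is used to absorb $\langle\E[\vec g_t],\vec x_t-\vec x_*\rangle = \langle\nabla f(\vec x_t),\vec x_t-\vec x_*\rangle\ge\mu\snorm{2}{\vec x_t-\vec x_*}^2$; unrolling with $\eta_t = 2/(\mu(t+1))$, the gradient-norm terms are handled by property~(a), while the weighted noise sum $\sum_s\eta_s\langle\vec g_s-\E[\vec g_s],\vec x_s-\vec x_*\rangle$ is again a sub-exponential martingale by property~(b) and is shown to be lower order by a Freedman/peeling (or induction) argument, which closes the recursion. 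This yields $W_T = \wt O\!\big(\tfrac{\kappa(L^2+\tau)}{\mu^2}\,T^2\log^2(1/\delta)\big)$ with probability at least $1-\delta/2$.

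Finally I would invoke a Bernstein inequality for martingales with conditionally sub-exponential increments: for fixed $w>0$, $\Pr[\,Z_T\ge x\ \text{and}\ W_T\le w\,]\le \exp\!\big(-c\min\{x^2/w,\ x/(\zeta T)\}\big)$. Since $W_T$ is random, I would union-bound over dyadic values of $w$ between the refined and the crude bounds above (costing only an $O(\log T)$ factor), intersect with the event of the previous step, and take $w$ equal to the refined bound; this gives, with probability at least $1-\delta$,
\[
Z_T \ =\ \wt O\!\left(\sqrt{w\log(1/\delta)}\ +\ \zeta T\log(1/\delta)\right)\ =\ \wt O\!\left(\frac{L^2+\tau+\kappa+\zeta}{\mu}\cdot T\cdot\log(1/\delta)\right),
\]
after simplifying with AM--GM ($\sqrt{\kappa(L^2+\tau)}\le\kappa+L^2+\tau$) and absorbing lower-order terms. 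I expect the main obstacle to be the self-referential coupling: $Z_T$ is governed by $W_T$, which depends on the iterates, whose fluctuations are themselves driven by the very noise terms composing $Z_T$; resolving this cleanly requires faithfully reproducing the bookkeeping of \cite{harvey2019simple}, the only genuinely new ingredient being the switch to Bernstein-type tails and the tracking of the harmless extra $\zeta T\log(1/\delta)$ term.
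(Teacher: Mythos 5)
Your overall strategy differs from the paper's in the one place that matters, and the step you defer is exactly the crux. The paper never establishes a standalone high-probability bound on the iterates or on the predictable variation $V_T=\sum_t v_{t-1}$. Instead, Claim C.11 bounds $V_T$ by a linear functional of the martingale increments themselves, $V_T \le \sum_{t=1}^T \alpha_t d_t + \beta\log^2(1/\delta)$ with $\alpha_t = O(\kappa T/\mu)$ and $\beta = \wt O(\kappa(L^2+\tau)T^2/\mu^2)$ (the only new ingredient relative to \cite{harvey2019simple} being that the $\sum_t\snorm{2}{\vec g_t}^2$ term is controlled by property (a) rather than by sub-Gaussianity). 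The self-referential coupling you correctly identify at the end is then resolved in one stroke by the Generalized Freedman inequality (Lemma \ref{lem:c12}, from \cite{harvey2019tight}), which is tailored precisely to the situation where the variation is bounded by the martingale itself; the paper's only modification is restricting $\lambda \le \min(1/(2\alpha),1/\zeta')$, which contributes the extra $2\zeta' x$ term in the denominator and hence the $\zeta T$ term in the final bound --- consistent with your accounting of the sub-exponential correction.

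Your alternative --- first prove $\snorm{2}{\vec x_t-\vec x_*}^2 = \wt O\lp(\tfrac{L^2+\tau}{\mu^2 t}\log^2(1/\delta)\rp)$ with high probability, then apply a standard martingale Bernstein inequality with a dyadic union bound over the variance level --- is a recognized route, but the refined iterate bound itself requires controlling $\sum_s \eta_s\langle \vec g_s-\E[\vec g_s],\vec x_s-\vec x_*\rangle$, a martingale whose variation again depends on the very quantities $\snorm{2}{\vec x_s-\vec x_*}^2$ being bounded. You dispose of this with ``a Freedman/peeling (or induction) argument,'' which is to say the circularity is pushed one level down rather than broken; until that peeling argument is written out, the proof is not complete. (Two minor additional remarks: your route as described yields $\log^{3/2}(1/\delta)$ rather than $\log(1/\delta)$ in the leading term, which is harmless here since Lemma \ref{lem:high_prob_sgd} ultimately pays $\log^2(1/\delta)$ from Lemma \ref{lem:c4} anyway; and your crude bound $\snorm{2}{\vec x_t-\vec x_*}\le 2L/\mu$ and the AM--GM simplification at the end match the paper's.) If you want to complete your version, the cleanest fix is to replace the peeling step by exactly the paper's device: bound $V_T$ in terms of $\sum_t\alpha_t d_t$ via the one-step SGD inequality you already wrote down, and invoke Generalized Freedman instead of standard Bernstein plus a union bound.
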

The proof of Lemma \ref{lem:c5} relies on the following claims and lemma. We will follow the notation of the original proof and
define $d_t := t \cdot \langle \vec g_t - \E [ \vec g_t ], \vec x_t - \vec x_* \rangle$,
$
v_{t-1} := 2 \cdot \kappa \cdot t^2 \cdot \snorm{2}{\vec x_t - \vec x_*}^2
$
, and
$V_T := \sum_{t=1}^T v_{t-1}$.
\begin{claim}[Claim C.9] \label{claim:c9}
For any $\lambda \in (0, 1/(\zeta \cdot T))$,
$\E \lp[ \exp(\lambda \cdot t \cdot d_t ) \rp] \leq \exp \lp( \lambda^2 v_{t-1}/2 \rp)$, where the expectation is conditional on $\vec g_{t-1}, \ldots, \vec g_{1}$.
\end{claim}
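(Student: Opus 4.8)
Claim~\ref{claim:c9} asserts a one-step conditional moment-generating-function (MGF) bound on the weighted martingale increment $d_t$, and the plan is to derive it purely from hypothesis~(b) of Lemma~\ref{lem:high_prob_sgd}, with no other probabilistic input. First I would condition on the history $\vec g_1,\dots,\vec g_{t-1}$. With this history fixed, the iterate $\vec x_t$ is deterministic: it is the output of the projected update applied to $\vec x_{t-1}$ using gradients available before step $t$, so the only fresh randomness inside $d_t = t\,\langle \vec g_t-\E[\vec g_t],\vec x_t-\vec x_*\rangle$ is $\vec g_t$. Writing $\vec u:=\vec x_t-\vec x_*$ (a fixed vector, with $\|\vec u\|_2$ constant), the exponent $\lambda\,d_t = (\lambda t)\,\langle \vec g_t-\E[\vec g_t],\vec u\rangle$ appearing in Claim~\ref{claim:c9} (here $d_t$ itself carries the weight $t$, cf.\ its definition) is a deterministic scalar multiple --- by $\lambda t$ --- of precisely the inner-product increment that hypothesis~(b) controls.

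Then I would carry out the following steps. (i) Put $\lambda':=\lambda t$ and check admissibility for~(b): since $\lambda\in(0,1/(\zeta T))$ and $t\le T$, we get $\lambda'\in(0,1/\zeta)$, so~(b) may be invoked at parameter $\lambda'$ conditionally on $\vec g_1,\dots,\vec g_{t-1}$. (ii) Apply~(b): $\E\big[\exp(\lambda'\langle \vec g_t-\E[\vec g_t],\vec u\rangle)\,\big|\,\vec g_1,\dots,\vec g_{t-1}\big]\le\exp((\lambda')^2\kappa\|\vec u\|_2^2)$. (iii) Match the right-hand side with the target using $v_{t-1}:=2\kappa t^2\|\vec x_t-\vec x_*\|_2^2$: namely $(\lambda')^2\kappa\|\vec u\|_2^2=\lambda^2 t^2\kappa\|\vec x_t-\vec x_*\|_2^2=\tfrac12\lambda^2 v_{t-1}$. (iv) Substitute $\lambda\,d_t=\lambda'\langle \vec g_t-\E[\vec g_t],\vec u\rangle$ back into the left-hand side to conclude $\E[\exp(\lambda\,d_t)\mid \vec g_1,\dots,\vec g_{t-1}]\le\exp(\lambda^2 v_{t-1}/2)$, which is the claim.

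I do not expect a real obstacle; the argument is a change of variable inside hypothesis~(b) plus arithmetic. The one point that genuinely requires care --- and the reason~(b) is stated with the \emph{truncated} parameter range $\lambda\in(0,1/(\zeta T))$ rather than the full $(0,1/\zeta)$ --- is step~(i): unlike the sub-Gaussian increment bound used by \cite{harvey2019simple}, hypothesis~(b) is an MGF estimate valid only near the origin, so one must check that multiplying $\lambda$ by the iteration weight $t\le T$ keeps the parameter within $(0,1/\zeta)$; this is exactly why the factor $T$ enters the admissible range. A secondary, purely bookkeeping matter is the indexing convention: the ``$\vec x_t$'' appearing in $d_t$ and in hypothesis~(b) must be read as the iterate at which $\vec g_t$ is evaluated, which is measurable with respect to $\vec g_1,\dots,\vec g_{t-1}$, so the conditioning in the claim is well posed. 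With Claim~\ref{claim:c9} available, Lemma~\ref{lem:c5} then follows by the standard Freedman/Bernstein-type martingale argument of \cite{harvey2019simple}: the $v_{t-1}$ serve as a predictable quadratic-variation proxy for $Z_T=\sum_{t\le T}d_t$, and combining the uniform MGF control with a bound on $V_T=\sum_{t\le T}v_{t-1}$ --- using the diameter estimate $\|\vec x_t-\vec x_*\|_2=O(L/\mu)$ from $\mu$-strong convexity and $L$-Lipschitzness --- yields the high-probability bound on $Z_T$.
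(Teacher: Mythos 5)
Your proposal is correct and is exactly the paper's argument: the paper proves Claim C.9 in one line by substituting $\lambda' = \lambda t \le \lambda T < 1/\zeta$ into hypothesis (b) and matching $(\lambda t)^2\kappa\|\vec x_t-\vec x_*\|_2^2 = \lambda^2 v_{t-1}/2$ via the definition $v_{t-1}=2\kappa t^2\|\vec x_t-\vec x_*\|_2^2$. Your additional remarks on why the admissible range shrinks to $(0,1/(\zeta T))$ and on the indexing/measurability convention are accurate clarifications of the same argument, not a different route.
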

\begin{proof}
The claim follows by scaling both sides of property (b) by a factor of $t \leq T$.
\end{proof}
\begin{claim}[Lemma C.11] \label{claim:c11}
There exists non-negative constants
$\alpha_1, \cdots, \alpha_T = O \lp(  \kappa \cdot \frac{T}{\mu} \rp) $, and
$\beta = \wt O \lp( \kappa \cdot (L^2 + \tau) \cdot \frac{T^2}{\mu^2} \rp)$
such that for every $\delta \in (0, 1)$, $V_T \leq \sum_{t=1}^T \alpha_t d_t + \beta \log^2(1/\delta) $ with probability at least $1 - \delta$.
\end{claim}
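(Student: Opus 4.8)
The plan is to establish Claim~\ref{claim:c11} by unrolling the standard one-step potential recursion for projected SGD on a strongly convex objective, and then sorting the terms that appear into the two allowed buckets: the ones carrying $\snorm{2}{\vec g_t}^2$ go into $\beta\log^2(1/\delta)$ (via property~(a) of Lemma~\ref{lem:high_prob_sgd}), while the martingale-increment terms assemble into $\sum_t\alpha_t d_t$.

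First I would derive the recursion. Since $\vec x_* = \proj_{\mathcal Q}(\vec x_*)$ and the projection is nonexpansive, $\snorm{2}{\vec x_{t+1}-\vec x_*}^2 \le \snorm{2}{\vec x_t-\vec x_*-\eta_t\vec g_t}^2 = \snorm{2}{\vec x_t-\vec x_*}^2 - 2\eta_t\langle\vec g_t,\vec x_t-\vec x_*\rangle + \eta_t^2\snorm{2}{\vec g_t}^2$. Splitting $\vec g_t = \nabla f(\vec x_t) + (\vec g_t - \nabla f(\vec x_t))$, using $\mu$-strong convexity and the optimality of $\vec x_*$ over $\mathcal Q$ (so that $\langle\nabla f(\vec x_t),\vec x_t-\vec x_*\rangle \ge \mu\snorm{2}{\vec x_t-\vec x_*}^2$), and writing the remaining inner product in terms of $d_t$ (up to the sign convention that makes the $\alpha_t$ below nonnegative), one gets
\[
\snorm{2}{\vec x_{t+1}-\vec x_*}^2 \le (1-2\eta_t\mu)\snorm{2}{\vec x_t-\vec x_*}^2 + \eta_t^2\snorm{2}{\vec g_t}^2 + \tfrac{2\eta_t}{t}\,d_t .
\]
With $\eta_t = \tfrac{2}{\mu(t+1)}$ the contraction factor is $1-2\eta_t\mu = \tfrac{t-3}{t+1}$, which for $t\ge 4$ is positive and telescopes, $\prod_{s=j+1}^{t-1}\tfrac{s-3}{s+1} = \Theta(j^4/t^4)$. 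The $O(1)$ initial iterations, where this factor is nonpositive, I would handle directly: discard the nonpositive multiples of the nonnegative earlier potentials and bound $\snorm{2}{\vec x_1-\vec x_*}^2 \le (2L/\mu)^2$ from $L$-Lipschitzness plus strong convexity; these contribute only $O(1)$-many extra terms of the permitted shapes.

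Then I would unroll and substitute: $\snorm{2}{\vec x_t-\vec x_*}^2 \le O(t^{-4})\big[\snorm{2}{\vec x_1-\vec x_*}^2 + \sum_{j<t} j^4\big(\eta_j^2\snorm{2}{\vec g_j}^2 + \tfrac{2\eta_j}{j}d_j\big)\big]$. Substituting this into $V_T = 2\kappa\sum_{t=1}^T t^2\snorm{2}{\vec x_t-\vec x_*}^2$, swapping the order of summation, and using $\sum_{t>j}t^{-2} = O(1/j)$, the weight accumulated on the $j$-th source term is $O(\kappa j^3)$; with $\eta_j = \Theta(1/(\mu j))$ this is $O(\kappa/\mu^2)\cdot j\,\snorm{2}{\vec g_j}^2$ on the gradient part and $O(\kappa/\mu)\cdot j\,d_j$ on the increment part, so
\[
V_T \le O\!\Big(\frac{\kappa L^2}{\mu^2}\Big) + \frac{O(\kappa)}{\mu^2}\sum_{j=1}^T j\,\snorm{2}{\vec g_j}^2 + \frac{O(\kappa)}{\mu}\sum_{j=1}^T j\,d_j .
\]
The last sum is $\sum_t\alpha_t d_t$ with $\alpha_t = O(\kappa t/\mu) = O(\kappa T/\mu)$. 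For the rest, property~(a) gives $\sum_j\snorm{2}{\vec g_j}^2 = \wt O(\tau T\log^2(1/\delta))$ with probability $\ge 1-\delta$, hence $\sum_j j\,\snorm{2}{\vec g_j}^2 \le T\sum_j\snorm{2}{\vec g_j}^2 = \wt O(\tau T^2\log^2(1/\delta))$; together with the $L^2/\mu^2$ term (bounded crudely by $L^2T^2/\mu^2$) this is at most $\beta\log^2(1/\delta)$ for $\beta = \wt O(\kappa(L^2+\tau)T^2/\mu^2)$, as claimed.

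The step I expect to be the real work is the coefficient bookkeeping in the unrolled recursion — checking that the telescoped product contracts at rate $\Theta(j^4/t^4)$ (so $\sum_{t>j}t^2\cdot t^{-4}$ behaves like $1/j$) and that, after folding in the $\eta_j$ factors, every $d_t$-coefficient is nonnegative and uniformly $O(\kappa T/\mu)$ while the $\snorm{2}{\vec g_j}^2$-weights stay $O(\kappa j/\mu^2)$. A lesser nuisance is the handful of early iterations with a nonpositive contraction factor, which require a separate, crude treatment but only add lower-order terms that are absorbed into $\beta$.
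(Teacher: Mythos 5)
Your proposal is correct and follows essentially the same route as the paper: the paper simply defers the derivation of the intermediate inequality $V_T \leq \sum_t \alpha_t d_t + \kappa\sum_t O(T)\snorm{2}{\vec g_t}^2 + O(\kappa L^2/\mu^2)$ to the original proof of \cite{harvey2019simple} and then invokes property (a) exactly as you do, whereas you explicitly carry out the unrolling of the projected-SGD recursion that produces it. Your coefficient bookkeeping (the $\Theta(j^4/t^4)$ contraction, the resulting $\alpha_t = O(\kappa t/\mu)$ and $O(\kappa j/\mu^2)$ weights on $\snorm{2}{\vec g_j}^2$, and the crude handling of the first few iterations) checks out and recovers the stated $\beta$.
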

\begin{proof}
We follow the original proof until we reach the inequality
$$
V_T \leq \sum_{t=1}^T \alpha_t d_t + \kappa \cdot \sum_{t=1}^T O(T) \cdot \snorm{2}{\vec g_t}^2 + O(\kappa L^2/\mu^2) \, ,
$$
where $\alpha_t = O(\kappa \cdot T/\mu)$. By property (a), we have $\sum_{t=1}^T \snorm{2}{\vec g_t}^2 \leq \wt O\big( \tau T \log^2(1/\delta)\big)$ with probability $1 - \delta$. Hence, overall,
$$
\beta := \frac{1}{\log^2(1/\delta)} \cdot \lp( \kappa \cdot \sum_{t=1}^T O(T) \cdot \snorm{2}{\vec g_t}^2 + O(\kappa L^2/\mu^2) \rp) \leq \wt O \lp( \kappa \cdot (L^2 + \tau) \cdot \frac{T^2}{\mu^2} \rp).
$$
\end{proof}
\begin{lemma}[Generalized Freedman, \cite{harvey2019tight}, Theorem 3.3] \label{lem:c12}
Let $\{d_t, \mathcal F_t\}_{t=1}^T$ be a martingale difference sequence. Suppose that, for $t \in [T], v_{t-1}$ are non-negative $\mathcal{F}_{t-1}$-measurable random variables satisfying
$
\E \lp[  \exp(\lambda d_t) | \mathcal F_{t-1}  \rp] \leq \exp \lp( \lambda^2 v_{t-1} / 2 \rp) \, ,
$
for any $\lambda \in (0, 1/\zeta')$. Let $Z_T = \sum_{t=1}^T d_t$ and $V_T = \sum_{t=1}^T v_{t-1}$. Suppose there exists $\alpha_1, \cdots, \alpha_T, \beta \geq 0$ such that for every $\delta \in (0,1)$,
$
V_T \leq \sum_{t=1}^T \alpha_t d_t + \beta \log^2(1/\delta)
$
with probability at least $1 - \delta$. Let $\alpha \geq \max_{t \in [T]} \alpha_t$. Then
\begin{align*}
    \Pr[Z_T \geq x] \leq
    \exp \lp(  - \frac{x^2}{4 \alpha x + 2 \zeta' x + 8 \beta \log^2(1/\delta)} \rp) + \delta.
\end{align*}
\end{lemma}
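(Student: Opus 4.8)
\emph{Proof plan.} The plan is to run a Chernoff-type exponential-supermartingale argument, but with a \emph{two-parameter} exponential process whose second parameter is pinned down by a fixed-point equation so that, after invoking the self-bounding hypothesis $V_T \le \sum_t \alpha_t d_t + \beta\log^2(1/\delta)$, the emerging copy of $\sum_t \alpha_t d_t$ cancels exactly. Concretely, I would parametrize by a single real $\nu$ (ranging over $(0,1/\zeta')$ with additionally $\nu\le 1/\alpha$) and set $\mu := \nu^2/2$ and $\lambda := \nu - \tfrac{\alpha}{2}\nu^2$; one checks these are nonnegative and satisfy both $\lambda + \mu\alpha = \nu$ and the fixed-point identity $\mu = \tfrac12(\lambda+\mu\alpha)^2$. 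Since each $\alpha_t$ is a deterministic constant with $0 \le \alpha_t \le \alpha$, the number $\lambda + \mu\alpha_t$ lies in $(0,1/\zeta')$, so the conditional-MGF hypothesis applied with $\lambda+\mu\alpha_t$ in place of $\lambda$ gives $\E\lp[ \exp\lp( (\lambda+\mu\alpha_t)d_t - \tfrac12(\lambda+\mu\alpha_t)^2 v_{t-1} \rp) \,|\, \mathcal F_{t-1} \rp] \le 1$. Hence $M_t := \exp\lp( \sum_{s\le t}\lp[ (\lambda+\mu\alpha_s)d_s - \tfrac12(\lambda+\mu\alpha_s)^2 v_{s-1} \rp] \rp)$ is a nonnegative supermartingale with $M_0 = 1$, and Markov's inequality gives, for every $u \ge 0$,
\[
\Pr\lp[ \sum_{s=1}^{T}\lp( (\lambda+\mu\alpha_s)d_s - \tfrac12(\lambda+\mu\alpha_s)^2 v_{s-1} \rp) \ \ge\ u \rp] \ \le\ e^{-u}.
\]

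Next I would bound the random variable on the left from below. As $0 \le \lambda+\mu\alpha_s \le \nu$, we have $(\lambda+\mu\alpha_s)^2 \le \nu^2$, so (using $v_{s-1}\ge 0$) that random variable is at least $\lambda Z_T + \mu S_T - \tfrac{\nu^2}{2}V_T$, where $S_T := \sum_{t=1}^T \alpha_t d_t$. Let $A$ be the event $\{V_T \le S_T + \beta\log^2(1/\delta)\}$ furnished by the hypothesis, so $\Pr[A^c]\le\delta$. On $A$, substituting $V_T \le S_T + \beta\log^2(1/\delta)$ and collecting the $S_T$-terms,
\[
\lambda Z_T + \mu S_T - \tfrac{\nu^2}{2}V_T \ \ge\ \lambda Z_T + \lp(\mu - \tfrac{\nu^2}{2}\rp)S_T - \tfrac{\nu^2}{2}\beta\log^2(1/\delta) \ =\ \lambda Z_T - \mu\beta\log^2(1/\delta),
\]
where the last step uses the fixed-point identity $\mu = \nu^2/2$. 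Thus on $\{Z_T\ge x\}\cap A$ the left-hand variable of the first display is $\ge \lambda x - \mu\beta\log^2(1/\delta)$; applying the first display with this $u$ and using $\Pr[Z_T\ge x]\le\Pr[\{Z_T\ge x\}\cap A]+\delta$ yields, for each admissible $\nu$ and each $x>0$,
\[
\Pr[Z_T \ge x] \ \le\ \exp\lp( -h(\nu) \rp) + \delta, \qquad h(\nu) \ :=\ \lambda x - \mu\beta\log^2(1/\delta) \ =\ \nu x - \tfrac{\nu^2}{2}\lp(\alpha x + \beta\log^2(1/\delta)\rp).
\]

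It remains to optimize $\nu$ so that $h(\nu) \ge \frac{x^2}{4\alpha x + 2\zeta' x + 8\beta\log^2(1/\delta)}$. As a function of $\nu$, $h$ is a downward parabola with unconstrained maximizer $\nu^\star = x/\lp(\alpha x + \beta\log^2(1/\delta)\rp)$ and $h(\nu^\star) = \frac{x^2}{2(\alpha x + \beta\log^2(1/\delta))}$; moreover $\nu^\star \le 1/\alpha$ always. If $\nu^\star < 1/\zeta'$ I take $\nu = \nu^\star$, getting $h(\nu^\star) = \frac{x^2}{2(\alpha x + \beta\log^2(1/\delta))} \ge \frac{x^2}{4\alpha x + 2\zeta' x + 8\beta\log^2(1/\delta)}$. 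Otherwise $\nu^\star \ge 1/\zeta'$, equivalently $\zeta' x \ge \alpha x + \beta\log^2(1/\delta)$, and $h$ is increasing on $(0,1/\zeta')$; passing to the infimum of the bound as $\nu \uparrow 1/\zeta'$ gives $\Pr[Z_T\ge x] \le e^{-h(1/\zeta')} + \delta$ with $h(1/\zeta') = \tfrac{x}{\zeta'} - \tfrac{1}{2\zeta'^2}\lp(\alpha x + \beta\log^2(1/\delta)\rp) \ge \tfrac{x}{\zeta'} - \tfrac{x}{2\zeta'} = \tfrac{x}{2\zeta'} \ge \frac{x^2}{4\alpha x + 2\zeta' x + 8\beta\log^2(1/\delta)}$. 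Either way the claimed tail bound follows; the constants $4,2,8$ are chosen loose enough to absorb the non-attained endpoint.

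The step I expect to be the genuine obstacle is precisely the self-referential form of the hypothesis: the ``variance'' $V_T$ controlling the sub-exponential tail is bounded not by a constant but by the linear functional $\sum_t\alpha_t d_t$ of the very martingale increments, so a one-parameter Chernoff bound does not close on itself. The resolution---the coupled two-parameter process $M_t$ together with the quadratic fixed-point choice $\mu = \tfrac12(\lambda+\mu\alpha)^2$, which makes $\sum_t\alpha_t d_t$ vanish after the substitution---is the only delicate point; everything afterwards reduces to maximizing a concave quadratic over an interval, which is routine.
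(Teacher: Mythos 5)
Your argument is correct: the two-parameter tilting $\lambda+\mu\alpha_t$ with the fixed-point choice $\mu=\tfrac12(\lambda+\mu\alpha)^2$ makes the conditional-MGF hypothesis (valid only on $(0,1/\zeta')$) applicable term by term, the self-bounding event cancels $\sum_t\alpha_t d_t$ exactly, and your two-case optimization of the concave quadratic $h(\nu)=\nu x-\tfrac{\nu^2}{2}\lp(\alpha x+\beta\log^2(1/\delta)\rp)$ over $(0,1/\zeta')$ does dominate the stated constant $\tfrac{x^2}{4\alpha x+2\zeta' x+8\beta\log^2(1/\delta)}$ in both regimes (note that in the regime $\nu^\star\ge 1/\zeta'$ the constraint $\nu\le 1/\alpha$ is automatic since $\zeta'\ge\alpha$ there, which you use implicitly). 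This is, however, a different route from the paper's. The paper does not re-derive the inequality: it treats it as Theorem 3.3 of Harvey et al., observes that the original proof only invokes the MGF condition for $\lambda$ in a bounded range (there fixed in $[0,1/(2\alpha)]$), follows that proof verbatim up to the intermediate bound $\Pr\lp[Z_T\ge x \text{ and } V_T\le\sum_i\alpha_i d_i+\beta\log^2(1/\delta)\rp]\le\exp\lp(-\lambda(x-2\lambda\beta\log^2(1/\delta))\rp)$, and then simply replaces the final choice of $\lambda$ by $1/\lp(2\alpha+\zeta'+4\beta\log^2(1/\delta)/x\rp)\le\min\lp(1/(2\alpha),1/\zeta'\rp)$ before a union bound. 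What the paper's approach buys is brevity and a transparent statement of exactly what changes relative to the cited theorem; what yours buys is self-containedness (no need to audit the external proof to confirm it never uses large $\lambda$) plus, incidentally, slightly sharper exponents ($x^2/(2(\alpha x+\beta\log^2(1/\delta)))$ and $x/(2\zeta')$) than the quoted constants. Either is acceptable for the purpose the lemma serves here.
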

\begin{proof}[Proof Sketch]
The original theorem does not have the constrain on $\lambda$. Nevertheless, the proof only requires $\E \lp[  \exp(\lambda d_t) | \mathcal F_{t-1}  \rp] \leq \exp \lp( \lambda^2 v_{t-1}/2 \rp)$ to hold for $\lambda$ upper bounded by some constant $C$. In fact,  $\lambda$ is fixed to be in the interval $[0,1 / (2 \alpha)]$ at the beginning of the proof. Hence, if $\zeta' \leq 2\alpha$, the original proof holds. In general, we can fix $\lambda \leq \min ( 1/(2 \alpha), 1/\zeta')$. Then, we can follow the original proof until we reach the inequality
$$
\Pr \lp [  Z_T \geq x \text{ and } V_T \leq \sum_{i=1}^T \alpha_i d_i + \beta \log^2(1/\delta)  \rp] \leq \exp(- \lambda(x - 2 \lambda \beta \log^2(1/\delta))).
$$
Instead of picking $\lambda = 1/{ \lp( 2 \alpha + 4 \beta \cdot \log^2(1/\delta) /x \rp)} \leq \frac{1}{2 \alpha}$, we pick
$$
\lambda = 1/\lp( {2 \alpha + \zeta' + 4 \beta \cdot \log^2(1/\delta) /x} \rp) \leq \min \lp( \frac{1}{2 \alpha}, \frac{1}{\zeta'} \rp).
$$ For this specific choice of $\lambda$, we conclude
$$
\Pr \lp [  Z_T \geq x \text{ and } V_T \leq \sum_{i=1}^T \alpha_i d_i + \beta \log^2(1/\delta) \rp] \leq \exp \lp(  - \frac{x^2}{4 \alpha x + 2 \zeta' x + 8 \beta \log^2(1/\delta)} \rp).
$$
Then, the result follows by applying union bounds on the events $ \{Z_t \geq x \}$ and $\{  V_T \leq \sum_{t=1}^T \alpha_i d_i + \beta \cdot \log^2(1/\delta)\}$.
\end{proof}

\begin{proof}[Proof of Lemma \ref{lem:c5}]
Claim \ref{claim:c9} shows that
$
\E \lp[  \exp(\lambda d_t) | \mathcal F_{t-1}  \rp] \leq \exp \lp( \lambda^2 v_{t-1} / 2 \rp) \, ,
$
for all $\lambda \in ( 0, 1/(\zeta T) )$.
By Claim \ref{claim:c11}, we have for every $\delta \in (0,1)$,
$
V_T \leq \sum_{t=1}^T \alpha_t d_t + \beta \log^2(1/\delta)
$. Hence, we can plugin $\alpha := \max_t \alpha_t = O\lp(\kappa \cdot \frac{T}{\mu} \rp)$, $\beta := \wt O \lp( \kappa \cdot (L^2 + \tau) \cdot \frac{T^2}{\mu^2} \rp)$ and $\zeta' := \zeta \cdot T$ into Lemma \ref{lem:c12}. Furthermore, we can set
\begin{align*}
x &= \wt \Theta \lp( \alpha + \zeta' +  \sqrt{\beta \log^2(1/\delta)} \rp)
    \\ &\leq \wt \Theta \lp( \kappa \cdot \frac{T}{\mu} + \zeta \cdot T + \lp( \kappa + L^2 + \tau \rp) \cdot \frac{T}{\mu} \log(1/\delta) \rp)
    \\ &\leq \wt \Theta \lp(  (\kappa + L^2 + \tau + \zeta) \cdot \frac{T}{\mu} \log(1/\delta) \rp) \, ,
\end{align*}
where in the first inequality we use the fact $\sqrt{ab + ac} \leq O(a + b + c)$ for all $a,b,c > 0$.
This then gives
$
    \Pr[Z_T \geq x] \leq 2 \cdot \delta.
$
We then get the statement in Lemma \ref{lem:c5} by rescaling the failing probability.
\end{proof}
\begin{proof}[Proof of Lemma \ref{lem:high_prob_sgd}]
Substituting the bounds obtained from Lemmas \ref{lem:c4} and \ref{lem:c5} into Equation \eqref{eq:c5} then gives
\begin{align*}
    f(\sum_{t=1}^t \gamma_t \vec x_t) - f(\vec x_*)
    &\leq \wt O \lp ( \frac{L^2 + \tau + \kappa + \zeta}{T(T+1)} \cdot \frac{T}{\mu} \cdot \log(1/\delta) \rp )
    + \wt O \lp( \tau \cdot T \cdot \log^2(1/\delta)\rp) \\
    & \leq \wt O\lp ( \frac{\lp(L^2 + \tau + \kappa + \zeta \rp)}{\mu} \cdot \frac{\log^2(1/\delta)}{T} \rp).
\end{align*}

\end{proof}
\subsection{Proof of Proposition \ref{prop:discriminator_convergence}}
We then proceed to show that our gradient oracle for Discriminator does satisfy conditions (a) and (b) in Lemma \ref{lem:high_prob_sgd}. We need the following well-known result on concentration of polynomials of independent
Gaussian random variables. See, e.g., \cite{o2014analysis}.
\begin{lemma}[Gaussian Hypercontractivity] \label{lem:hyper}
  Let  $h(\vec x): \R^d \mapsto \R$ be a degree-m polynomial. Then,
  $$
  \Pr_{\vec x \sim \normal} \lp[ \abs{ h(\vec x) - \E_{\vec y \sim \normal} \lp[ h(\vec y) \rp] } \geq l\rp]
  \leq
  e^2 \exp \lp( - \lp( \frac{l^2}{C \Var_{\vec x \sim \normal}\lp[ h(\vec x) \rp] } \rp)^{1/m} \rp) \, ,
  $$
  where $C>0$ is an absolute constant.
\end{lemma}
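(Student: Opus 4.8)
The plan is to prove this tail bound by the moment method, with the Gaussian hypercontractive inequality as the only nontrivial ingredient. First I would reduce to the centered case: writing $g(\vec x) := h(\vec x) - \E_{\vec y \sim \normal}[h(\vec y)]$, the function $g$ is again a polynomial of degree at most $m$, with $\E_{\vec x \sim \normal}[g(\vec x)] = 0$ and $\E_{\vec x \sim \normal}[g(\vec x)^2] = \Var_{\vec x \sim \normal}[h(\vec x)] =: \sigma^2$, so it suffices to bound $\Pr_{\vec x \sim \normal}[|g(\vec x)| \geq l]$. The key input is the Gaussian hypercontractive (Bonami) inequality (see \cite{o2014analysis}): for every polynomial $g$ of degree at most $m$ and every real $q \geq 2$,
$$\lp( \E_{\vec x \sim \normal}\lp[ |g(\vec x)|^q \rp] \rp)^{1/q} \leq (q-1)^{m/2}\lp( \E_{\vec x \sim \normal}\lp[ g(\vec x)^2 \rp] \rp)^{1/2} \leq q^{m/2}\sigma \,.$$

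Given this, applying Markov's inequality to the nonnegative random variable $|g(\vec x)|^q$ yields, for every $q \geq 2$,
$$\Pr_{\vec x \sim \normal}\lp[ |g(\vec x)| \geq l \rp] = \Pr_{\vec x \sim \normal}\lp[ |g(\vec x)|^q \geq l^q \rp] \leq \frac{\E_{\vec x \sim \normal}[ |g(\vec x)|^q ]}{l^q} \leq \lp( \frac{q^{m/2}\sigma}{l} \rp)^q \,.$$
Next I would optimize over the free parameter $q$. Setting $s := (l^2/\sigma^2)^{1/m}$, the right-hand side equals $\exp\lp( -\frac{mq}{2}(\log s - \log q) \rp)$, which over $q > 0$ is minimized at $q = s/e$ with value $\exp(-ms/(2e))$. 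When $l$ is large enough that $s \geq 2e$ this choice of $q$ is admissible, and since $m \geq 1$ we obtain $\Pr_{\vec x \sim \normal}[|g(\vec x)| \geq l] \leq \exp(-ms/(2e)) \leq e^2\exp\lp( -(l^2/(C\sigma^2))^{1/m} \rp)$, provided the absolute constant $C$ is chosen so that $C^{1/m} \geq 2e/m$ for every $m \geq 1$; since $\sup_{m \geq 1}(2e/m)^m = e^2$ (attained at $m=2$), any $C \geq e^2$ works. When $s < 2e$ the optimal $q$ is infeasible: in the sub-range $s \in [2,2e)$ I would instead use $q = 2$ in the displayed bound, giving $\Pr_{\vec x \sim \normal}[|g(\vec x)| \geq l] \leq (2/s)^m \leq 1$, and check directly that $(2/s)^m \leq e^2\exp(-(l^2/(C\sigma^2))^{1/m})$ for $C \geq e^2$; while for $s < 2$ the bound is already vacuous, since $e^2\exp(-(l^2/(C\sigma^2))^{1/m}) = e^2\exp(-s/C^{1/m}) \geq e^2 e^{-2} = 1 \geq \Pr_{\vec x \sim \normal}[|g(\vec x)| \geq l]$, using $s/C^{1/m} \leq s < 2$. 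This low-deviation range is precisely what the leading factor $e^2$ in the statement absorbs.

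The substantive content is the hypercontractive inequality itself, which is standard and is quoted as a black box; the remaining work is the routine single-variable optimization over $q$ together with the bookkeeping needed so that one absolute constant $C$ serves every degree $m \geq 1$ and every deviation level $l$, so I do not anticipate any real obstacle beyond getting these constants consistent.
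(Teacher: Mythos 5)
Your proof is correct, and it is the standard derivation of this tail bound: the paper itself gives no proof, quoting the lemma as a known result from \cite{o2014analysis}, and your argument (hypercontractivity $\|g\|_q \le (q-1)^{m/2}\|g\|_2$, Markov on $|g|^q$, optimize $q \approx (l^2/\sigma^2)^{1/m}/e$, absorb the small-deviation regime into the leading factor $e^2$) is exactly the argument underlying that reference. The only place requiring care is the intermediate range $s \in [2,2e)$, where the crude bound $C^{1/m}\ge 1$ is not quite enough for $s$ near $2e$ and one must actually use $C^{1/m} = e^{2/m}$ in the inequality $2 + m\log(s/2) \ge s\,e^{-2/m}$; this does check out for all $m \ge 1$, so the claim stands.
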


For our case, following Algorithm \ref{alg:simplified}, the gradient oracle used by the Discriminator is given by
\begin{align} \label{eq:discriminator_gradient_oracle}
\vec g^{(t)} &= \1\{ \vec x^{(t)}\in T \} \frac{1}{1 + e^{k}} \cdot \flatten{\vec x^{(t)} {\vec x^{(t)}}^T}{1} + \1\{\vec y^{(t)} \in T \} \frac{1}{1 + e^{-q}} \cdot  \flatten{\vec y^{(t)} {\vec y^{(t)}}^T}{1},
\end{align}
where $k = {\vec x^{(t)}}^T \matr{A}^{(t)} \vec x^{(t)} + b^{(t)}, q = {\vec y^{(t)}}^T \matr{A} \vec{y}^{(t)} + b^{(t)} \, ,\vec x^{(t)} \sim \normal(\matr W) \, , \vec y^{(t)} \sim \normal(\SIGMA^{1/2})$. The next lemma shows that this specific gradient oracle satisfies condition (a) required by Lemma \ref{lem:high_prob_sgd}.
\begin{lemma} \label{lem:bounded_sum_square}
For any $\delta \in (0,1)$, we have $\sum_{t=1}^{M_{\DIS}} \snorm{2}{ {\vec g^{(t)}} }^2 = \wt O_c( d^2 M_{\DIS} \log^2(1/\delta) )$ with probability at least $1 - \delta$.
\end{lemma}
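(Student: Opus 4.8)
The plan is to bound each summand $\snorm{2}{\vec g^{(t)}}^2$ by a fixed degree-$4$ polynomial of the two Gaussian samples drawn at iteration $t$ (discarding the current Discriminator iterate entirely), and then to control the sum by Gaussian concentration together with a union bound. First I would use that, in \eqref{eq:discriminator_gradient_oracle}, both indicators and both sigmoid weights $1/(1+e^{k})$, $1/(1+e^{-q})$ lie in $[0,1]$ regardless of $(\matr A^{(t)},b^{(t)})$. Combined with $\snorm{2}{\vec a+\vec b}^2 \le 2\snorm{2}{\vec a}^2 + 2\snorm{2}{\vec b}^2$ and $\snorm{2}{\flatten{\vec x\vec x^T}{1}}^2 = \snorm{F}{\vec x\vec x^T}^2 + 1 = \snorm{2}{\vec x}^4 + 1$, this gives the deterministic pointwise bound
\[
\snorm{2}{\vec g^{(t)}}^2 \;\le\; 2\,\snorm{2}{\vec x^{(t)}}^4 + 2\,\snorm{2}{\vec y^{(t)}}^4 + 4 \,,
\]
where $\vec x^{(t)} \sim \normal(\matr W)$ and $\vec y^{(t)} \sim \normal(\SIGMA^{1/2})$ are the samples used at step $t$. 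Writing $\vec x^{(t)} = \matr W\vec z$ for a standard Gaussian $\vec z$, the quantity $\snorm{2}{\vec x^{(t)}}^4 = (\vec z^T\matr W^T\matr W\vec z)^2$ is a degree-$4$ polynomial of $\vec z$ with mean $O_c(d^2)$ and second moment $\E[\snorm{2}{\vec x^{(t)}}^8] = O_c(d^4)$, since $\matr W \in \mathcal Q_G$ and Assumption~\ref{initialization} give $\snorm{2}{\matr W},\snorm{2}{\SIGMA^{1/2}} = O_c(1)$ (Lemma~\ref{lem:generator_projection_property}); the same holds for $\vec y^{(t)}$.

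Next I would apply Gaussian hypercontractivity (Lemma~\ref{lem:hyper}) to the degree-$4$ polynomial $\snorm{2}{\vec x^{(t)}}^4$: because its variance is $O_c(d^4)$, there is an $O_c(1)$ constant such that $\Pr[\snorm{2}{\vec x^{(t)}}^4 \ge O_c(d^2)\,s^2] \le e^{2-s}$ for every $s \ge 1$. Setting $\delta' := \delta/(2M_{\DIS})$ and $s = 2 + \log(1/\delta')$ shows that, with probability at least $1-\delta'$, $\snorm{2}{\vec x^{(t)}}^4 \le O_c\big(d^2\log^2(1/\delta')\big)$, and likewise for $\snorm{2}{\vec y^{(t)}}^4$. (Equivalently one can bypass Lemma~\ref{lem:hyper}, use plain $\chi^2$ concentration for $\snorm{2}{\vec z}^2$ together with $\snorm{2}{\vec x^{(t)}}^2 \le \snorm{2}{\matr W}^2\snorm{2}{\vec z}^2$, and then square the resulting estimate, being careful to expand $(\snorm{2}{\vec x^{(t)}}^2)^2$ rather than first coarsening it.) A union bound over the $2M_{\DIS}$ samples appearing in the inner loop then yields, with probability at least $1-\delta$, that every $\snorm{2}{\vec x^{(t)}}^4$ and $\snorm{2}{\vec y^{(t)}}^4$ is at most $O_c\big(d^2\log^2(M_{\DIS}/\delta)\big)$, so summing the pointwise bound over $t$,
\[
\sum_{t=1}^{M_{\DIS}} \snorm{2}{\vec g^{(t)}}^2 \;\le\; M_{\DIS}\cdot O_c\!\big(d^2\log^2(M_{\DIS}/\delta)\big) \;=\; \wt O_c\!\big(d^2 M_{\DIS}\log^2(1/\delta)\big)\,,
\]
where the final equality uses that $M_{\DIS} = \wt O(d^2/\eps^2\cdot\log^2(1/\delta))$ is polynomial in $d$, $1/\eps$ and $\log(1/\delta)$, so $\log M_{\DIS}$ is polylogarithmic and is absorbed into the $\wt O$.

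There is no genuine obstacle; the two points requiring care are: (i) because $\snorm{2}{\vec x^{(t)}}^4$ has degree $4$, its tail is only a stretched exponential (the exponent $1/m = 1/4$ in Lemma~\ref{lem:hyper}), so each sample contributes a $\log^2(1/\delta)$ rather than a $\log(1/\delta)$ — which is precisely what the claimed bound allows; and (ii) one must check that the factor $M_{\DIS}$ introduced by the union bound disappears into the $\wt O(\cdot)$, which it does since $M_{\DIS}$ is polynomially bounded. A minor remark is that discarding the sigmoid and indicator weights is lossy in general but costs nothing here, as only an upper bound is needed; if a sharper constant were ever wanted one could instead apply a Bernstein-type inequality for sums of independent sub-Weibull variables to the i.i.d.\ sequence $\{\snorm{2}{\vec x^{(t)}}^4\}_t$ and concentrate the sum around $M_{\DIS}\cdot\E[\snorm{2}{\vec x^{(t)}}^4] = O_c(d^2 M_{\DIS})$.
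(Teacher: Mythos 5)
Your proposal is correct and follows essentially the same route as the paper's proof: bound the sigmoid and indicator weights by $1$ to reduce each summand to $\snorm{F}{\vec x^{(t)}{\vec x^{(t)}}^T}^2 = \snorm{2}{\vec x^{(t)}}^4$ (plus the analogous $\vec y$ term), compute its mean $O_c(d^2)$ and variance $O_c(d^4)$, apply the degree-$4$ Gaussian hypercontractivity bound of Lemma~\ref{thm:cabrey}'s companion (Lemma~\ref{lem:hyper}) to get an $O_c(d^2\log^2(1/\delta'))$ per-sample tail, and finish with a union bound over the $M_{\DIS}$ iterations with $\delta' = \delta/M_{\DIS}$. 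The only differences are cosmetic (explicit factors of $2$ from the triangle inequality and the $+1$ from the flattened bias coordinate), which the paper absorbs into the $\wt O_c$ notation.
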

\begin{proof}
First, notice that the sum is upper bounded by
\begin{align*}
    \sum_{t=1}^{M_{\DIS}} \snorm{2}{ \vec g^{(t)} }^2
    \leq
    \sum_{t=1}^{M_{\DIS}} \snorm{F}{ \vec x^{(t)} {\vec x^{(t)}}^T }^2
    +
    \sum_{t=1}^{M_{\DIS}} \snorm{F}{ \vec y^{(t)} {\vec y^{(t)}}^T }^2
    + 2 M_{\DIS}.
\end{align*}
Next, we will upper bound the expectation and variance of $\snorm{F}{\vec x^{(t)} {\vec x^{(t)}}^T}^2$ respectively (the bounds for $\snorm{F}{\vec y^{(t)} {\vec y^{(t)}}^T}^2$ can be obtained similarly). Recall that $\vec x^{(t)} \sim \normal(\matr W)$. As $\snorm{2}{\matr W} \leq \poly(c)$ by definition of the projection set, we can assume that
the data are generated by a standard normal, i.e., $\vec x \sim \normal(\matr I)$ by only losing a $\poly(c)$ factor in the upper bound. Hence, we have
$$
\E_{ {\vec x} \sim \normal(\matr I)} \lp[ \snorm{F}{ {\vec x} {\vec x}}^2 \rp] = \E_{\vec x \sim \normal(\matr I)} \lp[ \sum_{i,j} \vec x_i^2 \vec x_j^2  \rp]
\leq O(d^2) \, ,
$$
where the last inequality follows from standard bounds of moments of normal variables.
On the other hand,
\begin{align*}
    \Var_{ {\vec x} \sim \normal(\matr I)} \lp[ \snorm{F}{ \vec x \vec x^T }^2 \rp]
    &=
    \E_{ {\vec x} \sim \normal(\matr I)} \lp[ \snorm{F}{ \vec x \vec x^T }^4 \rp]
    - \lp( \E_{ {\vec x} \sim \normal(\matr I)} \lp[ \snorm{F}{ \vec x \vec x^T }^2 \rp] \rp)^2 \\
    &\leq
    \E_{\vec x \sim \normal(\matr I)} \lp[
    \lp( \sum_{i,j} \vec x_i^2 \vec x_j^2 \rp)^2
    \rp]
    \leq O(d^4) \, ,
\end{align*}
where the last inequality comes from the fact that $\lp( \sum_{i,j} \vec x_i^2 \vec x_j^2 \rp)^2$ equals the sum of $d^4$ eighth moments of normal variables.

We can then apply Lemma \ref{lem:hyper} with $l = \Theta_c(d^2 \log^2(1/\delta'))$ on the degree-4 polynomial $\snorm{F}{\vec x^{(t)} {\vec x^{(t)}}^T}^2$ and obtain
\begin{align*}
\text{Pr} \lp[ \abs{ \snorm{F}{\vec x^{(t)} {\vec x^{(t)}}^T}^2 - \E \lp[ \snorm{F}{\vec x^{(t)} {\vec x^{(t)}}^T}^2 \rp]} \geq l \rp]
\leq
e^2 \exp \lp(-
\lp(
\frac{l^2}{O_c(d^4)}
\rp)^{1/4}
\rp)
\leq \delta'.
\end{align*}
This implies with probability at least $1 - \delta'$,
\begin{align} \label{eq:hyper_bound}
\snorm{2}{\vec g^{(t)}}^2 \leq \snorm{F}{\vec x^{(t)} {\vec x^{(t)}}^T}^2 \leq O_c(d^2 \log^2(1/\delta)).
\end{align}
If we choose $\delta' = \delta/M_{\DIS}$ and takes the union bound over
the events that Equation \eqref{eq:hyper_bound} fails to hold for some $t \in [M_{\DIS}]$, we have
$$
\sum_{t=1}^{M_{\DIS}} \snorm{F}{ \vec x^{(t)} {\vec x^{(t)}}^T }^2 \leq O_c(d^2 T \log^2(1/\delta))
$$
with probability at least $1 - \delta$.
With a similar argument on $\sum_{t=1}^{M_{\DIS}} \snorm{F}{ \vec y^{(t)} {\vec y^{(t)}}^T }^2$, the statement follows.
\end{proof}
 Define the noise of the gradient oracle as $ {\vec z^{(t)}} =  {\vec g^{(t)}} - \E \lp[  {\vec g^{(t)}} \rp]$ conditioned on $\vec g_{t-1}, \cdots , \vec g_1$. Denote $\vec \theta^{(t)} = \flatten{\matr A^{(t)}}{b^{(t)}}$ and $\vec \theta_* = \flatten{\matr A_*}{b_*}$. The next lemma ensures that the noise satisfy condition (b) required by Lemma \ref{lem:high_prob_sgd}.
\begin{lemma} \label{lem:sub-exponential}
For all $\lambda \leq \frac{1}{O_c(1)}$, $t \in [M_{\DIS}]$, it holds
$$
\E \lp[ \exp \big( \lambda  \cdot \langle {\vec z^{(t)}} \, , \vec \theta_* - \vec \theta^{(t)}\rangle \big)  \rp]
\leq \exp \lp( \lambda^2 O_c(d) \snorm{2}{ \vec \theta_* - \vec \theta^{(t)}  }^2   \rp)
$$
\end{lemma}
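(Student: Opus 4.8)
The plan is to bound the moment generating function of the centered scalar $U := \langle \vec z^{(t)}, \vec \theta_* - \vec \theta^{(t)}\rangle$ by splitting it into a \emph{bounded} piece (controlled by Hoeffding's lemma, with no restriction on $\lambda$) plus a low‑scale \emph{sub‑exponential} piece (controlled by Gaussian hypercontractivity, Lemma~\ref{lem:hyper}). First I would fix notation: conditioning on the history pins down the iterate $(\matr A^{(t)}, b^{(t)})\in\mathcal Q_D$, so by \eqref{eq:discriminator_gradient_oracle} the oracle $\vec g^{(t)}$ depends only on the fresh, independent samples $\vec x^{(t)}\sim\normal(\matr W)$ and $\vec y^{(t)}\sim\normal(\SIGMA^{1/2})$. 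Writing $\matr V := \matr A_*-\matr A^{(t)}$, $v_0:=b_*-b^{(t)}$ (so $\vec\theta_*-\vec\theta^{(t)}=\flatten{\matr V}{v_0}$) and using $\langle(\vec u\vec u^T)^{\flat},\matr V^{\flat}\rangle=\vec u^T\matr V\vec u$, the inner product expands as
\begin{align*}
\langle\vec g^{(t)},\vec\theta_*-\vec\theta^{(t)}\rangle = W_x\big((\vec x^{(t)})^T\matr V\vec x^{(t)}+v_0\big) + W_y\big((\vec y^{(t)})^T\matr V\vec y^{(t)}+v_0\big),
\end{align*}
where $W_x,W_y\in[0,1]$ are the sigmoid/indicator weights. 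Since $\vec\theta^{(t)}$ and (by Lemma~\ref{lem:generator_projection_property}) $\vec\theta_*$ both lie in $\mathcal Q_D$, I have $\snorm{F}{\matr V}+\abs{v_0}\le\poly(c)$; write $N:=\snorm{2}{\vec\theta_*-\vec\theta^{(t)}}^2\le\poly(c)$, and note $U$ is centered by construction.

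For the decomposition: for $\vec x\sim\normal(\matr W)$ split the quadratic $Q_x:=\vec x^T\matr V\vec x+v_0=\mu_x+P_x$ into its mean $\mu_x:=\tr(\matr W\matr W^T\matr V)+v_0$ and a mean‑zero degree‑$2$ polynomial $P_x$, and define $\mu_y,P_y$ the same way for $\vec y$. Then $U=U_1+U_2$ with $U_1:=(W_x\mu_x+W_y\mu_y)-\E[W_x\mu_x+W_y\mu_y]$ and $U_2:=(W_xP_x+W_yP_y)-\E[W_xP_x+W_yP_y]$. Using $\snorm{2}{\matr W\matr W^T},\snorm{2}{\SIGMA}\le\poly(c)$ (Assumption~\ref{initialization}, Lemma~\ref{lem:generator_projection_property}) together with $\abs{\tr(\matr M\matr V)}\le\sqrt d\,\snorm{2}{\matr M}\snorm{F}{\matr V}$ for $\matr M=\matr W\matr W^T$ or $\SIGMA$, I get $\abs{\mu_x},\abs{\mu_y}\le\sqrt d\,\poly(c)\sqrt N$, while $\Var[Q_x]=2\snorm{F}{(\matr W\matr W^T)^{1/2}\matr V(\matr W\matr W^T)^{1/2}}^2\le\poly(c)\,N$ and likewise for $Q_y$. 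Now $U_1$ is a bounded random variable of range $\le\abs{\mu_x}+\abs{\mu_y}\le\sqrt d\,\poly(c)\sqrt N$, so Hoeffding's lemma gives $\E[e^{\lambda U_1}]\le\exp(\lambda^2(\abs{\mu_x}+\abs{\mu_y})^2/8)\le\exp(\lambda^2\poly(c)\,d\,N)$ for \emph{every} $\lambda$. For $U_2$: Lemma~\ref{lem:hyper} applied to the degree‑$2$ polynomials $P_x,P_y$ gives $\snorm{\psi_1}{P_x},\snorm{\psi_1}{P_y}\le\poly(c)\sqrt N$; since $\abs{W_xP_x}\le\abs{P_x}$, monotonicity of the Orlicz norm yields $\snorm{\psi_1}{W_xP_x}\le\snorm{\psi_1}{P_x}$, and after centering $\snorm{\psi_1}{U_2}\le\poly(c)\sqrt N$. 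The standard sub‑exponential MGF bound then gives $\E[e^{\lambda U_2}]\le\exp(\lambda^2\poly(c)\,N)$ for $\abs\lambda\le 1/(\poly(c)\sqrt N)$, and since $\sqrt N\le\poly(c)$ this interval contains $\{\abs\lambda\le 1/O_c(1)\}$. Combining by Cauchy--Schwarz, $\E[e^{\lambda U}]\le\E[e^{2\lambda U_1}]^{1/2}\E[e^{2\lambda U_2}]^{1/2}\le\exp\!\big(\lambda^2(\poly(c)d+\poly(c))N\big)=\exp\!\big(\lambda^2 O_c(d)\,\snorm{2}{\vec\theta_*-\vec\theta^{(t)}}^2\big)$ for $\abs\lambda\le 1/O_c(1)$, which is the claim.

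The hard part -- and the reason for exactly this decomposition -- is the $\Theta(\sqrt d)$ size of the quadratic‑form means $\mu_x,\mu_y$ combined with the fact that they are multiplied by \emph{random} weights $W_x,W_y$: if the weights were constant, $\mu_x$ would cancel on centering and $U$ would be an $O_c(1)$‑scale sub‑exponential variable with $\psi_1$‑norm $\poly(c)\sqrt N$, whereas with genuinely random weights the mean survives and $U$ really does have a variance proxy of order $d\,N$. A single sub‑exponential estimate for $U$ as a whole would then only be valid on the too‑small domain $\abs\lambda\lesssim 1/(\sqrt d\,\poly(c))$, which would feed a $d$‑dependent $\zeta$ into Lemma~\ref{lem:high_prob_sgd} and break the $\wt O(d^2/\eps^2)$ sample bound of Proposition~\ref{prop:discriminator_convergence}. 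Peeling off the bounded, $d$‑scale contribution $U_1$ and invoking Hoeffding -- which imposes no restriction on $\lambda$ -- is precisely what confines the $\lambda$‑constraint to the $d$‑free sub‑exponential remainder $U_2$, yielding the required domain $\abs\lambda\le 1/O_c(1)$.
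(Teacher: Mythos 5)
Your proof is correct, and it takes a genuinely different --- and in fact more careful --- route than the paper's. The paper bounds $\langle \vec z^{(t)}, \vec \theta_* - \vec \theta^{(t)}\rangle$ pointwise by $O_c\lp(\snorm{2}{\vec \theta_* - \vec \theta^{(t)}}\rp)\lp(\snorm{2}{\wt{\vec x}}^2+\snorm{2}{\wt{\vec y}}^2\rp)$ and then invokes the sub-exponential MGF of the resulting chi-squared sum (Equation \eqref{eq:chi_square_sub_exponential}). The weak point of that route is exactly the one you flag: the dominating quantity has mean $\Theta\lp(d\snorm{2}{\vec \theta_* - \vec \theta^{(t)}}\rp)$ while the dominated variable is centered, and the bound $\exp(\lambda^2 O(d))$ is only valid for the \emph{centered} chi-squared sum; re-centering reintroduces a factor $\exp\lp(\lambda\cdot\Theta_c\lp(d\snorm{2}{\vec \theta_* - \vec \theta^{(t)}}\rp)\rp)$ that does not fit under $\exp\lp(\lambda^2 O_c(d)\snorm{2}{\vec \theta_* - \vec \theta^{(t)}}^2\rp)$ when $\lambda\snorm{2}{\vec \theta_* - \vec \theta^{(t)}}$ is small. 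Your decomposition $U=U_1+U_2$ --- peeling off the bounded, $\sqrt d$-scale contribution of the quadratic-form means multiplied by the random sigmoid/indicator weights $W_x,W_y\in[0,1]$, handling it by Hoeffding's lemma with no restriction on $\lambda$, and treating the remaining, properly centered degree-$2$ part via hypercontractivity at scale $\poly(c)\snorm{2}{\vec \theta_* - \vec \theta^{(t)}}$ --- is precisely what keeps the variance proxy at $O_c(d)\snorm{2}{\vec \theta_* - \vec \theta^{(t)}}^2$ while retaining the dimension-free domain $\abs{\lambda}\le 1/O_c(1)$, which is what Lemma \ref{lem:high_prob_sgd} needs ($\kappa=O_c(d)$, $\zeta=O_c(1)$). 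The only costs relative to the paper's shorter argument are constant factors (the Cauchy--Schwarz split doubles $\lambda$, and the Hoeffding range is $2(\abs{\mu_x}+\abs{\mu_y})$ rather than $\abs{\mu_x}+\abs{\mu_y}$), all absorbed into the $O_c(\cdot)$ notation.
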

\begin{proof}
By definition of $\vec z^{(t)}$, we have
\begin{align} \label{eq:exp_ineq}
    \langle \vec z^{(t)}, \vec \theta_* - \vec \theta^{(t)} \rangle
    &\leq \abs{\langle \vec g^{(t)}, \vec \theta_* - \vec \theta^{(t)} \rangle}
    + \abs{\langle \E \lp[ \vec g^{(t)} \rp], \vec \theta_* - \vec \theta^{(t)} \rangle}
    \leq \abs{\langle \vec g^{(t)}, \vec \theta_* - \vec \theta^{(t)} \rangle} +
    O_c(\snorm{2}{\vec \theta_* - \vec \theta^{(t)}}) \nonumber \\
    &\leq
    \abs{{\vec x^{(t)}}^T \lp( \matr A^{(t)} - \matr A_*\rp) \vec x^{(t)}} +
    \abs{{\vec y^{(t)}}^T \lp( \matr A^{(t)} - \matr A_*\rp) \vec y^{(t)}} +
    O_c(\snorm{2}{\vec \theta_* - \vec \theta^{(t)}}) \nonumber \\
    &\leq
    O_c\lp(\snorm{2}{\vec x^{(t)}}^2 + \snorm{2}{\vec y^{(t)}}^2\rp) \cdot \snorm{2}{\vec \theta_* - \vec \theta^{(t)}} \nonumber \\
    &\leq
    O_c\lp(\snorm{2}{\wt{\vec x}}^2 + \snorm{2}{\wt{\vec y}}^2\rp) \cdot \snorm{2}{\vec \theta_* - \vec \theta^{(t)}} \, ,
\end{align}
where $\wt{\vec x} := \matr W^{-1} \vec x^{(t)}, \wt{\vec y} := \SIGMA^{-1/2} \vec y^{(t)}$ and the last inequality follows from the bounded $\ell_2$ norm of $\matr W^{-1}$ and $\SIGMA^{-1/2}$. Notice that $\snorm{2}{\wt{\vec x}}^2, \snorm{2}{\wt{\vec y}}^2$ are the sum of $2d$ independent random variables following the chi-squared distribution. Hence,
\begin{align} \label{eq:chi_square_sub_exponential}
    \E \lp[ \exp \lp( \lambda \lp( \snorm{2}{\wt{\vec x}}^2 + \snorm{2}{\wt{\vec y}}^2 \rp)  \rp) \rp]
    \leq
    \exp( \lambda^2 O(d))
\end{align}
for all $\lambda \leq 1/O(1)$. By Equation \eqref{eq:exp_ineq}, we have for all $\lambda > 0$
\begin{align*}
\E \lp[ \exp \big( \lambda  \cdot \langle {\vec z^{(t)}} \, , \vec \theta_* - \vec \theta^{(t)} \rangle \big)  \rp]
&\leq
\E \lp[ \exp \big( \lambda  \cdot  \snorm{2}{\vec \theta_* - \vec \theta^{(t)}} \cdot O_c \lp( \snorm{2}{\wt{\vec x}}^2 + \snorm{2}{\wt{\vec y}}^2 \rp) \big)  \rp].
\end{align*}
Recall that for any $\vec \theta \in \mathcal Q_D$, we have $\snorm{2}{\vec \theta_* - \vec \theta } \leq O_c(1)$. Hence, we can scale Equation \eqref{eq:chi_square_sub_exponential} by a factor of $\snorm{2}{\vec \theta - \vec \theta_*}^2$ and conclude for all $\lambda \in (0, 1/O_c(1))$
$$
\E \lp[ \exp \big( \lambda  \cdot \langle {\vec z^{(t)}} \, , \vec \theta_* - \vec \theta^{(t)} \rangle \big)  \rp]
\leq \exp \lp( \lambda^2 O_c(d) \snorm{2}{ \vec \theta_* - \vec \theta^{(t)}  }^2   \rp).
$$
\end{proof}
Now, we can substitute the bounds obtained into Lemma \ref{lem:high_prob_sgd} to finish the proof of Proposition \ref{prop:discriminator_convergence}. By Lemma \ref{lem:d-convexity}, the objective function $\LD$ is $\mu = \Omega_c(1)$ strongly concave; by Lemma \ref{lem:bounded_sum_square}, we have $\tau = O_c(d^2)$; by Lemma \ref{lem:sub-exponential}, we have $\kappa = O_c(d)$ and $\zeta = O_c(1)$. Hence, fixing the Generator parameter $\matr W \in \mathcal Q_G$, if we run the inner loop for $M_{\DIS}$ iterations, with probability at least $1 - \delta$, when the algorithm exits the inner loop, the parameters $\matr A, b$ satisfy
\begin{align*}
    \abs{ \LD(\matr A_*, b_*;\matr W)  - \LD(\matr A, b;\matr W) }
    \leq O_c\lp( d^2 \log^2(1/\delta) / M_{\DIS}  \rp).
\end{align*}
Again, by strong concavity, it holds $\snorm{F}{\matr A - \matr A_*} + \abs{b - b_*}\leq O_c(\sqrt{ d^2 \log^2(1/\delta)/M_{\DIS} })$. Setting $M_{\DIS} =O_c\lp( d^2 \log^2(1/\delta) / \eps^2 \rp)$, we then obtain the statement.
\section{Training the Generator}
For convenience, we will use the same notations of $h(\vec{x};\matr{A},b)$, $h(\vec{x}; \matr{W})$ and $f(y)$ as used in Section~\ref{sec:appendix_dis}.
\subsection{Proof of Lemma \ref{lem:standard_optimality}} \label{sec:standard_optimality}
The gradient of the Virtual Training Criteria $\VV(\matr W)$ is given as
\begin{align} \label{eq:standard-induced-gradient}
\nabla_{\matr{W}} \VV(\matr{W})
= \lp( \SIGMA^{-1} \matr{W} - \lp( \matr{W}^{-1} \rp)^{T} \rp) \E_{\vec{x} \sim \normal} \lp[ f^{'} (h(\matr{W}\vec{x};\matr{W})) \XX \1\{\matr W \matr x  \in T\} \rp].
\end{align}
Thus, the Frobenius Norm of the gradient can be lower bounded by
\begin{align*}
\snorm{F}{\nabla_{\matr{W}} \VV(\matr{W})}
\geq &
\snorm{F}{\lp( \SIGMA^{-1} \matr{W} - \lp( \matr{W}^{-1} \rp)^{T} \rp)}
\min_{ \snorm{2}{\vec{z}} = 1 }\E_{\vec{x} \sim \normal} \lp[ \1 \{\matr W \matr x  \in T\} f^{'}(h(\matr{W}\vec{x};\matr{W})) \lp( \vec z^T \vec x \rp)^2  \rp].
\end{align*}
We now try to lower bound $\min_{ \snorm{2}{\vec{z}} = 1 }\E_{\vec{x} \sim \normal} \lp[ \1 \{\matr W \matr x  \in T\} f^{'} (h(\matr{W}\vec{x};\matr{W})) \lp( \vec z^T \vec x \rp)^2   \rp]$. Using the property that $f^{'}(y) = \sigma(y)$ is positive and monotonically increasing, we get the following inequality.
\begin{align*}
&\E_{\vec{x} \sim \normal} \lp[ \1 \{\matr W \matr x  \in T\} f^{'} (h(\matr{W}\vec{x};\matr{W})) \lp( \vec z^T \vec x \rp)^2  \rp]
\\
&\geq f^{'}(r) \gamma \E_{\vec{x} \sim \normal} \lp[ \1\{ h(\matr{W}\vec{x};\matr{W}) \geq r \} \1\{ \lp( \vec z^T \vec x \rp)^2 \geq \gamma\}  \1 \{\matr W \matr x  \in T\} \rp].
\end{align*}
By Fact \ref{fact:abs_expectation} and Lemma \ref{lem:generator_projection_property}, we can upper bound $\E_{\vec{x} \sim \normal} \abs{h(\matr{W}\vec{x};\matr{W})}$ by
\begin{align*}
\E_{\vec{x} \sim \normal} \abs{h(\matr{W}\vec{x};\matr{W})}
&\leq \frac{1}{2} \E_{\vec{x} \sim \normal} \lp[ \abs{\vec{x}^T \lp( \matr{I} - \matr{W}^T \SIGMA^{-1} \matr{W} \rp) \vec{x}} \rp] + \frac{1}{2} \abs{\log \det \lp( \matr{W}^T \SIGMA^{-1} \matr{W} \rp)} \\
&\leq \frac{1}{\sqrt 2} \invfdist + \abs{\log \det \lp( \matr{W}^T \SIGMA^{-1/2}\rp)} \\
&\leq \poly(c).
\end{align*}
Thus, if we choose $r<0$, by Markov's Inequality, we have
\begin{align*}
& \normal \lp( h(\matr{W}\vec{x};\matr{W}) \leq r \rp)
\leq
\normal \lp( \abs{h(\matr{W}\vec{x};\matr{W})} \geq \abs{r} \rp)
\leq \frac{1}{\abs{r}} \E_{\vec{x} \sim \normal} \abs{h(\matr{W}\vec{x};\matr{W})}.
\end{align*}
By Lemma \ref{lem:non_trivial_mass}, the Generator's mass in the set $T$ is always lower bounded by some absolute constant $k_{c}$ that depends only on $c$. By setting $r= -\frac{4}{k_{c}} \E_{\vec{x} \sim \normal} \abs{h(\matr{W}\vec{x};\matr{W})}$, we have $\normal \lp( h(\matr W\vec{x};\matr W) \leq r \rp) \leq k_{c}/4$.
On the other hand, for the degree $2$ polynomial $\lp(\vec z^T \vec x\rp)^2$,
we can again use the Gaussian anti-concentration of polynomials (Lemma~\ref{thm:cabrey}).
We choose
$$\gamma = \frac{1}{2} \lp( \frac{k_{c}}{8C} \rp)^2 \E_{\vec{x} \sim \normal} \lp[ \lp(\vec z^T \vec x\rp)^2 \rp] \, ,$$
and therefore, we have $\normal\lp( \lp(\vec z^T \vec x\rp)^2 \leq \gamma \rp) \leq k_{c}/4$.
Thus, by Union Bound, we conclude
\begin{align*}
&\E_{\vec{x} \sim \normal}\lp[ \1\{ h(\matr{W}\vec{x};\matr{W}) \geq r \} \1\{ \lp( \vec z^T \vec x \rp)^2 \geq \gamma \} \1\{ \matr W \matr{x} \in T \} \rp] \\
&\geq k_c - k_c/4 - k_c4
\geq k_c/2.
\end{align*}
Using the inequality $f^{'}(y) = \frac{1}{1 + \exp(-y)}\geq e^y/2$ when $y<0$, we obtain the bound
\begin{align*}
\min_{ \snorm{2}{\vec{z}} = 1 }\E_{\vec{x} \sim \normal} \lp[ f^{'}(h(\matr{W}\vec{x};\matr{W})) \lp(\vec z^T \vec x\rp)^2 \rp]
\geq \Omega_{c}(1).
\end{align*}
Therefore, given $\snorm{F}{\nabla_{\matr{W}} \VV(\matr{W})} \leq \eps$, it holds
\begin{align*}
\snorm{F}{ \SIGMA^{-1/2} \lp( \matr{W} \matr{W}^T  \rp) \SIGMA^{-1/2} - I  }
\leq  \snorm{F}{\lp( \SIGMA^{-1} \matr{W} - \lp( \matr{W}^{-1} \rp)^{T} \rp)} c^2
\leq O_{c}(\eps).
\end{align*}
Using Pinsker's inequality (and the exact expression of Kullback-Leibler divergence for normal distributions) we have
\begin{align*}
\dtv &(\normal( {\matr W}), \normal(\SIGMA^{1/2}) \leq
\snorm{F}{ \SIGMA^{-1/2}  \lp(  {\matr W}  {\matr W}^T  \rp) \SIGMA^{-1/2} - \matr I}
\leq
O_c(\eps)\,.
\end{align*}
Using the data processing inequality it follows that the total variation distance between
the transformed distributions $p{(\matr{W}, \phi)}$, $p{(\matr W_*, \phi)}$
is small, i.e.,
$
 \dtv \lp( p{(\matr{W}, \phi)}, p{(\matr W_*, \phi)} \rp) \leq O_{c}(\eps) \,.
$

Next, we consider the case when $\matr W$ lies on the boundary of $\mathcal Q_G$. For convenience, denote $\matr X = \E_{\vec{x} \sim \normal} \lp[ \1\{\matr{W} \vec x \in T\} f^{'} (h(\matr{W}\vec{x};\matr{W})) \XX \rp]$. Then, the gradient can be written as
\begin{align}
\nabla_{\matr{W}} \VV(\matr{W})
=   \lp( \SIGMA^{-1} \matr{W} - \matr{W}^{-1}  \rp) \matr X.
\end{align}
Consider the Singular Value Decomposition $\matr W^T \SIGMA^{-1/2} = \matr U \matr \Lambda \matr V$.
Since $\matr W$ is a first order stationary point, it holds
\begin{align} \label{eq:inward_gradient}
 \langle \nabla \VV (\matr W), \matr W - \SIGMA^{1/2} \matr V^T  \matr U^T \rangle \leq \eps  \snorm{F}{\matr W - \SIGMA^{1/2} \matr V^T  \matr U^T}.
\end{align}
By expanding the inner product, we obtain the lower bound
\begin{align} \label{eq:trace_bound}
\langle \nabla \VV (\matr W), \matr W - \SIGMA^{1/2} \matr V^T  \matr U^T \rangle
&=
\text{Tr} \lp( \matr X  \lp( \matr W^T \SIGMA^{-1} - \matr{W}^{-1}  \rp)   \lp( \matr W - \SIGMA^{1/2}\matr V^T  \matr U^T \rp) \rp) \nonumber \\
&= \text{Tr} \lp( \matr X \lp( \matr W^T \SIGMA^{-1/2} - \matr W^{-1} \SIGMA^{1/2} \rp)  \lp( \SIGMA^{-1/2} \matr W - \matr V^T \matr U^T \rp) \rp) \nonumber \\
&= \text{Tr} \lp( \matr X \matr U \lp( \matr \Lambda - \matr \Lambda^{-1} \rp) \matr V   \matr V^T \lp( \matr \Lambda - \matr I \rp) \matr U^T \rp) \nonumber \\
&\geq \lambda_{\min}(\matr X) \text{Tr} \lp( \matr \Lambda^2 - \matr \Lambda - \matr I + \matr \Lambda^{-1} \rp).
\end{align}
From the discussion of the previous case where $\matr W$ is an interior point, we have $\lambda_{\min}(\matr X) \geq \Omega_c(1)$. Hence,
combining Equation \eqref{eq:trace_bound} with Equation \eqref{eq:inward_gradient} gives
$$ \text{Tr} \lp( \matr \Lambda^2 - \matr \Lambda - \matr I + \matr \Lambda^{-1} \rp) \leq O_c(\eps) \snorm{F}{\matr W - \SIGMA^{1/2} \matr V^T  \matr U^T}.$$
Notice that $\text{Tr} \lp( \matr \Lambda^2 - \matr \Lambda - \matr I + \matr \Lambda^{-1} \rp)$ is 2-strongly convex with respect to $\matr \Lambda$ and minimizes at $\matr \Lambda = \matr I$.\\
Hence, using convexity, we get
\begin{align} \label{eq:rhs}
\snorm{F}{\matr \Lambda - \matr I}^2 \leq   O_c(\eps) \snorm{F}{\matr W - \SIGMA^{1/2} \matr V^T  \matr U^T}.
\end{align}
On the other hand, it holds
\begin{align*}
\snorm{F}{\matr W - \SIGMA^{1/2} \matr V^T  \matr U^T} &=
\snorm{F}{ \lp( \matr W^T \SIGMA^{-1/2} - \matr U \matr V \rp) \SIGMA^{1/2}} \leq \snorm{F}{ \matr \Lambda - \matr I } \snorm{2}{\SIGMA^{1/2}}.
\end{align*}
By Assumption \ref{initialization}, we have $\snorm{2}{\SIGMA} \leq (1+c)$. Hence,
\begin{align} \label{eq:lhs}
    \snorm{F}{\matr \Lambda -\matr I} \geq \Omega_c(1) \snorm{F}{\matr W - \SIGMA^{1/2} \matr V^T  \matr U^T}.
\end{align}
Combining Equation \eqref{eq:rhs} and \eqref{eq:lhs}, we then have
$$
\snorm{F}{\matr W - \SIGMA^{1/2} \matr V^T  \matr U^T} \leq O_c(\eps).
$$
This therefore implies
\begin{align*}
\snorm{F}{\SIGMA^{-1/2}  \lp( \matr W \matr W^T \rp) \SIGMA^{-1/2} - \matr I } \leq O_c(\eps) \,
\end{align*}
since the expression is $O_c(1)$-Lipchitz with respect to $\matr W$ and the expressions evaluates to $0$ when $\matr W = \SIGMA^{1/2} \matr V^T  \matr U^T$. The rest of the proof is then identical to the case when $\matr W$ is an interior point.

\subsection{Proof of Lemma \ref{lem:standard_cross_lip}}
The actual gradient used in training takes a similar form as Equation \eqref{eq:standard-induced-gradient}. The difference is that the Discriminator has now parameters $\matr A,b$ instead of the optimal $\matr A_*, b_*$
In particular, the expected value of the training gradients are given by
\begin{align*}
\nabla_{\matr W} \LG \lp( \matr W; \matr A, b \rp)
=
2 \matr A \matr W \E_{\vec x \sim \normal} \lp[
f^{'}\lp( h \lp( \matr W \vec x; \matr A, b \rp) \rp) \1\{ \matr W \vec{x} \in T \} \vec x \vec x^T
\rp].
\end{align*}
We proceed to compute the expression's derivatives with respect to $\matr A$ and $b$. For $\matr A$, we have
\begin{align*}
\nabla_{\matr A} \nabla_{\matr W} \LG \lp( \matr W; \matr A, b \rp)
&= 2
\E_{\vec x \sim \normal} \lp[
f^{'}\lp( h \lp( \matr W \vec x; \matr A, b \rp) \rp) \1\{ \matr W \vec{x} \in T \}  \lp( \matr W \XX \matr W^T \rp) \otimes \lp( \matr A \matr W \XX \rp)
\rp] \\
&+
2
\E_{\vec x \sim \normal} \lp[
f^{''}\lp( h \lp( \matr W \vec x; \matr A, b \rp) \rp) \1\{ \matr W \vec{x} \in T \}
\lp( \XX \matr W^T \rp) \otimes \matr I
\rp].
\end{align*}
Notice that $\matr A$ is a symmetric matrix but it is not necessarily positive semi-definite. Nevertheless, using Fact \ref{fact:sym_decompose}, we can write it as $\matr A = \matr A^{-} + \matr A^{+}$, where $\matr A^{-}$ is negative semi-definite, $\matr A^{+}$ is positive semi-definite and both have their l2-norms bounded by $\snorm{2}{\matr A}$. Then, by splitting the expressions with triangle inequality, we can without loss of generality assume $\matr A$ is positive semi-definite by losing a constant factor in the upper bound.
Then, we replace all the non-negative scalar-valued $f^{'}(\cdot)$ and $f^{''}(\cdot)$ functions with their upper bound $1$. Lastly, using linearity of expectation, we take expectation over terms involving $\vec x$, which gives $\E_{\vec x \sim \normal(\matr I)} \lp[ \vec x \vec x^T \rp] = \matr I$.
Hence, we obtain
\begin{align*}
\snorm{2}{\nabla_{\matr A} \nabla_{\matr W} \LG \lp( \matr W; \matr A, b \rp)}
&\leq
4 \snorm{2}{ \lp( \matr W \matr W^T \rp) \otimes \lp( \matr A \matr W \rp) }
+
2 \snorm{2}{ \matr W^T \otimes \matr I }.
\end{align*}
Since $\snorm{2}{\matr A},\snorm{2}{\matr W} \leq \poly(c)$, the l2-norm is bounded above by $\poly(c)$.\\
For $b$, we have
$
\frac{\partial}{\partial b} \nabla_{\matr W}  \LG \lp( \matr W; \matr A, b \rp) =
2 \matr A \matr W \E_{\vec x \sim \normal} \lp[
f^{''}\lp( h \lp( \matr W \vec x; \matr A, b \rp) \rp) \1\{ \matr W \vec{x} \in T \} \vec x \vec x^T
\rp].
$
Similarly, the norm can be upper bounded by $\poly(c)$. Hence, overall, the training gradient is $O_c(1)$-Lipchitz with respect to $\matr A, b$.
\subsection{Proof of Lemma \ref{lem:standard_bounded_variance}} \label{sec:standard_bounded_variance}
For Generator, it is easy to see that the gradient oracle takes the form
\begin{align*}
\vec g_{\GEN} = \nabla_{\matr W} \log\lp(1 - D\lp( \phi(\matr W \vec x); \matr A, b \rp)\rp) = \1\{\matr W \vec x \in T\} f^{'}( h(\matr W \vec{x}; \matr A, \matr b) ) 2 \matr{A}\matr{W}\XX \, ,
\end{align*}
where $\vec x \sim \normal(\matr I)$.
Hence, we can upper bound the square $\ell_2$-norm of it by
\begin{align*}
\E_{\vec{x} \sim \normal} \lp[ \snorm{2}{\vec g_{\GEN}}^2 \rp]
& =
\E_{\vec{x} \sim \normal} \lp[ \snorm{F}{\1\{\matr W \vec x \in T\} f^{'}( h(\matr W \vec{x}; \matr A, \matr b) ) 2 \matr{A}\matr{W}\XX}^2 \rp]\\
& \leq
4 \snorm{2}{\matr{A}\matr{W}}^2 \E_{ \vec{x} \sim \normal } \snorm{F}{\XX}^2
\leq
4 \snorm{2}{\matr{A}\matr{W}}^2 \lp(d^2 + 2d\rp).
\end{align*}
By definition of the projection sets $\mathcal Q_G$ and $\mathcal Q_D$, we have $\snorm{2}{\matr A},\snorm{2}{\matr W} \leq \poly(c)$. Hence, it holds $\E_{\vec{x} \sim \normal} \lp[ \snorm{2}{\vec g_{\GEN}}^2 \rp] \leq O_c(d^2)$.

\subsection{Proof of Lemma \ref{lem:biased_strongly_convex}} \label{sec:biased_strongly_convex}
In this section, we prove our main optimization tool: a  lemma stating the convergence of Biased Stochastic Gradient Descent(BSGD). Technically
its proof is standard (similar to the work of~\cite{ghadimi2016mini}) and we provide it here for completeness. In the setting, we try to optimize a function $f(\vec x)$ when only a biased gradient estimator $\xi(\vec x)$ of $\nabla f(\vec x)$ is provided. In particular, we study the following Projected Stochastic Gradient Descent Algorithm under a convex set $\mathcal Q$.

\begin{algorithm}[H]
  \caption{Biased PSGD for $f(\bw)$}
  \label{alg:random_psgd}
  Procedure: BPSGD({$f, M, \beta$})
  \begin{algorithmic}[1]
    \State Sample the stopping time $m$ uniformly from $\{1,\ldots, M\}$
    \For{$i = 1, \dots, m$}
    \State Sample $\vec \xi^{(i)}$.
     $\triangleright \snorm{2}{\E[\vec \xi] - \nabla_{\vec w}f(\vec{w}^{(i)})} < \alpha$
    \State ${\vec w}^{(i+1/2)} \gets {\vec w}^{(i)} - \beta \vec \xi^{(i)}$.
    \State ${\vec w}^{(i+1)} \gets \argmin_{\vec w \in \mathcal{Q}}\snorm{2}{\vec w - \vec w^{(i+1/2)}}$.
    \EndFor
  \end{algorithmic}
\end{algorithm}
Notice that, since we do not require the objective function $f(\vec x)$ to be convex,
we can only guarantee convergence to stationary points of the objective function.

We will use the following Lemma which is standard for non-convex projected gradient descent.
\begin{lemma}
\label{lem:convex_projection}
Assume function $f$ is $L$-smooth. Consider the gradient mapping $g_{\mathcal{Q}}^{\eta}(\vec{w}, \nabla f (\vec{w})) = \frac{1}{\eta} (\vec{w} - \proj_{\mathcal{Q}}(\vec{w} - \eta \nabla f(\vec{w})))$. It holds $g^{\eta}_{\mathcal{Q}}(\vec{w}, \nabla f (\vec{w})) \geq \eps \frac{1}{1 + L \eta}$ if there exists $\vec{u} \in \mathcal{Q}$ such that $\frac{1}{ \snorm{2}{\bar{\vec{w}} - \vec{u}} }\langle \nabla f(\bar{\vec{w}}), \bar{\vec{w}} - \vec{u} \rangle \geq \eps$, where $\bar{\vec{w}} = \proj_{\mathcal{Q}}(\vec{w} - \eta \nabla f(\vec{w}))$.
\end{lemma}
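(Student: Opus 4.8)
The plan is to establish the standard estimate that a single projected-gradient step controls the norm of the gradient mapping: if the point produced by one step is far from being stationary, the step itself must have been large. The only tools I expect to need are the variational (obtuse-angle) characterization of the Euclidean projection and the $L$-smoothness of $f$. Throughout, abbreviate $g := g_{\mathcal{Q}}^{\eta}(\vec w, \nabla f(\vec w)) = \tfrac{1}{\eta}(\vec w - \bar{\vec w})$, so that $\vec w - \bar{\vec w} = \eta g$ and $\snorm{2}{\vec w - \bar{\vec w}} = \eta\snorm{2}{g}$; the goal is to show $\snorm{2}{g} \ge \eps/(1+L\eta)$.

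First I would write down the optimality condition for the projection defining $\bar{\vec w} = \proj_{\mathcal{Q}}(\vec w - \eta\nabla f(\vec w))$: for every $\vec u \in \mathcal{Q}$,
\[
\langle \vec w - \eta\nabla f(\vec w) - \bar{\vec w}, \; \vec u - \bar{\vec w}\rangle \le 0 .
\]
Substituting $\vec w - \bar{\vec w} = \eta g$ and rearranging turns this into $\langle \nabla f(\vec w), \bar{\vec w} - \vec u\rangle \le \langle g, \bar{\vec w} - \vec u\rangle \le \snorm{2}{g}\,\snorm{2}{\bar{\vec w} - \vec u}$. This already bounds the directional derivative of $f$ \emph{at $\vec w$} along $\bar{\vec w} - \vec u$ in terms of $\snorm{2}{g}$; the remaining work is to replace $\nabla f(\vec w)$ by $\nabla f(\bar{\vec w})$, since the hypothesis of the lemma is phrased at $\bar{\vec w}$.

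Next I would use $L$-smoothness to make that replacement. Splitting
\[
\langle \nabla f(\bar{\vec w}), \bar{\vec w} - \vec u\rangle = \langle \nabla f(\bar{\vec w}) - \nabla f(\vec w),\, \bar{\vec w} - \vec u\rangle + \langle \nabla f(\vec w), \bar{\vec w} - \vec u\rangle ,
\]
the first term is at most $\snorm{2}{\nabla f(\bar{\vec w}) - \nabla f(\vec w)}\,\snorm{2}{\bar{\vec w} - \vec u} \le L\,\snorm{2}{\vec w - \bar{\vec w}}\,\snorm{2}{\bar{\vec w} - \vec u} = L\eta\,\snorm{2}{g}\,\snorm{2}{\bar{\vec w} - \vec u}$, and the second is bounded by $\snorm{2}{g}\,\snorm{2}{\bar{\vec w} - \vec u}$ from the previous step; hence $\langle \nabla f(\bar{\vec w}), \bar{\vec w} - \vec u\rangle \le (1+L\eta)\,\snorm{2}{g}\,\snorm{2}{\bar{\vec w} - \vec u}$.

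Finally I would combine this with the hypothesis. The witness $\vec u$ satisfies $\langle \nabla f(\bar{\vec w}), \bar{\vec w} - \vec u\rangle \ge \eps\,\snorm{2}{\bar{\vec w} - \vec u}$ (if $\bar{\vec w} = \vec u$ the hypothesis is vacuous, so we may assume $\snorm{2}{\bar{\vec w} - \vec u} > 0$), and dividing both bounds through by $\snorm{2}{\bar{\vec w} - \vec u}$ yields $\eps \le (1+L\eta)\,\snorm{2}{g}$, which is exactly the claim. I do not expect a genuine obstacle here, as the argument is entirely routine; the only points requiring care are getting the sign right in the projection inequality and keeping the two error contributions — the one from comparing $\nabla f(\vec w)$ with the gradient mapping, and the one from smoothness — in the right order so that the constant comes out as $1 + L\eta$ rather than something weaker.
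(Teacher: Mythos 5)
Your proof is correct. The paper does not actually supply a proof of this lemma --- it is cited as ``standard for non-convex projected gradient descent'' and used only through its contrapositive in the proof of Lemma \ref{lem:biased_strongly_convex} --- and your argument is precisely the standard one: the variational inequality $\langle \vec w - \eta\nabla f(\vec w) - \bar{\vec w},\, \vec u - \bar{\vec w}\rangle \le 0$ for the projection gives $\langle \nabla f(\vec w), \bar{\vec w} - \vec u\rangle \le \snorm{2}{g}\snorm{2}{\bar{\vec w}-\vec u}$, and $L$-smoothness transfers the bound from $\nabla f(\vec w)$ to $\nabla f(\bar{\vec w})$ at the cost of the extra $L\eta\snorm{2}{g}$ term, yielding the stated constant $1+L\eta$. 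No gaps.
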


\begin{proof}[Proof of Lemma \ref{lem:biased_strongly_convex}]
  Consider the update before the projection step
  $
  \vec w^{(i+1/2)} = {\vec w}^{(i)} - \beta  \vec \xi^{(i)}.
  $
  After the projection step we have
  $
  \vec w^{(i+1)} = \argmin_{\vec x \in \mathcal{Q}} \snorm{2}{ \vec w^{(i+1/2)} - \vec x}^2.
  $
  Denote the projection operator as
  $
   \vec p_\mathcal{Q}(\vec w, \vec \xi)
   = \argmin_{\vec x \in \mathcal{Q}} \snorm{2}{\vec x - \vec w - \beta \vec \xi }.
  $
  Besides, we define the gradient mapping on the convex set $\mathcal{Q}$ of point $\vec x$ to
  be $g_\mathcal{Q}(\vec w, \vec \xi) = (1/\beta) (\vec w - \vec p_\mathcal{Q}(\vec w, \vec \xi))$.
  It follows from standard arguments (for example, Theorem 1.2.3 of
  \cite{nesterov}) that
  \begin{align*}
  f(\vec w^{(i+1)}) - f(\vec w^{(i)})
  &\leq
  \nabla f(\vec w^{(i)})^T (\vec w^{(i+1)} - \vec w^{(i)})
  + \frac{L}{2} \snorm{2}{\vec w^{(i+1)} - \vec w^{(i)}}^2
  \\
  &\leq
  - \beta \nabla f(\vec w^{(i)})^T g_\mathcal{Q}(\vec w^{(i)}, \vec \xi)
  + \frac{L \beta^2}{2} \snorm{2}{g_\mathcal{Q}(\vec w^{(i)}, \vec \xi)}^2
  \\
  &\leq
  -\beta \nabla f(\vec w^{(i)})^T g_\mathcal{Q}(\vec w^{(i)}, \vec \xi)
  + \frac{L B \beta^2}{2}.
\end{align*}
Notice that that since $\vec \xi$ is a biased estimate of the gradient we
have $\E[g_\mathcal{Q}(\vec w, \vec \xi)] = g_\mathcal{Q}(\vec w, \nabla f(\vec w)) + \vec e$,
for some error vector $\vec e$ with $\snorm{2}{\vec e} \leq \alpha$. This is
true because in expectation the minimizer of $\snorm{2}{ \vec w^{(i+1/2)} - \vec x}$ only changes by $\vec e$.
Additionally, $f$ is $l$-Lipchitz.
Therefore, after taking the expectation conditional on $\vec w^{(i)}$,
we have
\begin{align*}
  \E[ f(\vec w^{(i+1)}) - f(\vec w^{(i)}) |\vec w^{(i)} ]
  &\leq -\beta \nabla f(\vec w^{(i)})^T g_\mathcal{Q}(\vec w^{(i)}, \nabla f(\vec w^{(i)}))
  + \frac{L B \beta^2}{2} + l \beta \alpha.
\end{align*}
Since we project onto a convex set $\mathcal{Q}$ we have that
$$
\nabla f(\vec w^{(i)})^T g_\mathcal{Q}(\vec w^{(i)}, \nabla f(\vec w^{(i)}))
\geq \snorm{2}{g_\mathcal{Q}(\vec w^{(i)}, \nabla f(\vec w^{(i)}))}^2.
$$
Therefore putting everything together we obtain,
$$
  \E[ f(\vec w^{(i+1)}) - f(\vec w^{(i)}) |\vec w^{(i)} ]
  \leq -\beta \snorm{2}{ g_\mathcal{Q}(\vec w^{(i)}, \nabla f(\vec w^{(i)}))}^2
  + \frac{L B \beta^2}{2}
  + l \beta \alpha.
$$
Rearranging, summing over $i=1,\ldots, M$, and using the law of total
expectation, we obtain
$$
\sum_{i=1}^M \snorm{2}{g_\mathcal{Q}^{(i)}}^2 \leq
\frac{R}{\beta} + \beta \frac{L B}{2} M + l \alpha M.
$$
Picking step size $\beta = \sqrt{2R/(LBM)}$ we obtain that
$
\sum_{i=1}^M \snorm{2}{g_\mathcal{Q}^{(i)}}^2 \leq
\sqrt{2RLBM} + l \alpha M.
$
Next we choose a random stopping time $m$ uniformly in $\{1,\ldots, M\}$,
where $M = O( RLB/ \eps^4)$.  We then have
$$
E\lp[ \snorm{2}{g_\mathcal{Q}^{(m)}}^2\rp]
= \frac{1}{M} \sum_{i=1}^M \snorm{2}{g_\mathcal{Q}^{(i)}}^2
\leq \sqrt{2RLB/M} + l \alpha
\leq O(\eps^2) + l \alpha
$$
From Markov's inequality we get that  with probability at least $99 \%$ it holds that when the SGD stops
we have $\snorm{2}{g_\mathcal{Q}^{(m)}} \leq \sqrt{O(\eps^2 + l \alpha)} \leq O(\eps + \sqrt{l \alpha})$. Then, applying Lemma \ref{lem:convex_projection}, it holds for any $\vec u \in \mathcal Q$,
$$
\frac{1}{\snorm{2}{ \vec w^{(m)} - \vec u }}
\langle
\nabla_{\vec w} f(\vec w^{(m)}),
\vec w^{(m)}- \vec u
\rangle
\leq  O(\eps + \sqrt{l \alpha}) (1 + L \beta)
\leq O(\eps + \sqrt{l \alpha}) \, ,
$$
where the last inequality follows from the fact that
$
L \beta = L \sqrt{\frac{R}{LBM}} = O( \eps^2/B^2 ) = o(1)
$.
\end{proof}

\subsection{Some additional properties required by Biased SGD}
As a standard requirement for Gradient Descent, we show that $\VV(\matr W)$ is locally smooth and Lipchitz-continuous.
\begin{lemma} \label{lem:standard_smooth}
$\VV(\matr{W})$ is $\poly(c)$-smooth and $\poly(c)$-Lipchitz continuous with respect to $\matr W$ in the projection set $\mathcal Q_G$ when Assumption \ref{initialization} is satisfied.
\end{lemma}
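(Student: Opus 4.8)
The plan is to start from the closed form of $\nabla_{\matr W}\VV$ derived in the proof of Lemma~\ref{lem:standard_optimality}. Performing the change of variables $\vec y=\matr W\vec x$ there, and using $\sigma(h(\vec y;\matr W))=\normal(\vec y;\SIGMA^{1/2})/\big(\normal(\vec y;\SIGMA^{1/2})+\normal(\vec y;\matr W)\big)$, one rewrites
\[
\nabla_{\matr W}\VV(\matr W)=-2\,\matr A_*(\matr W)\,\Psi(\matr W)\,(\matr W^{-1})^T,\qquad \matr A_*(\matr W)=\tfrac12\big((\matr W\matr W^T)^{-1}-\SIGMA^{-1}\big),
\]
where $\Psi(\matr W)=\int_T\tfrac{\normal(\vec y;\SIGMA^{1/2})\,\normal(\vec y;\matr W)}{\normal(\vec y;\SIGMA^{1/2})+\normal(\vec y;\matr W)}\,\vec y\vec y^T\,d\vec y$. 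The key feature of this reformulation is that the indicator $\1\{\matr W\vec x\in T\}$ appearing in the raw gradient has been absorbed into integration over the \emph{fixed} set $T$, so that $\Psi$, and hence $\nabla_{\matr W}\VV$, is a genuinely smooth function of $\matr W$ on the convex set $\mathcal Q_G$ (the $\matr W$-dependence is now only through the Gaussian density, which is smooth with Gaussian-decaying derivatives, so one may differentiate under the integral). All scalar bounds below come from Lemma~\ref{lem:generator_projection_property}: for $\matr W\in\mathcal Q_G$ one has $\snorm{2}{\matr W},\snorm{2}{\matr W^{-1}},\snorm{2}{\SIGMA^{1/2}},\snorm{2}{\SIGMA^{-1/2}}\le\poly(c)$ and $\snorm{F}{\matr A_*(\matr W)}\le\poly(c)$.

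The Lipschitz bound is immediate: pointwise $0\le\tfrac{\normal(\vec y;\SIGMA^{1/2})\,\normal(\vec y;\matr W)}{\normal(\vec y;\SIGMA^{1/2})+\normal(\vec y;\matr W)}\le\normal(\vec y;\matr W)$, hence $\matr 0\preceq\Psi(\matr W)\preceq\E_{\vec y\sim\normal(\matr W)}[\vec y\vec y^T]=\matr W\matr W^T$ and $\snorm{2}{\Psi(\matr W)}\le\snorm{2}{\matr W}^2\le\poly(c)$; then applying $\snorm{F}{\matr X\matr Y}\le\snorm{F}{\matr X}\snorm{2}{\matr Y}$ twice gives $\snorm{F}{\nabla_{\matr W}\VV(\matr W)}\le 2\snorm{F}{\matr A_*(\matr W)}\,\snorm{2}{\Psi(\matr W)}\,\snorm{2}{(\matr W^{-1})^T}\le\poly(c)$.

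For smoothness I would fix $\matr W_1,\matr W_2\in\mathcal Q_G$, put $\matr W_t=(1-t)\matr W_1+t\matr W_2$ (which stays in $\mathcal Q_G$ by convexity) and $\dot{\matr W}=\matr W_2-\matr W_1$, and differentiate $\nabla_{\matr W}\VV(\matr W_t)$ in $t$. The product rule yields three terms; those from $\tfrac{d}{dt}\matr A_*(\matr W_t)$ and $\tfrac{d}{dt}(\matr W_t^{-1})^T$ are bounded by $\poly(c)\snorm{F}{\dot{\matr W}}$ via the standard matrix-calculus identities together with the scalar bounds above and $\snorm{2}{\Psi(\matr W_t)}\le\poly(c)$. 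The remaining term is $-2\,\matr A_*(\matr W_t)\,\tfrac{d}{dt}\Psi(\matr W_t)\,(\matr W_t^{-1})^T$, whose Frobenius norm is at most $2\snorm{F}{\matr A_*(\matr W_t)}\,\snorm{2}{\tfrac{d}{dt}\Psi(\matr W_t)}\,\snorm{2}{(\matr W_t^{-1})^T}$, so it suffices to show $\snorm{2}{\tfrac{d}{dt}\Psi(\matr W_t)}\le\poly(c)\snorm{F}{\dot{\matr W}}$. Differentiating $\Psi$ under the integral gives $\tfrac{d}{dt}\Psi(\matr W_t)=\E_{\vec y\sim\normal(\matr W_t)}[\rho_t(\vec y)\,G_t(\vec y)\,\vec y\vec y^T]$, where $\rho_t:=\1_T\cdot\big(\normal(\vec y;\SIGMA^{1/2})/(\normal(\vec y;\SIGMA^{1/2})+\normal(\vec y;\matr W_t))\big)^2\in[0,1]$ and $G_t(\vec y):=\tfrac{d}{dt}\log\normal(\vec y;\matr W_t)$ is a degree-$2$ polynomial which, after the substitution $\vec y=\matr W_t\vec z$ with $\vec z\sim\normal(\matr I)$, equals $\vec z^T\matr B_t\vec z-\tr\matr B_t$ with $\matr B_t:=\sym(\matr W_t^{-1}\dot{\matr W})$ of Frobenius norm $\le\poly(c)\snorm{F}{\dot{\matr W}}$; in particular $G_t$ is centered under $\normal(\matr W_t)$ with $\E_{\vec y\sim\normal(\matr W_t)}[G_t(\vec y)^2]=2\snorm{F}{\matr B_t}^2\le\poly(c)\snorm{F}{\dot{\matr W}}^2$. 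To bound the operator norm I would test against an arbitrary unit $\vec u$ and split $\vec u^T\big(\tfrac{d}{dt}\Psi(\matr W_t)\big)\vec u=\E[G_t(\vec y)(\vec u^T\vec y)^2]-\E[(1-\rho_t(\vec y))G_t(\vec y)(\vec u^T\vec y)^2]$; Isserlis' formula gives $\E[G_t(\vec y)(\vec u^T\vec y)^2]=2\,\vec v^T\matr B_t\vec v$ with $\vec v=\matr W_t^T\vec u$ (the $\tr\matr B_t$ contributions cancel), which is $\le\poly(c)\snorm{F}{\dot{\matr W}}$, while Cauchy--Schwarz and $1-\rho_t\in[0,1]$ bound the second piece by $\sqrt{\E[G_t(\vec y)^2]}\cdot\sqrt{\E[(\vec u^T\vec y)^4]}$, where $\E_{\vec y\sim\normal(\matr W_t)}[(\vec u^T\vec y)^4]=3(\vec u^T\matr W_t\matr W_t^T\vec u)^2\le\poly(c)$ is dimension-free, being the fourth moment of the one-dimensional Gaussian $\vec u^T\vec y$. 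Integrating over $t\in[0,1]$ then gives $\snorm{F}{\nabla_{\matr W}\VV(\matr W_1)-\nabla_{\matr W}\VV(\matr W_2)}\le\poly(c)\snorm{F}{\matr W_1-\matr W_2}$.

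The step I expect to be the main obstacle is precisely this last one: keeping the smoothness constant \emph{dimension-free}. The indicator makes $\VV$ a priori non-smooth, which is why the reformulation over the fixed domain $T$ is needed; and a naive bound on $\snorm{2}{\tfrac{d}{dt}\Psi}$ obtained by pulling $\snorm{2}{\vec y\vec y^T}=\snorm{2}{\vec y}^2$ out of the expectation would cost a spurious factor of $d$. The two ingredients that avoid this are (i) bounding operator norms of matrix-valued expectations only through rank-one test directions $\vec u\vec u^T$, so every $\vec y$-moment that appears is a moment of the one-dimensional Gaussian $\vec u^T\vec y$, and (ii) the exact cancellation $\E_{\vec y\sim\normal(\matr W_t)}[\tfrac{d}{dt}\log\normal(\vec y;\matr W_t)]=0$, which keeps $G_t$ centered with only $\poly(c)$ variance.
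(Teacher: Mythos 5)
Your proof is correct, and it takes a genuinely different --- and more careful --- route than the paper's. The paper starts from the same closed form
$\nabla_{\matr W}\VV(\matr W)=\lp(\SIGMA^{-1}\matr W-(\matr W^{-1})^T\rp)\E_{\vec x\sim\normal}\lp[f'(h(\matr W\vec x;\matr W))\,\XX\,\1\{\matr W\vec x\in T\}\rp]$,
gets the Lipschitz bound the same way you do (drop $f'\le 1$ and use the $\poly(c)$ norm bounds from Lemma~\ref{lem:generator_projection_property}), but for smoothness it differentiates this expression once more \emph{in the $\vec x\sim\normal(\matr I)$ parametrization}, writing a Hessian that contains only the $f'$ and $f''$ terms and no contribution from the $\matr W$-dependent indicator $\1\{\matr W\vec x\in T\}$; it then bounds the operator norm by replacing $f',f''$ by $1$ and controlling Kronecker products of $\matr W$, $\matr W^{-1}$, $\SIGMA^{-1}$. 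Your change of variables $\vec y=\matr W\vec x$, which turns the truncation into integration over the \emph{fixed} set $T$ and packages the gradient as $-2\matr A_*(\matr W)\Psi(\matr W)(\matr W^{-1})^T$, is precisely what is needed to make that differentiation legitimate: the derivative of a moving indicator is a boundary (surface-measure) term that the paper silently drops, whereas in your formulation all $\matr W$-dependence sits in the smooth Gaussian density and the outer matrix factors, so differentiation under the integral is justified. Your subsequent control of $\snorm{2}{\tfrac{d}{dt}\Psi(\matr W_t)}$ via the centered score $G_t$, Isserlis' formula, and rank-one test directions is correct and dimension-free; the paper reaches dimension-freeness more crudely (its Kronecker-product bound also avoids a factor of $d$), but at the cost of the unjustified interchange above. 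Both arguments deliver the same $\poly(c)$ constants --- yours by a rigorous fixed-domain differentiation plus a sharper moment computation, the paper's by a shorter formal computation that glosses over the indicator.
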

\begin{proof}
Recall that the gradient of the Virtual Training Criteria $\VV(\matr W)$ is given as
\begin{align*}
\nabla_{\matr{W}} \VV(\matr{W})
= \lp( \SIGMA^{-1} \matr{W} - \lp( \matr{W}^{-1} \rp)^{T} \rp) \E_{\vec{x} \sim \normal} \lp[ f^{'} (h(\matr{W}\vec{x};\matr{W})) \XX \1\{\matr W \matr x  \in T\} \rp]
\end{align*}
Since $f^{'}(\cdot)$ is a positive function upper bounded by $1$, it holds
\begin{align*}
    \snorm{2}{ \nabla_{\matr W} \VV(\matr W) } \leq \snorm{2}{ \SIGMA^{-1} \matr W - \invtrans{\matr W} }
\end{align*}
Since $\snorm{2}{\matr W}, \snorm{2}{\SIGMA} \leq \poly(c)$ by definition of the projection set and Assumption \ref{initialization}, we conclude $\VV(\matr W)$ is $\poly(c)$-Lipchitz continuous. \\
Next, we compute and upper bound the l2-norm of the hessian.
\begin{align*}
\nabla_{\matr{W}}^2 \VV(\matr{W})
& \hspace{-.5mm}= \hspace{-.5mm}
\E_{\vec{x} \sim \normal}
\bigg[
\1 \{ \matr W \vec x \in T\}f^{''} (h(\matr{W}\vec{x};\matr{W}))
\lp( \invtrans{\matr{W}} \hspace{-.25mm}-\hspace{-.25mm} \SIGMA^{-1} \matr{W} \XX \rp) \\
& \hspace{3.5em} \otimes
\lp( \SIGMA^{-1} \matr{W} \XX \hspace{-.25mm}- \hspace{-.25mm}\invtrans{\matr{W}} \XX \rp)
\bigg] \\
&+
\E_{\vec{x} \sim \normal}  \lp[ \1 \{ \matr W \vec x \in T\} f^{'} (h(\matr{W}\vec{x};\matr{W}))
\lp( \XX \otimes \SIGMA^{-1} + \lp( \XX \invtrans{\matr W} \rp) \otimes \matr W^{-1} \rp) \rp].
\end{align*}
We could then use triangle inequality to split the expressions into sum of positive definite matrices (in a sense that $\vec z^T \matr M \vec z \geq 0$ for any $\vec z$). Then, we replace all the non-negative scalar-valued $f^{'}(\cdot)$ and $f^{''}(\cdot)$ with their upper bound $1$. Lastly, by linearity of expectation, we take expectation over terms involving $\vec x$, which gives $\E_{\vec x \sim \normal(\matr I)} \lp[ \vec x \vec x^T \rp] = \matr I$.
Hence, we obtain the following upper bound
\begin{align*}
\snorm{2}{\nabla_{\matr{W}}^2 \VV(\matr{W})}
&\leq
\snorm{2}{\invtrans{\matr W} \otimes \lp( \SIGMA^{-1}  \matr W \rp) }
+ \snorm{2}{\invtrans{\matr W} \otimes \invtrans{\matr W}} + \snorm{2}{\lp( \SIGMA^{-1} \matr W \rp) \otimes \lp( \SIGMA^{-1} \matr W \rp)} \\
&+ \snorm{2}{\lp( \SIGMA^{-1}  \matr W \rp) \otimes \invtrans{\matr W}}  +
\snorm{2}{\matr I \otimes \SIGMA^{-1}} +
\snorm{2}{\invtrans{ \matr W } \otimes \matr W^{-1}}
\end{align*}
 Since the l2-norm of $\matr W$, $\matr W^{-1}$, $\SIGMA^{1/2}$ and $\SIGMA^{-1/2}$ are all bounded by $\poly(c)$, it then follows $\snorm{2}{\nabla_{\matr{W}}^2 \VV(\matr{W})} \leq \poly(c)$.
\end{proof}

 \end{document}